\definecolor{tabgray}{gray}{0.90}
\def\ddefloop#1{\ifx\ddefloop#1\else\ddef{#1}\expandafter\ddefloop\fi}
\def\ddef#1{\expandafter\def\csname bb#1\endcsname{\ensuremath{\mathbb{#1}}}}
\def\ddef#1{\expandafter\def\csname c#1\endcsname{\ensuremath{\mathcal{#1}}}}
\DeclareMathOperator*{\argmin}{arg\,min}
\DeclareMathOperator*{\argmax}{arg\,max}
\def\E{\mathbb{E}}
\def\1{\mathds{I}}
\def\supp{\textup{supp}}
\newtheorem{theorem}{Theorem}
\theoremstyle{definition}
\theoremstyle{remark}
\newtheorem*{remark}{Remark}
\newcommand{\Att}{\mathcal{A}}
\newtheorem{lem}{Lemma}
\theoremstyle{plain}
\newcommand{\para}[1]{{\vspace{1pt} \bf \noindent #1 \hspace{3pt}}}
\theoremstyle{plain}
\newtheorem{Thm}{Theorem}
\crefname{Thm}{Theorem}{Theorems}
\newtheorem{Df}{Definition}
\newtheorem*{theorem*}{Theorem}
\newtheorem{Lem}[Thm]{Lemma}
\newcommand{\term}[1]{\emph{#1}}
\renewcommand{\Pr}[1]{\mathrm{Pr}\left(#1\right)}
\newcommand{\Attack}[1]{\mathcal{A}\left(#1\right)}
\newcommand{\cosll}{Cos-$\ell_2$\xspace}
\newcommand{\model}{{\mathcal{F}}}
\newcommand{\modelt}{{\mathcal{G}}}
\newcommand{\el}{{\ell_{\model}}}
\newcommand{\lt}{{\ell_{\modelt}}}
\newcommand{\xA}{x^{\mathcal{A}}}
\definecolor{darkgreen}{rgb}{0,0.5,0}
\definecolor{darkblue}{rgb}{0,0,0.5}
\definecolor{purple}{rgb}{1,0,1}
\newcommand{\kibitz}[2]{\ifnum\Comments=0\textcolor{#1}{#2}\fi}
\title{
    TRS: Transferability Reduced Ensemble via Promoting Gradient Diversity and Model Smoothness
}
\author{Zhuolin Yang$^{1}$$^*$~~~~~~Linyi Li$^{1}$$^*$~~~~~~Xiaojun Xu$^{1}$\thanks{The authors contributed equally.}~~~~~~Shiliang Zuo$^{1}$ \\
\textbf{Qian Chen$^{2}$~~~~~~Benjamin Rubinstein$^{3}$
~~~~~~Pan Zhou$^{4}$
~~~~~~Ce Zhang$^{5}$~~~~~~Bo Li$^{1}$}
\\
\, \\
$^1$ University of Illinois Urbana-Champaign 
$^2$ Tencent Inc.
$^3$ University of Melbourne\\
$^4$ Huazhong University of Science and Technology
$^5$ ETH Zurich \\
\texttt{\{\href{mailto:zhuolin5@illinois.edu}{zhuolin5}, \href{mailto:linyi2@illinois.edu}{linyi2}, \href{mailto:xiaojun3@illinois.edu}{xiaojun3}, \href{mailto:szuo3@illinois.edu}{szuo3}, \href{mailto:lbo@illinois.edu}{lbo}\}@illinois.edu}\\
\texttt{\href{mailto:qianchen@tencent.com}{qianchen@tencent.com}}\\
\texttt{\href{mailto:benjamin.rubinstein@unimelb.edu.au}{benjamin.rubinstein@unimelb.edu.au}}\\
\texttt{\href{mailto:panzhou@hust.edu.cn}{panzhou@hust.edu.cn}}\\
\texttt{\href{mailto:ce.zhang@inf.ethz.ch}{ce.zhang@inf.ethz.ch}}}
\begin{document}

\maketitle

\begin{abstract}
\emph{Adversarial Transferability} is an intriguing property --  adversarial perturbation crafted against one model is also effective against another model, while these models are from different model families or training processes.
To better protect ML systems against adversarial attacks, several questions are raised: \textit{what are the sufficient conditions for adversarial transferability and how to bound it? }
Is\textit{ there a way to reduce the adversarial transferability in order to improve the robustness of an ensemble ML model?}
To answer these questions, in this work we first theoretically analyze and outline sufficient conditions for  adversarial transferability between models; then propose a practical algorithm to reduce the transferability between base models within an ensemble to improve its robustness.
Our theoretical analysis shows that only promoting the orthogonality between gradients of base models is not enough to ensure low transferability; in the meantime, the model smoothness is an important factor to control the transferability.
We also provide the lower and upper bounds of adversarial transferability under certain conditions. 
Inspired by our theoretical analysis, we propose an effective {\textbf{T}ransferability \textbf{R}educed \textbf{S}mooth} (TRS) ensemble training strategy to train a robust ensemble with low transferability by enforcing both gradient orthogonality and model smoothness between base models.
We conduct extensive experiments on TRS and compare with 6 state-of-the-art ensemble baselines against 8 whitebox attacks on different datasets, 
demonstrating that the proposed TRS outperforms all baselines significantly. 
\end{abstract}





\section{Introduction}
    
    Machine learning systems, especially those based on deep neural networks (DNNs),
    have been widely applied in numerous applications~\cite{krizhevsky2012imagenet,hannun2014deep,sutskever2014sequence,choi2020attention}.
    However, recent studies show that DNNs are vulnerable to adversarial examples, which are able to mislead DNNs by adding small magnitude of perturbations to the original instances~\cite{szegedy2014intriguing,goodfellow2014explaining,xiao2018spatially,xiao2018generating}. 
    Several attack strategies have been proposed so far to generate such adversarial examples in both digital and physical environments~\cite{moosavi2016universal,lin2017tactics,chaowei2018,chaowei2018spatially,ivan2018robust,kurakin2016adversarial}.
    Intriguingly, though most attacks require access to the target models (whitebox attacks), several studies show that adversarial examples generated against one model are able to  \emph{transferably} attack another target model with high probability, giving rise to blackbox attacks~\cite{papernot2016transferability,papernot2016practical,li2021nonlinear,li2020qeba,zhang2021progressive}. 
    This property of \textit{adversarial transferability} poses great threat to DNNs.
    
    Some work have been conducted to understand \textit{adversarial transferability}~\cite{tramer2017space,liu2016delving,demontis2019adversarial}. However, a rigorous theoretical analysis or explanation for transferability is still lacking in the literature. In addition, although developing robust ensemble models to limit transferability shows great potential towards practical robust learning systems, only \textit{empirical} observations have been made in this line of research~\cite{pang2019improving,kariyappa2019improving,yang2020dverge}.
    \emph{Can we deepen our 
    theoretical understanding on 
    transferability? Can we  take advantage 
    of rigorous theoretical understanding 
    to reduce the adversarial transferability and therefore generate robust ensemble ML models?}

    \begin{wrapfigure}{r}{0.45\textwidth}
        \centering
        \includegraphics[width=0.48\textwidth]{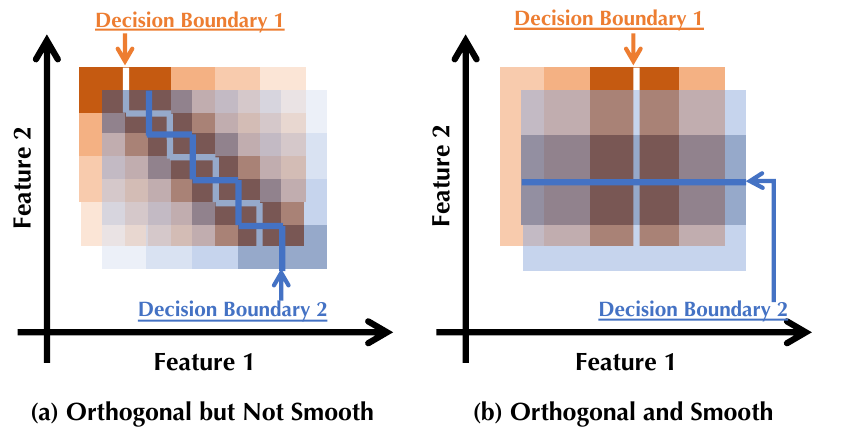}
        \vspace{-2em}
        \caption{\small An illustration of the relationship between \emph{adversarial transferability}, \emph{gradient orthogonality}, and \emph{model smoothness}.
        (a) Gradient orthogonality alone
        cannot minimize transferability
        as the decision boundaries between
        two classifiers can be arbitrarily close yet  orthogonal
        almost everywhere; (b)
        Gradient orthogonality
        with model smoothness provides a stronger guarantee on 
       model diversity, as our 
        theorems will show.
        }
        \label{fig:transfer}
        \vspace{-0.5em}
    \end{wrapfigure}

    In this paper, we focus on these two questions.
    From the theoretical side, we 
    are interested in the sufficient conditions under
    which the adversarial transferability can be
    \textit{lower bounded} and \textit{upper bounded}.
%
%
    Our theoretical arguments provides the \textit{first} theoretical interpretation for the sufficient conditions of transferability. Intuitively, as illustrated in Figure~\ref{fig:transfer}, we show that the commonly used gradient orthogonality (low cosine similarity) between learning models~\cite{demontis2019adversarial} cannot directly imply low adversarial transferability; on the other hand, orthogonal and smoothed models would limit the transferability.
    In particular,
        we prove that the \textit{gradient similarity} and \textit{model smoothness} are the key factors that both contribute to the adversarial transferability, and smooth models with orthogonal gradients can guarantee low transferability. 

    
    Under an empirical lens,
    inspired by our theoretical analysis, we propose a simple yet effective approach, {\textbf{T}ransferability \textbf{R}educed \textbf{S}mooth} (TRS) ensemble to limit adversarial transferability between base models within an ensemble and therefore improve its robustness. In particular, we reduce the loss gradient similarity between models as well as enforce the smoothness of models to introduce global model orthogonality.

    
        We conduct extensive experiments to evaluate TRS in terms of the model robustness against different strong white-box and blackbox attacks following the robustness evaluation procedures~\cite{carlini2017towards,carlini2019evaluating,tramer2020adaptive},
        as well as its
        ability to limit transferability across the base models.
        We compare the proposed TRS with existing state-of-the-art baseline ensemble approaches  such as ADP~\cite{pang2019improving}, GAL~\cite{kariyappa2019improving}, and DVERGE~\cite{yang2020dverge}
        on MNIST, CIFAR-10, and CIFAR-100 datasets, and we show that (1)~TRS achieves the state-of-the-art ensemble robustness, outperforming others by a large margin; (2)~TRS achieves efficient training; (3)~TRS effectively reduces the transferability among base models within an ensemble which indicates its robustness against whitebox and blackbox attacks; (4)~Both loss terms in TRS contribute to the ensemble robustness by constraining different  sufficient conditions of  adversarial transferability.

    \noindent\underline{\textbf{Contributions.}}
    In this paper, we make the first attempt towards theoretical understanding of adversarial transferability, and provide practical approach for developing robust ML ensembles.
    \begin{enumerate}[label=(\emph{\arabic*}),leftmargin=*,itemsep=-0.5mm]
        \vspace{-0.8em}
        \item 
        We provide a general theoretical analysis framework for adversarial transferability. We prove the lower and upper bounds of adversarial transferability. Both bounds show that the gradient similarity and model smoothness are the key factors contributing to the adversarial transferability, and smooth models with orthogonal gradients can guarantee low transferability.
    
    
        \item
        We propose a simple yet effective approach TRS to train a robust ensemble by jointly reducing the loss gradient similarity between base models and enforcing the model smoothness.
        The code is publicly available\footnote{ \texttt{\href{https://github.com/AI-secure/Transferability-Reduced-Smooth-Ensemble}{https://github.com/AI-secure/Transferability-Reduced-Smooth-Ensemble}}}.

        \item
        We conduct extensive experiments to evaluate TRS in terms of model robustness under different attack settings, showing that TRS achieves the state-of-the-art ensemble robustness and outperforms other baselines by a large margin. We also conduct ablation studies to further understand the contribution of different loss terms and verify our theoretical findings.
    \end{enumerate}

\subsection*{Related Work}

    The adversarial transferability between different ML models is an intriguing research direction.  \citeauthor{papernot2016limitations}~\cite{papernot2016limitations} explored the limitation of adversarial  examples and showed that, while some instances are more difficult to manipulate than the others, these adversarial examples usually transfer from one model to another. 
    \citeauthor{demontis2019adversarial}~\cite{demontis2019adversarial} later analyzed transferability for both evasion and poisoning attacks.
    \citeauthor{tramer2017space}~\cite{tramer2017space} empirically investigated the subspace of adversarial examples that enables transferability between different models: though their results provide a non-zero probability guarantee on the transferability, they did not quantify the probability of adversarial  transferability. 
    
    Leveraging the transferability, different blackbox attacks have been proposed~\cite{papernot2016practical,kurakin2016adversarial,ivan2018robust,cheng2019improving}. 
    To defend against these transferability based attacks, \citeauthor{pang2019improving}~\cite{pang2019improving} proposed a class entropy based adaptive diversity promoting approach to enhance the ML ensemble robustness. 
        Recently, \citeauthor{yang2020dverge}~\cite{yang2020dverge} proposed DVERGE, a robust ensemble training approach that diversifies the non-robust features of base models via an adversarial  training  objective function.
        However, these approaches do not provide theoretical justification for adversarial transferability, and there is still room to improve the ML ensemble robustness based on in-depth understanding on the sufficient conditions of transferability.  In this paper, we aim to provide a theoretical understanding of transferability, and empirically compare the proposed robust ML ensemble inspired by our theoretical analysis  with existing approaches to push for a tighter empirical upper bound for the ensemble robustness.


\section{Transferability of Adversarial Perturbation}
    
    \label{sec:transferability-analysis}
    
    In this section, we first introduce preliminaries,  and then provide the upper and lower bounds of adversarial transferability by connecting adversarial transferability with 
    different characteristics of models theoretically, which, in the next section, will allow us to explicitly minimize 
    transferability by enforcing (or rewarding) certain properties of models.
    

        \noindent\textbf{Notations.} We consider neural networks for classification tasks.
        Assume there are $C$ classes, and
        let $\cX$ be the \emph{input space} of the model with $\cY = \{1,2,\dots,C\}$ the set of prediction classes~(i.e., labels).
        We model the neural network by a mapping function $\cF: \cX \to \cY$.
        We will study the transferability between two models $\cF$ and $\cG$.
        For brevity, hereinafter we mainly show the derived notations for $\cF$ and notations for $\cG$ are similar.
        Let the \emph{benign} data $(x,y)$ follow an unknown distribution $\cD$ supported on $(\cX, \cY)$, and $\cP_{\cX}$ denote the marginal distribution on $\cX$.
        
        For a given input $x\in\cX$, the classification model $\cF$ first predicts the confidence score for each label $y\in \cY$, denoted as $f_y(x)$.
        These confidence scores sum up to $1$, i.e., $\sum_{y\in \cY} f_y(x) = 1, \forall x\in \cX$. 
        The model $\cF$ will predicts the label with highest confidence score: $\cF(x) = \argmax_{y\in \cY} f_y(x)$.
        
        For model $\cF$, there is usually a model-dependent loss function $\ell_\cF: \cX \times \cY \to \bbR_+$, which is the composition of a differentiable training loss~(e.g., cross-entropy loss) $\ell$ and the model's confidence score $f(\cdot)$: $\ell_\cF(x,y) := \ell(f(x), y), (x,y) \in (\cX, \cY)$.
        We further assume that $\cF(x) = \argmin_{y\in \cY} \ell_{\cF}(x,y)$, i.e., the model predicts the label with minimum loss.
        This holds for common training losses.
        
        In this paper, by default we will focus on models that are well-trained on the benign dataset, and such models are the most commonly encountered in practice, so their robustness is paramount. This means we will focus on the \textit{low risk} classifiers, which we will formally define in \Cref{subsec:model-characteristics}.

        
        
        {\em How should we define an adversarial attack?}
        For the threat model, we consider the attacker that adds an $\ell_p$ norm bounded perturbation to data instance $x \in \cX$.
        In practice, there are two types of attacks, \textit{untargeted attacks} and \textit{targeted
        attacks}. 
        The definition of adversarial transferability is slightly different under these
         attacks~\cite{liu2016delving}, and we consider
        both in our analysis.
        
        \begin{Df}[Adversarial Attack]
            \label{def:untarget}
            \label{def:target}
            Given an input $x\in\cX$ with true label $y\in\cY$, $\model(x) = y$. 
            (1)~An untargeted attack crafts $\Att_U(x) =  x + \delta$ to maximize
                $ 
                    \ell_\cF(x+\delta,y)
                $ where $\|\delta\|_p \le \epsilon$.
            (2)~A targeted attack with target label $y_t \in \cY$ crafts $\Att_T(x) = x + \delta$ to minimize
                $
                    \ell_\cF(x+\delta,y_t)
                $ where $\|\delta\|_p \le \epsilon$.
            
                
        \end{Df}

        In this definition, 
        $\epsilon$ is a pre-defined \emph{attack radius} that limits the power of the attacker.
        We may refer to $\{\delta: \|\delta\|_p \le \epsilon\}$ as the perturbation ball.
        The goal of the untargeted attack is to maximize the loss of the target model against its true label $y$.
        The goal of the targeted attack is to minimize the loss towards its adversarial target label $y_t$.

        {{\em How do we formally define that an attack is effective?}}
        
        \begin{Df}[($\alpha,\model$)-Effective Attack]
            \label{def:adv_effect}
            Consider a input $x\in \cX$ with true label $y\in \cY$.
            An attack is ($\alpha, \model$)-effective in {untargeted} scenario if $\Pr{\model(\Att_U(x)) \ne y} \ge 1 - \alpha$.
            An attack is ($\alpha, \model$)-effective in {targeted} scenario (with class target $y_t$) if $\Pr{\model(\Att_T(x)) = y_t} \ge 1 - \alpha$. 
        \end{Df}


        This definition captures the requirement that an adversarial instance generated by an effective attack strategy is able to mislead the target classification model~(e.g. $\cF$) with certain probability $(1-\alpha)$.
        The smaller the $\alpha$ is, the more effective the attack is.
        In practice, this implies that on a finite sample of targets, the attack success is frequent but not absolute.
        %
%
        %
        Note that the definition is general for both whitebox~\cite{athalye2018obfuscated,demontis2019adversarial,carlini2017towards} and blackbox attacks~\cite{papernot2017practical,brendel2017decision}. 

        \vspace{-0.5em}
       



    \subsection{Model Characteristics}
        \label{subsec:model-characteristics}
    
        {\em Given two models $\mathcal{F}$ and 
        $\mathcal{G}$, what are the characteristics of
        $\mathcal{F}$ and $\mathcal{G}$ that have
        impact on transferability under a given 
        attack strategy?} Intuitively, the more
        similar these two classifers are, the larger
        the transferability would be. 
        However, \textit{how can 
        we define ``similar'' and how can we 
        rigorously connect it to transferability?}
        To answer these questions, we will first define the risk and empirical risk for a given model to measure its performance on benign test data.
        Then, as the DNNs are differentiable, we will define model similarity based on their gradients.
        We will then derive the lower and upper bounds of adversarial transferability based on the defined model risk and similarity measures.

        \begin{Df}[Risk and Empirical Risk]
            For a given model $\cF$, we let $\ell_{\cF}$ be its model-dependent loss function.
            Its \textbf{risk} is defined as $\eta_{\cF} = \Pr{\cF(x) \neq y}$;
            and its \textbf{empirical risk} is defined as $\xi_{\cF} = \E\left[\ell_{\cF}(x,y)\right]$.
            \label{def:risk}
        \end{Df}
        
        The \emph{risk} represents the model's error rate on benign test data, while the \emph{empirical risk} is a non-negative value that also indicates the inaccuracy.
        For both of them, higher value means worse performance on the benign test data.
        The difference is that, the risk has more intuitive meaning, while the empirical risk is differentiable and is actually used during model training. 

            \begin{Df}[Loss Gradient Similarity]
                \label{def:similarity}
                The {lower loss gradient similarity} $\underline{\mathcal{S}}$ and upper loss gradient similarity $\overline{\mathcal{S}}$ between two differentiable loss functions $\el$ and
                $\lt$ is defined as:
                $$ 
                    \resizebox{\linewidth}{!}{
                    $
                        \displaystyle
                     \underline{\mathcal{S}}(\el, \lt) = \inf_{x \in \cX, y \in \mathcal{Y}} \frac{\nabla_x \el(x,y) \cdot 
                       \nabla_x \lt(x,y)}{\lVert \nabla_x \el(x,y) \rVert_2 \cdot \lVert \nabla_x
                       \lt(x,y) \rVert_2},
                    \overline{\cS}(\ell_\model, \ell_\modelt) = \sup_{x\in\cX, y\in\cY}\frac{\nabla_x\ell_\model(x, y)\cdot \nabla_x\ell_\modelt(x, y)}{\|\nabla_x\ell_\model(x, y)\|_2\cdot\|\nabla_x\ell_\modelt(x,y)\|_2}.
                    $
                    }
                $$
                \label{defn:loss-gradient-similarity}
                \label{defn:upper-loss-gradient}
            \end{Df}
            \vspace{-1.5em}
            The $\underline{\mathcal{S}}(\el, \lt)$~($\overline{\mathcal{S}}(\el, \lt)$) is the minimum~(maximum) cosine
            similarity between the gradients of the two loss
            functions for an input $x$ drawn from $\cX$ with any label $y \in \mathcal{Y}$. 
            %
            Besides the loss gradient similarity, in our analysis we will also show that the \textit{model smoothness} is another key characteristic of ML models that affects the model transferability.
            \begin{Df}
                \label{def:smoothness}
                We call a model $\cF$ $\beta$-smooth if 
                $ \displaystyle
                    \sup_{x_1, x_2 \in \cX, y \in \cY} \frac{\| \nabla_x \ell_\cF(x_1,y) - \nabla_x \ell_\cF(x_2,y) \|_2}{\| x_1-x_2 \|_2} \le \beta.
                $
            \end{Df}
            \vspace{-0.5em}
            This smoothness definition is commonly used in deep learning theory and optimization literature~\cite{Boyd04,borwein2010convex}, and is also named curvature bounds in certified robustness literature~\cite{singla2020second}.
            It could be interpreted as the Lipschitz bound for the model's loss function gradient.
            We remark that \emph{larger} $\beta$ indicates that the model is less smoother, while \emph{smaller} $\beta$ means the model is smoother.
            Particularly, when $\beta = 0$, the model is linear in the input space $\cX$.


            

    \subsection{Definition of Adversarial Transferability}
        \label{subsec:def-transferability}
    
        Based on the model characteristics we explored above, next we will ask:
        {\em Given two models, what is the natural and precise definition of adversarial transferability?} 

        
        \begin{Df}[Transferability]
        \label{def:trans}
        Consider an adversarial instance $\Att_U(x)$ or $\Att_T(x)$ constructed against a surrogate
        model $\model$. With a given benign input $x\in \cX$, The {transferability} $T_r$ between
        $\model$ and a target model $\modelt$ is defined as follows (adversarial target $y_t \in \cY$):
            \begin{itemize}
            \item Untargeted: 
            $
                    T_r(\model, \modelt, x) = \1[ \model(x) = \modelt(x) = y \; \wedge 
                     \model(\Att_U(x)) \neq y \; \wedge \; \modelt(\Att_U(x)) \neq y].
            $
            \item Targeted: 
            $
                T_r(\model, \modelt, x, y_t) = \1[ \model(x) = \modelt(x) = y \; \wedge 
                 \model(\Att_T(x)) = \modelt(\Att_T(x)) = y_t].
            $
            \end{itemize}
        \end{Df}

        Here we define the transferability at instance level, showing several conditions are required to satisfy for a transferable instance.
        For the untargeted attack, it requires that: (1)~both the surrogate model and target model make correct prediction on the benign input; and (2)~both of them make incorrect predictions on the adversarial input $\Att_U(x)$.
        The $\Att_U(x)$ is generated via the untargeted attack against the surrogate model $\cF$.
        For the targeted attack, it requires that: (1)~both the surrogate and target model make correct prediction on benign input; and (2)~both output the adversarial target $y_t\in\cY$ on the adversarial input $\Att_T(x)$. 
        The $\Att_T(x)$ is crafted against the surrogate model $\cF$.
        The predicates themselves do not require $\Att_U$ and $\Att_T$ to be explicitly constructed against the surrogate model $\cF$. It will be implied by attack effectiveness~(\Cref{def:adv_effect}) on $\model$ in theorem statements. 
        Note that the definition here is a predicate for a specific input $x$, and in the following analysis we will mainly use its distributional version: $\Pr{T_r(\cF,\cG,x)=1}$ and $\Pr{T_r(\cF,\cG,x,y_t)=1}$.

    \subsection{Lower Bound of Adversarial Transferability}
        \label{sec:lower-bound}
        Based on the general definition of transferability, in this section we will analyze how to lower bound the transferability for targeted attack.
        The analysis for untargeted attack has a similar form and is deferred to \Cref{thm:untarget} in \Cref{adx:sec-untargeted attack}.
        
        
        \begin{Thm}[Lower Bound on Targeted Attack Transferability]
        	\label{thm:target}

            Assume both models $\cF$ and $\cG$ are $\beta$-smooth. 
            Let $\Att_T$ be an ($\alpha, \model$)-effective targeted attack with perturbation ball $\lVert \delta \rVert_2 \le \epsilon$ and target label $y_t \in \cY$.
            The transferabiity can be lower bounded by
        %
            %
            $$
            	    \Pr{T_r(\model, \modelt, x, y_t) = 1}  \ge (1-\alpha) - (\eta_\model + \eta_\modelt) - 
            	    \dfrac{\epsilon(1+\alpha) +  c_\model(1-\alpha)}{c_\modelt + \epsilon} - \dfrac{ \epsilon (1-\alpha)}{c_\modelt + \epsilon}\sqrt{2 - 2 \underline{\mathcal{S}}(\el, \lt)},
            $$
        	where 
        	\vspace{-0.75em}
        	$$
        	   \resizebox{\linewidth}{!}{
        	   $
        	    \displaystyle
                	c_\model = \max_{x \in \cX} \dfrac{ \underset{y\in \cY}{\min}\, \el(\Att_T(x), y) - \el(x, y_t) + \beta\epsilon^2 / 2 }{ \lVert \nabla_x \el(x, y_t) \rVert_2 }, 
        	        c_\modelt = \min_{x \in \cX} \dfrac{ \underset{y\in \cY}{\min}\, \lt(\Att_T(x), y) - \lt(x, y_t) - \beta\epsilon^2 / 2 }{ \lVert \nabla_x \lt(x, y_t) \rVert_2 }.
        	   $
        	   }
        	$$
            Here $\eta_\cF,\eta_\cG$ are the \emph{risks} of models $\cF$ and $\cG$ respectively.
        \end{Thm}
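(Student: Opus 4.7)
The plan is to decompose the transferability predicate into four events, peel off the two benign-classification conditions through the risks $\eta_\cF, \eta_\cG$ by a union bound, and then combine the $\beta$-smoothness hypothesis with Markov's inequality to lower-bound the probability that the adversarial example $\Att_T(x)$ crafted against $\cF$ also fools $\cG$ into predicting $y_t$. Concretely, let $E_1 = \{\cF(x)=y\}$, $E_2 = \{\cG(x)=y\}$, $E_3 = \{\cF(\Att_T(x))=y_t\}$, $E_4 = \{\cG(\Att_T(x))=y_t\}$, and $\delta := \Att_T(x) - x$. A union bound gives $\Pr{T_r=1} \ge \Pr{E_3\cap E_4} - \eta_\cF - \eta_\cG$, and $(\alpha,\cF)$-effectiveness gives $\Pr{E_3}\ge 1-\alpha$, so the heart of the matter is a lower bound on $\Pr{E_4}$ that can then be combined through $\Pr{E_3\cap E_4}\ge \Pr{E_3}+\Pr{E_4}-1$.

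The first concrete step translates ``$\cG$ predicts $y_t$'' into a linear inequality on $\delta$. Applying $\beta$-smoothness to $\ell_\cG$ yields $\ell_\cG(x+\delta, y_t) \le \ell_\cG(x,y_t) + \langle \nabla_x\ell_\cG(x,y_t), \delta\rangle + \tfrac{\beta}{2}\epsilon^2$; combined with the global minimum appearing in the definition of $c_\cG$, the pointwise inequality $\langle \hat g_\cG, \delta\rangle \le c_\cG$ becomes a sufficient condition for $E_4$, where $\hat g_\cG$ denotes the normalized gradient of $\ell_\cG$ at $(x,y_t)$. Symmetrically, the lower Taylor-type bound for $\ell_\cF$, together with $\cF(\Att_T(x)) = \argmin_y \ell_\cF(\Att_T(x),y)$, shows that $E_3$ implies $\langle \hat g_\cF, \delta\rangle \le c_\cF$ (now with $c_\cF$ a global maximum). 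Decomposing $\langle \hat g_\cG,\delta\rangle = \langle \hat g_\cF,\delta\rangle + \langle \hat g_\cG - \hat g_\cF,\delta\rangle$ and applying Cauchy--Schwarz with $\|\hat g_\cG - \hat g_\cF\|_2^2 = 2 - 2\langle \hat g_\cF, \hat g_\cG\rangle \le 2 - 2\underline{\cS}(\ell_\cF,\ell_\cG)$ yields the pointwise estimate $\langle \hat g_\cG, \delta\rangle \le c_\cF + \epsilon\sqrt{2-2\underline{\cS}}$ on the event $E_3$, while on $E_3^c$ the trivial Cauchy--Schwarz inequality gives $\langle \hat g_\cG,\delta\rangle \le \epsilon$.

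I would then introduce the shifted non-negative random variable $Z := \langle \hat g_\cG,\delta\rangle + \epsilon \in [0, 2\epsilon]$ and apply Markov's inequality in the form $\Pr{Z \ge c_\cG + \epsilon} \le \E[Z]/(c_\cG + \epsilon)$. Splitting the expectation between $E_3$ and $E_3^c$, using the two pointwise bounds above and $\Pr{E_3}\ge 1-\alpha$ (and noting that since the conditional bound on $E_3$ is the smaller of the two, shifting mass toward $E_3^c$ maximizes $\E[Z]$), one obtains $\E[Z]\le (1-\alpha)c_\cF + \epsilon(1+\alpha) + (1-\alpha)\epsilon\sqrt{2-2\underline{\cS}}$. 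Since the sufficient condition $\langle \hat g_\cG,\delta\rangle < c_\cG$ is equivalent to $Z < c_\cG + \epsilon$, this yields a corresponding lower bound on $\Pr{E_4}$; plugging it into $\Pr{T_r=1}\ge \Pr{E_3} + \Pr{E_4} - 1 - \eta_\cF - \eta_\cG$ and simplifying recovers the claimed expression verbatim.

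The main obstacle is bookkeeping rather than depth. The shift for Markov must be chosen so that the variable is non-negative (which is what fixes the denominator as $c_\cG + \epsilon$), the upper versus lower direction of the smoothness inequality must be picked consistently on $\cF$ and $\cG$ (lower bound on the ``necessary'' side, upper bound on the ``sufficient'' side), and the global max/min appearing in $c_\cF$ and $c_\cG$ must match those directions so that the implications go the right way. Getting the coefficients $(1+\alpha)$ and $(1-\alpha)$ exactly right is the only subtle arithmetic point, and it falls out of carefully tracking which of the $E_3$ and $E_3^c$ branches contributes each term in the conditional expectation.
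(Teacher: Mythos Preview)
Your proposal is correct and follows essentially the same route as the paper: union bound to peel off the benign-classification events and $\Pr{E_3^c}\le\alpha$, the two-sided $\beta$-smoothness inequalities to reduce $E_3$ and $E_4$ to linear conditions on $\delta$, the decomposition $\langle \hat g_\cG,\delta\rangle = \langle \hat g_\cF,\delta\rangle + \langle \hat g_\cG-\hat g_\cF,\delta\rangle$ with Cauchy--Schwarz (which the paper isolates as a separate lemma), and the $\epsilon$-shifted Markov inequality to control $\Pr{E_4^c}$. The only cosmetic differences are that the paper applies the union bound to all four events simultaneously and splits the expectation over the tail event $\{\langle \hat g_\cG,\delta\rangle > c_\cF+\epsilon\sqrt{2-2\underline{\cS}}\}$ rather than over $E_3$, but both lead to the identical numerator and final bound.
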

        
        We defer the complete proof in \Cref{aplowerbound}.
        In the proof, we first use a Taylor expansion to introduce the gradient terms, then relate the dot product with cosine similarity of the loss gradients, and finally use Markov's inequality to derive the misclassification probability of $\modelt$ to complete the proof.
      

        \para{Implications.}
        In \Cref{thm:target}, the only term which correlates both $\model$ and $\modelt$ is $\underline{\mathcal{S}}(\el, \lt)$, while all other terms depend on individual models $\model$ or $\modelt$.
        Thus, we study the relation between $\underline{\mathcal{S}}(\el, \lt)$ and $\Pr{T_r(\model, \modelt, x, y_t) = 1}$.
        
        Note that since $\beta$ is small compared with the perturbation radius $\epsilon$ and the gradient magnitude $\|\nabla_x \ell_{\cG}\|_2$ in the denominator is relatively large, the quantity $c_{\cG}$ is small. 
        Moreover, $1 - \alpha$ is large since the attack is typically effective against $\cF$.
        Thus, $\Pr{T_r(\model, \modelt, x, y_t) = 1}$ has the form $C - k\sqrt{1 - \underline{\mathcal{S}}(\el, \lt)}$, where $C$ and $k$ are both positive constants.
        We can easily observe the \emph{positive} correlation between the loss gradients similarity
        $\underline{\mathcal{S}}(\el, \lt)$, and
        lower bound of adversarial transferability $\Pr{T_r(\model, \modelt, x, y_t) = 1}$.
        
        In the meantime, note that when $\beta$ increases~(i.e., model becomes less smooth), 
        in the transferability lower bound $C - k\sqrt{1 - \underline{\mathcal{S}}(\el, \lt)}$, the $C$ decreases and $k$ increase.
        As a result, the lower bounds in \Cref{thm:target} decreases, which implies that when model becomes less smoother~(i.e., $\beta$ becomes larger), the transferability lower bounds become looser for both targeted and untargeted attacks.
        In other words, \emph{when the model becomes smoother, the correlation between loss gradients similarity and lower bound of transferability becomes stronger}, which motivates us to constrain the model smoothness to increase the effect of limiting loss gradients similarity.

        In addition to the $\ell_p$-bounded attacks, we 
        also derive a transferability lower bound for general attacks whose magnitude is bounded by total variance distance of data distributions.
        We defer the detail analysis and discussion to \Cref{adx:sec-tv-transferability-proofs}.

        
        
    
    \vspace{-0.3em}
    \subsection{Upper Bound of Adversarial Transferability}
        \vspace{-0.3em}
        \label{sec:model-upperbound}
    
        We next aim to upper bound the adversarial transferability. The upper bound for target attack is shown below; and the one for untargeted attack has a similar form in \Cref{thm:untarget-upper-bound} in \Cref{adx:sec-untargeted attack}.
        
        \begin{Thm}[Upper Bound on Targeted Attack Transferability]
        	\label{thm:target-upper-bound}

            Assume both models $\cF$ and $\cG$ are $\beta$-smooth with gradient magnitude bounded by $B$, i.e., $\|\nabla_x \ell_{\cF}(x,y)\| \le B$ and $\|\nabla_x \ell_{\cG}(x,y)\| \le B$ for any $x \in \cX, y \in \cY$.
            Let $\Att_T$ be an $(\alpha,\cF)$-effective targeted attack with perturbation ball $\|\delta\|_2 \le \epsilon$ and target label $y_t\in \cY$.
            When the attack radius $\epsilon$ is small such that 
            $\ell_{\min} - \epsilon B \left(1 + \sqrt{\frac{1+\overline{\cS}(\el,\lt)}{2}}\right) - \beta\epsilon^2 > 0$,
            the transferability can be upper bounded by
            %
            $$
                   \Pr{T_r(\cF, \cG, x, y_t) = 1} \le 
                    \dfrac{\xi_\model + \xi_\modelt}{\ell_{\min} - \epsilon B \left(1 + \sqrt{\frac{1+\overline{\cS}(\el,\lt)}{2}}\right) - \beta\epsilon^2},
                    \label{eq:upper-bound}
            $$
            where 
            $ \displaystyle
                \ell_{\min} = \min_{x \in \cX} \, (\ell_\model(x, y_t), \ell_\modelt(x, y_t)).
            $
            Here $\xi_\cF$ and $\xi_\cG$
            are the \emph{empirical risks} of models $\cF$ and $\cG$ respectively, defined relative to a differentiable loss.
        \end{Thm}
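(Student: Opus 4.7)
The plan is to apply Markov's inequality to the nonnegative random variable $\ell_\cF(x,y) + \ell_\cG(x,y)$, whose expectation under the benign distribution $\cD$ is exactly $\xi_\cF + \xi_\cG$. To obtain the stated upper bound on $\Pr{T_r(\cF,\cG,x,y_t) = 1}$, it suffices to show that whenever the transferability event holds, this random variable is pointwise lower-bounded by (a constant multiple of) the denominator in the theorem.

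I would first unpack the event $T_r(\cF,\cG,x,y_t) = 1$: it forces $\cF(x) = \cG(x) = y$ together with $\cF(\Att_T(x)) = \cG(\Att_T(x)) = y_t$. By the paper's convention $\cF(z) = \argmin_{y'} \ell_\cF(z,y')$, the second condition yields $\ell_\cF(\Att_T(x), y_t) \le \ell_\cF(\Att_T(x), y)$, and analogously for $\cG$. I would then invoke $\beta$-smoothness in two complementary directions on each model: a Taylor \emph{upper} bound on $\ell_\cF(\Att_T(x), y)$ in terms of $\ell_\cF(x,y)$, absorbing the first-order term $\nabla_x \ell_\cF(x,y) \cdot \delta$ into the flat penalty $B\epsilon$; and a Taylor \emph{lower} bound on $\ell_\cF(\Att_T(x), y_t)$ in terms of $\ell_\cF(x,y_t)$, retaining the first-order term $\nabla_x \ell_\cF(x,y_t) \cdot \delta$ for the joint argument below. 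Chaining these through the loss-minimizer inequality yields
\[
\ell_\cF(x,y) \;\ge\; \ell_\cF(x,y_t) + \nabla_x \ell_\cF(x,y_t) \cdot \delta - B\epsilon - \beta\epsilon^2,
\]
and analogously for $\cG$.

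Summing the two inequalities and using $\ell_\cF(x,y_t), \ell_\cG(x,y_t) \ge \ell_{\min}$ leaves the combined inner product $(\nabla_x \ell_\cF(x,y_t) + \nabla_x \ell_\cG(x,y_t)) \cdot \delta$ as the only attacker-dependent quantity. To bound it below uniformly in $\delta$, I would apply Cauchy--Schwarz and upper bound the summed-gradient norm by expanding the square:
\[
\|\nabla_x \ell_\cF(x,y_t) + \nabla_x \ell_\cG(x,y_t)\|_2^2 \;\le\; 2B^2 + 2B^2\,\overline{\cS}(\el,\lt) \;=\; 4B^2 \cdot \tfrac{1 + \overline{\cS}(\el,\lt)}{2},
\]
so $\|\nabla_x \ell_\cF + \nabla_x \ell_\cG\|_2 \le 2B\sqrt{(1+\overline{\cS}(\el,\lt))/2}$. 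Combined with the per-model $B\epsilon$ and $\beta\epsilon^2$ contributions, this is precisely where the cosine-half-angle factor in the theorem's denominator originates, and Markov's inequality then delivers the claimed bound.

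The main obstacle is the careful sign and factor bookkeeping when chaining the two smoothness directions. Specifically, one must absorb the $y$-direction gradient into a flat $B\epsilon$ penalty while retaining the $y_t$-direction gradients inside the \emph{summed}-gradient norm bound; handling them symmetrically (e.g., via the triangle inequality) would produce a loose $2B\epsilon$ term and destroy the $\sqrt{(1+\overline{\cS})/2}$ half-angle structure that the theorem statement encodes.
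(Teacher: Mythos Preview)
Your route differs structurally from the paper's. The paper retains the gradients at the \emph{true} label $y$ (rather than $y_t$) for the half-angle step: on the transferability event it shows both $\delta\cdot\nabla_x\ell_\cF(x,y)$ and $\delta\cdot\nabla_x\ell_\cG(x,y)$ must be large, then invokes a lemma stating that for unit vectors $\hat{u},\hat{v}$ with $\hat{u}\cdot\hat{v}\le S$ one has $\min(\delta\cdot\hat{u},\,\delta\cdot\hat{v})\le\|\delta\|_2\sqrt{(1+S)/2}$. This forces at least one of $\ell_\cF(x,y)$, $\ell_\cG(x,y)$ to be large, and a union bound plus two separate Markov inequalities finishes. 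Your single Markov on the sum $\ell_\cF(x,y)+\ell_\cG(x,y)$ is more direct and, where valid, actually delivers a bound tighter by a factor of~$2$ than the one stated.

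There is, however, a gap in your norm estimate. The claim $\|a+b\|_2^2\le 2B^2(1+\overline{\cS})$ for $a=\nabla_x\ell_\cF(x,y_t)$ and $b=\nabla_x\ell_\cG(x,y_t)$ does not follow from $\|a\|,\|b\|\le B$ and $\tfrac{a\cdot b}{\|a\|\,\|b\|}\le\overline{\cS}$ alone. Writing $\|a\|=A$, $\|b\|=C$, one has $\|a+b\|^2\le A^2+C^2+2AC\,\overline{\cS}$, and on $[0,B]^2$ this is maximized at $(B,B)$ only when $\overline{\cS}\ge-\tfrac12$; for $\overline{\cS}<-\tfrac12$ the boundary value $B^2$ exceeds $2B^2(1+\overline{\cS})$ (e.g.\ $\overline{\cS}=-0.9$, $A=B$, $C\to 0^+$ gives $\|a+b\|^2\to B^2>0.2B^2$). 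The paper sidesteps precisely this issue by normalizing the gradients \emph{before} applying the half-angle inequality and only afterward multiplying back by $\|\nabla_x\ell\|\le B$. You can repair your argument the same way, or simply restrict to $\overline{\cS}\ge -\tfrac12$.
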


        We defer the complete proof to \Cref{apupperbound}.
        In the proof, we first take a Taylor expansion on the loss function at $(x, y)$, then use the fact that the attack direction will be dissimilar with at least one of the model gradients to upper bound the transferability probability.

        \para{Implications.}
        In \Cref{thm:target-upper-bound}, we  observe that along with the increase of $\overline{\cS}(\el,\lt)$, the denominator decreases and henceforth the upper bound increases.
        Therefore, $\overline{S}(\el,\lt)$---upper loss gradient similarity and the upper bound of transferability probability is positively correlated.
        This tendency is the same as that in the lower bound.
        Note that $\alpha$ does not appear in upper bounds since only completely successful attacks ($\alpha = 0\%$) needs to be considered here to upper bound the transferability.
        
        Meanwhile, when the model becomes smoother~(i.e., $\beta$ decreases), the transferability upper bound decreases and becomes tighter.
        This implication again motivates us to constrain the model smoothness.
        We further observe that smaller magnitude of gradient, i.e., $B$, also helps to tighten the upper bound.
        We will regularize both $B$ and $\beta$ to increase the effect of constraining loss gradients similarity.

        
        
        Note that the lower bound and upper bound jointly show smaller $\beta$ leads to a reduced gap between lower and upper bounds and thus a stronger correlation between loss gradients similarity and transferabiltiy.
        Therefore, it is important to \emph{both} constrain gradient similarity and increase model smoothness~(decrease $\beta$) to reduce model transferability and improve ensemble robustness.

    \vspace{-0.5em}
\section{Improving Ensemble Robustness via Transferability Minimization}
    \vspace{-0.5em}
    \label{sec:algo}
    
    Motivated by our theoretical analysis,
    we propose a lightweight yet effective robust ensemble training approach, {\textbf{T}ransferability \textbf{R}educed \textbf{S}mooth} (TRS), to reduce the transferability among base models by enforcing \textit{low loss gradient similarity} and \textit{model smoothness} at the same time. 
    
    \vspace{-0.5em}
    \subsection{TRS Regularizer}
    \vspace{-0.5em}
        
        In practice, it is challenging to directly regularize the model smoothness.
        Luckily, inspired from deep learning theory and optimization~\cite{drucker1992improving,oberman2018lipschitz,sinha2017certifying},
        succinct $\ell_2$ regularization on the gradient terms $\|\nabla_x \ell_{\cF}\|_2$ and $\|\nabla_x \ell_{\cG}\|_2$ can reduce the magnitude of gradients and thus improve \textbf{model smoothness}.
        For example, for common neural networks, the smoothness can be upper bounded via bounding the $\ell_2$ magnitude of gradients~\cite[Corollary 4]{sinha2017certifying}.
        An intuitive explanation is that, the $\ell_2$ regularization on the gradient terms reduces the magnitude of model's weights, thus limits its changing rate when non-linear activation functions are applied to the neural network model.
        However,  we find that  directly regularizing the loss gradient magnitude with $\ell_2$ norm is not enough, since a vanilla $\ell_2$ regularizer such as $\|\nabla_x \ell_{\cF}\|_2$ will only focus on the local region at data point $x$, while it is required to ensure the model smoothness over a large decision region to control the adversarial transferability based on our theoretical analysis. 
        
       To address this challenge, we propose a min-max framework 
       to regularize the ``support'' instance $\hat x$ with ``worst'' smoothness in the neighborhood region of data point $x$, which results in the following model smoothness loss:
        \begin{equation}
            \mathcal{L}_{\text{smooth}}(\cF, \cG, x, \delta) =
           \max_{\|\hat{x}-x\|_\infty \leq \delta}  \|\nabla_{\hat{x}} \ell_{\cF}\|_2 + \|\nabla_{\hat{x}} \ell_{\cG}\|_2
        \end{equation}
        where $\delta$ refers to the radius of the $\ell_\infty$ ball around instance $x$ within which we aim to ensure the model to be smooth. 
        In practice, we leverage projection gradient descent optimization to search for support instances $\hat{x}$ for optimization. 
        This model smoothness loss can be viewed as promoting margin-wise smoothness, i.e., improving the margin between nonsmooth decision boundaries and data point $x$.
        Another option is to promote point-wise smoothness that only requires the loss landscape at data point $x$ itself to be smooth.
        We compare the ensemble robustness of the proposed min-max framework which promotes the margin-wise smoothness with the na\"ive baseline which directly applies $\ell_2$ regularization on each model loss gradient terms to promote the point-wise smoothness
        (i.e. \cosll) in \Cref{sec:exp}.

        Given trained ``smoothed" base models, we also 
        decrease the model \textbf{loss gradient similarity} to reduce the overall adversarial transferability between base models.
       Among various metrics which measure the similarity between the loss gradients of base model $\cF$ and $\cG$, we find that the vanilla cosine similarity metric, which is also used in \cite{kariyappa2019improving}, may lead to certain concerns. By minimizing the cosine similarity between $\nabla_x \ell_{\cF}$ and $\nabla_x \ell_{\cG}$, the optimal case implies $\nabla_x \ell_{\cF} = -\nabla_x \ell_{\cG}$, which means two models have contradictory (rather than diverse) performance on instance $x$ and thus results in turbulent model functionality. Considering this challenge, we leverage the absolute value of cosine similarity 
       between $\nabla_x \ell_{\cF}$ and $\nabla_x \ell_{\cG}$ as \emph{similarity loss} $\mathcal{L}_{\text{sim}}$
       and its optimal case implies orthogonal loss gradient vectors. For simplification, we will always use the absolute value of the gradient cosine similarity as the indicator of \textit{gradient similarity}  in our later description and evaluation.
        
        
       Based on our theoretical analysis and particularly the model \textit{loss gradient similarity} and \textit{model smoothness} optimization above, we propose TRS regularizer for model pair $(\cF, \cG)$ on input $x$ as:
          \begin{align*}
            \mathcal{L}_{\text{TRS}}(\cF, \cG, x, \delta)
            & = \lambda_a \cdot \mathcal{L}_{\text{sim}} + \lambda_b \cdot \mathcal{L}_{\text{smooth}} \\
            &= 
            \lambda_a \cdot \bigg|\frac{(\nabla_x \ell_{\cF})^\top (\nabla_x \ell_{\cG})}{\|\nabla_x \ell_{\cF}\|_2 \cdot \|\nabla_x \ell_{\cG}\|_2}\bigg|
             + \lambda_b \cdot \bigg[\max_{\|\hat{x}-x\|_\infty \leq \delta}  \|\nabla_{\hat{x}} \ell_{\cF}\|_2 + \|\nabla_{\hat{x}} \ell_{\cG}\|_2 \bigg].
        \end{align*}
        Here $\nabla_x \ell_{\cF}$ and $\nabla_x \ell_{\cG}$ refer to the loss gradient vectors of base models $\cF$ and $\cG$ on input $x$, and $\lambda_a, \lambda_b$ the weight balancing parameters. 

        In Section~\ref{sec:exp}, backed up by extensive empirical evaluation,
        we will systematically show that the local min-max training and the absolute value of the cosine similarity between the model loss gradients significantly improve the ensemble model robustness with negligible performance drop on  benign accuracy, as well as reduce the adversarial transferability among base models. 

    \vspace{-0.5em}
   \subsection{TRS Training}
    \vspace{-0.5em}
    We integrate the proposed TRS regularizer with the standard ensemble training loss, such as Ensemble Cross-Entropy (ECE) loss, to maintain both ensemble model's classification utility and robustness by varying the balancing parameter $\lambda_a$ and $\lambda_b$. Specifically, for an ensemble model consisting of $N$ base models $\{\mathcal{F}_i\}_{i=1}^N$, given an input $(x, y)$, our final training loss $\mathcal{\text{train}}$ is defined as:
    $$\mathcal{L}_{\text{train}} = \frac{1}{N}\sum_{i=1}^{N}\mathcal{L}_{\text{CE}}(\mathcal{F}_i(x), y) + \frac{2}{N(N-1)}\sum_{i=1}^{N}\sum_{j=i+1}^{N}\mathcal{L}_{\text{TRS}}(\cF_i, \cF_j, x, \delta)$$
    where $\mathcal{L}_{\text{CE}}(\mathcal{F}_i(x), y)$ refers to the cross-entropy loss between $\mathcal{F}_i(x)$, the output vector of model $\cF_i$ given $x$, and the ground-truth label $y$. The weight of $\mathcal{L}_{\text{TRS}}$ regularizer could be adjusted by the tuning $\lambda_a$ and $\lambda_b$ internally. We present one-epoch training pseudo code in \Cref{algo:trs-training} of \Cref{adx:sec-experiment-details}. 
    The detailed hyper-parameter setting and training criterion are  discussed in \Cref{adx:sec-experiment-details}.
    







\vspace{-0.7em}
\section{Experimental Evaluation}
\vspace{-0.7em}
    \label{sec:exp}
    In this section, we evaluate the robustness of the proposed TRS-ensemble model under both strong \underline{whitebox} attacks, as well as \underline{blackbox} attacks considering the gradient obfuscation concern~\cite{athalye2018obfuscated}. We compare TRS with six state-of-the-art ensemble approaches. In addition, we evaluate the adversarial transferability among base models within an ensemble and empirically show that the TRS regularizer can indeed reduce  transferability effectively.
    We also conduct extensive ablation studies to explore the effectiveness of different loss terms in TRS, as well as
    visualize the trained decision boundaries of different ensemble models to provide intuition on the model properties.
    We open source the code\footnote{\href{https://github.com/AI-secure/Transferability-Reduced-Smooth-Ensemble}{\texttt{https://github.com/AI-secure/Transferability-Reduced-Smooth-Ensemble}}} and provide a large-scale benchmark.




    \vspace{-0.6em}
    \subsection{Experimental Setup}
    \vspace{-0.6em}

        \noindent\textbf{Datasets.} We conduct our experiments on widely-used image datasets including hand-written dataset MNIST~\cite{lecun1998mnist}; and colourful image datasets CIFAR-10 and CIFAR-100~\cite{krizhevsky2009learning}. 
        
        \noindent\textbf{Baseline ensemble approaches.} We mainly consider the standard  ensemble, as well as the state-of-the-art robust ensemble methods that claim to be resilient against adversarial attacks. 
        Specifically, we consider the following baseline ensemble methods which aim to promote the diversity between base models: \textbf{AdaBoost}~\cite{hastie2009multi}; \textbf{GradientBoost}~\cite{friedman2001greedy}; \textbf{CKAE}~\cite{kornblith2019similarity}; \textbf{ADP}~\cite{pang2019improving}; \textbf{GAL}~\cite{kariyappa2019improving}; \textbf{DVERGE}~\cite{yang2020dverge}. The detailed description about these approaches are in \Cref{adx:sec-omitted-introduction}. DVERGE, which has achieved the state-of-the-art ensemble robustness to our best knowledge, serves as the strongest baseline.


        \begin{table*}[!t]
            \centering
            \caption{\small Robust accuracy$(\%)$ of different ensembles against whitebox attacks on MNIST/CIFAR-10. ``para.''  refers to the attack parameter ($\epsilon$ is the $\ell_\infty$ perturbation budget for the attack and $c$ the constant to balance the attack stealthiness and effectiveness). The first 6 methods 
            are baseline ensembles, and  the last 3 columns  (Cos-only, Cos-$\ell_2$, TRS) the variants of TRS-ensemble.}
            
            \noindent
            \scalebox{0.62}{
            \begin{tabular}{c||c||c|c|c|c|c|c||c|c|c}
            \toprule
        \bf MNIST                     & para.        & AdaBoost & GradientBoost     & CKAE & ADP  & GAL  & DVERGE & Cos-only & Cos-$\ell_2$ & \textbf{TRS}  \\ \hline
            \rowcolor{tabgray}      & $\epsilon=0.1$  & 70.2 & 73.2     & 72.6          & 71.7 & 35.7 & \textbf{95.8}   & 66.2 & 91.2 & 95.6 \\
                                     \rowcolor{tabgray} \multirow{-2}{*}{FGSM} & $\epsilon=0.2$  & 39.4 & 34.2    & 42.5        & 20.0 & 7.8 & 91.6  & 30.7 & 72.5 & \textbf{91.7}  \\ \hline
            \multirow{2}{*}{BIM (50)} & $\epsilon=0.1$  & 2.6 & 2.4     & 4.2          & 7.7 & 4.6 & 74.9 & 0.4 & 76.2 & \textbf{93.3}  \\
                                      & $\epsilon=0.15$ & 0.0 & 0.2  & 0.4       & 0.1 & 2.5 & 47.7   & 0.0 & 47.9 & \textbf{85.7}\\ \hline
            \rowcolor{tabgray} & $\epsilon=0.1$  & 1.9 & 1.5    & 1.4        & 4.5 & 4.1 & 69.2   & 0.0 & 73.4 & \textbf{93.0}  \\
                                      \rowcolor{tabgray} \multirow{-2}{*}{PGD (50)}& $\epsilon=0.15$ & 0.0  & 0.0  & 0.5       & 1.0 & 0.6 & 28.8   & 0.0 & 30.2 & \textbf{85.1}  \\ \hline
            \multirow{2}{*}{MIM (50)} & $\epsilon=0.1$  & 1.9 & 1.6  & 1.2      & 13.8 &0.8 & 75.3   & 0.4 & 74.1 & \textbf{92.9}  \\
                                      & $\epsilon=0.15$ & 0.0 & 0.1   & 0.3        & 1.0 & 0.2 & 44.6  & 0.0 & 35.5 & \textbf{85.1}   \\ \hline
            \rowcolor{tabgray}       & $c=0.1$         & 81.2 & 80.5    & 83.4        & 97.9 & 97.4 & 97.3   & 85.6 & 89.2 & \textbf{98.1} \\
                                      \rowcolor{tabgray}\multirow{-2}{*}{CW}& $c=1.0$        & 66.3 & 65.8     & 69.5         & 90.1 & 68.3 & 79.2   & 58.6 & 54.4 & \textbf{92.6} \\ \hline
            \multirow{2}{*}{EAD}      & $c=5.0$  & 0.2    & 0.1        & 0.1             & 2.2 & 0.2 & 0.0   & 4.1 & 6.9 & \textbf{23.3} \\
                                      & $c=10.0$ & 0.0    & 0.0        & 0.0             & 0.0 & 0.2 & 0.0   & 0.5 & 0.8 & \textbf{1.4} \\ \hline
            \rowcolor{tabgray}  & $\epsilon=0.1$  & 0.5    & 0.2        & 0.5             & 2.1  & 1.9  & 65.4   & 0.0 & 70.6 & \textbf{92.1}   \\
                                      \rowcolor{tabgray} \multirow{-2}{*}{APGD-DLR} & $\epsilon=0.15$ & 0.0    & 0.0        & 0.1             & 0.5  & 0.2  & 27.4   & 0.0 & 26.3 & \textbf{83.4} \\ \hline
            \multirow{2}{*}{APGD-CE}  & $\epsilon=0.1$  & 0.2    & 0.2        & 0.1             & 1.4  & 1.2 & 63.2   & 0.0 & 69.8 & \textbf{91.7}\\
                                      & $\epsilon=0.15$ & 0.0    & 0.0        & 0.1             & 0.4  & 0.2  & 26.1   & 0.0 & 25.4 & \textbf{82.8}\\ \bottomrule
            \end{tabular}}
        \\

            \noindent
            \scalebox{0.625}{
            \begin{tabular}{c||c||c|c|c|c|c|c||c|c|c}
            \toprule
            \bf CIFAR-10 & para.& AdaBoost & GradientBoost  & CKAE  & ADP  & GAL  & DVERGE & Cos-only & Cos-$\ell_2$ & \textbf{TRS} \\ \hline
            \rowcolor{tabgray}     & $\epsilon=0.02$ & 28.2 & 30.4 & 34.1 & 58.8 & 19.2 & \bf 63.8 & 56.1 & 35.8 & 44.2 \\
                                      \rowcolor{tabgray} \multirow{-2}{*}{FGSM} & $\epsilon=0.04$ & 15.4 & 15.2 & 18.5 & 39.4 & 12.6 & \bf 53.4 & 35.0 & 25.9 & 24.9  \\ \hline
            \multirow{2}{*}{BIM (50)} & $\epsilon=0.01$ & 4.2 & 4.4 & 5.1 & 13.8 & 13.0 & 39.1 & 0.0 & 17.1 & \textbf{50.6}  \\
                                      & $\epsilon=0.02$ & 0.2 & 0.1 & 0.2 & 0.9 & 2.5 & 13.0 & 0.0 & 1.2 & \textbf{15.8}  \\ \hline
            \rowcolor{tabgray} & $\epsilon=0.01$ & 2.1 & 1.9 & 1.9 & 9.0 & 8.3 & 37.1 & 0.0 & 15.7 & \textbf{50.5}  \\
                                      \rowcolor{tabgray} \multirow{-2}{*}{PGD (50)} & $\epsilon=0.02$ & 0.0 & 0.0 & 0.2 & 0.1 & 0.6 & 10.5 & 0.0 & 0.5 & \textbf{15.1}  \\ \hline
            \multirow{2}{*}{MIM (50)} & $\epsilon=0.01$ & 2.3 & 1.9 & 2.0 & 18.7 & 10.3 & 40.7 & 0.0 & 18.1 & \textbf{51.5} \\
                                      & $\epsilon=0.02$ & 0.1 & 0.0 & 0.1 & 1.7 & 0.8 & 14.4 & 0.0 & 0.5 & \textbf{17.2}  \\ \hline
            \rowcolor{tabgray}       & $c=0.01$ & 36.2 & 35.2 & 35.4 & 55.8 & 66.3 & 75.1 & 36.6 & 67.3 & \textbf{77.2} \\
                                      \rowcolor{tabgray} \multirow{-2}{*}{CW} & $c=0.1$  & 18.4 & 26.2 & 23.0 & 25.9 & 28.3 & 57.4 & 17.6 & 30.7 & \textbf{58.1}  \\ \hline
            \multirow{2}{*}{EAD}      & $c=1.0$ & 0.2 & 0.0 & 0.0 & 9.0 & 0.0 & 0.2 & 0.0 & 0.0 & \bf 11.7 \\
                                      & $c=5.0$ & 0.0 & 0.0 & 0.0 & 0.0 & 0.0 & 0.0 & 0.0 & 0.0 & \bf 0.1 \\ \hline
            \rowcolor{tabgray}  & $\epsilon=0.01$ & 1.2 & 0.9 & 1.1 & 5.5 & 2.2 & 37.6 & 0.0 & 16.1 & \textbf{50.2}  \\
                                      \rowcolor{tabgray} \multirow{-2}{*}{APGD-DLR} & $\epsilon=0.02$ & 0.0 & 0.0 & 0.0 & 0.2 & 0.0 & 10.2 & 0.0 & 0.5 & \textbf{15.1}  \\ \hline
            \multirow{2}{*}{APGD-CE}  & $\epsilon=0.01$ & 0.9 & 0.2 & 0.4 & 3.9 & 1.6 & 37.5 & 0.0 & 15.9 & \textbf{48.6}  \\
                                      & $\epsilon=0.02$ & 0.0 & 0.0 & 0.0 & 0.1 & 0.0 & 10.2 & 0.0 & 0.5 & \textbf{15.0}\\ \bottomrule
            \end{tabular}}
            \label{tab:result1}
            \vspace{-1.5em}
        \end{table*}

        \noindent\textbf{Whitebox robustness evaluation.} We consider the following adversarial attacks to measure ensembles' \underline{whitebox} robustness: \emph{Fast Gradient Sign Method} (\textbf{FGSM})~\cite{goodfellow2014explaining}; \emph{Basic Iterative Method} (\textbf{BIM})~\cite{madry2017towards}; \emph{Momentum Iterative Method} (\textbf{MIM}); \emph{Projected Gradient Descent} (\textbf{PGD}); \emph{Auto-PGD} (\textbf{APGD}); \emph{Carlini \& Wanger Attack} (\textbf{CW}); \emph{Elastic-net Attack} (\textbf{EAD})~\cite{chen2018ead}, and we leave the detailed description and parameter configuration of these attacks in \Cref{adx:sec-omitted-introduction}. 
        We use \emph{Robust Accuracy} as our \textbf{evaluation metric} for the whitebox setting, defined as the ratio of correctly predicted \textit{adversarial examples} generated by different attacks among the whole test dataset.

        \noindent\textbf{Blackbox robustness evaluation.} We also conduct blackbox robustness analysis in our evaluation since recent studies have shown that robust models which obfuscate gradients could still be fragile under blackbox attacks~\cite{athalye2018obfuscated}. In the \underline{blackbox} attack setting, we assume the attacker has no knowledge about the target ensemble, including the model architecture and parameters.
        In this case, the attacker is only able to craft adversarial examples based on several surrogate models and transfer them to the target victim ensemble. We follow the same blackbox  attack evaluation setting in \cite{yang2020dverge}: We choose three {ensembles} consisting of $3, 5, 8$ base models which are trained with standard Ensemble Cross-Entropy (ECE) loss as our surrogate models. We apply $50$-steps PGD attack with three random starts and two different loss functions (CrossEntropy and CW loss) on each surrogate model to generate adversarial instances (i.e. for each instance we will have 18 attack attempts). For each instance, among these attack attempts, as long as there is one that can successfully attack the victim model, we will count it as a successful attack. In this case, we use \emph{Robust Accuracy} as our \textbf{evaluation metric}, defined as the number of unsuccessful attack attempts divided by the number of all attacks. We also consider additional three strong blackbox attacks targeting on reducing transferability ($i.e.$, ILA~\cite{huang2019enhancing}, DI2-SGSM~\cite{xie2019improving}, IRA~\cite{wang2020unified}) in \Cref{adx:other-strong-attacks}, which leads to  similar observations. 

        \vspace{-0.5em}
    \subsection{Experimental Results}
        \vspace{-0.5em}

     In this section, we present both \underline{whitebox} and \underline{blackbox} robustness evaluation results, examine the adversarial transferability, and explore the impacts of different loss terms in TRS.
     Furthermore, in \Cref{adx:sec-decision-boundary}, we  visualize the decision boundary;
     in \Cref{adx:sec-adv-training}, we show results of further improving the robustness of the TRS ensemble by integrating adversarial training; in \Cref{adx:separate-effects-section}, we study the impacts of each of the regularization term $\mathcal{L}_{\text{sim}}$ and $\mathcal{L}_{\text{smooth}}$; in \Cref{adx:sec-convergence}, we show the convergence of robust accuracy under large attack iterations to demonstrate the robustness stability of TRS ensemble; in \Cref{adx:sec-pgd-inner-convergence}, we analyze the trade-off between the training cost and  robustness of TRS by varying  PGD step size and the total number of steps within $\mathcal{L}_{\text{smooth}}$ approximation.
     
     \noindent\textbf{Whitebox robustness.}   
        Table~\ref{tab:result1} presents the \emph{Robust Accuracy} of different ensembles against a range of whitebox attacks on MNIST and CIFAR-10 dataset. We defer results on CIFAR-100 in \Cref{adx:sec-cifar100-results}, and measure the statistical stability of our reported robust accuracy in \Cref{adx:sec-stability-results}. Results shows that the proposed TRS ensemble outperforms other baselines including the state-of-the-art DVERGE  \emph{significantly}, against a range of attacks and perturbation budgets, and such performance gap could be even larger under stronger adversary attacks (e.g. PGD attack). We note that TRS ensemble is slightly less robust than DVERGE under small perturbation  with weak attack FGSM. We investigate this based on the decision boundary analysis in \Cref{adx:sec-decision-boundary}, and find that DVERGE tends to
        be more robust along the gradient direction and thus more robust against weak attacks which only focus on the gradient direction (e.g., FGSM); while 
         TRS  yields a smoother model along different directions
       leading to more consistent predictions within a larger neighborhood of an input, and thus more robust against strong iterative attacks (e.g., PGD).
      This may be due to that DVERGE is essentially performing adversarial training for different base models and therefore it protects the adversarial (gradient) direction, while TRS optimizes to train a smooth ensemble with diverse base models. 
        We also analyze the convergence of attack algorithms in \Cref{adx:sec-convergence}, showing that when the number of attack iterations is large, both ADP and GAL ensemble achieve much lower robust accuracy against such iterative attacks; while both DVERGE and TRS remain robust.
        
        

     \noindent\textbf{Blackbox robustness.}
        Figure~\ref{fig:blackbox-curve} shows the \emph{Robust Accuracy} performance of TRS compared with different baseline ensembles 
        under different perturbation budget $\epsilon$. As we can see, the TRS ensemble achieves competitive robust accuracy with DVERGE when $\epsilon$ is very small, and TRS beats \emph{all} the baselines significantly when $\epsilon$ is large. Precisely speaking, TRS ensemble achieves over $85\%$ robust accuracy against transfer attack with $\epsilon=0.4$ on MNIST while the second-best ensemble (DVERGE) only achieves $20.2\%$. Also on CIFAR-10, TRS ensemble achieves over $25\%$ robust accuracy against transfer attack when $\epsilon=0.06$, while all the other baseline ensembles achieve robust accuracy  lower than $6\%$. This implies that our proposed TRS ensemble has stronger generalization ability in terms of robustness against large $\epsilon$ adversarial attacks compared with other ensembles. We also put more details of the robust accuracy under blackbox attacks in \Cref{adx:sec-blackbox-results}.
        
         \begin{figure}[!t]
            \centering
            
            \includegraphics[width=0.4\textwidth]{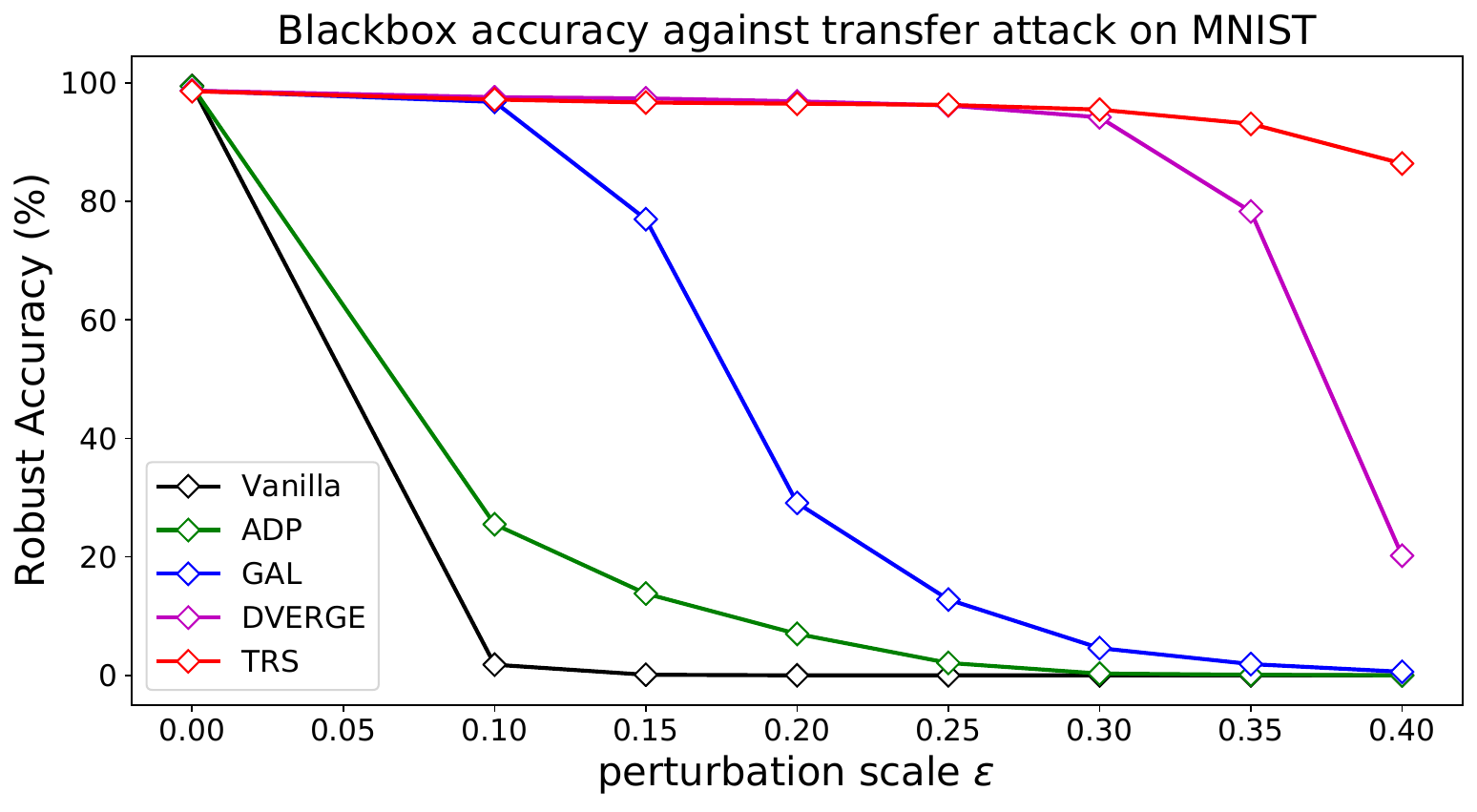}
            \includegraphics[width=0.4\textwidth]{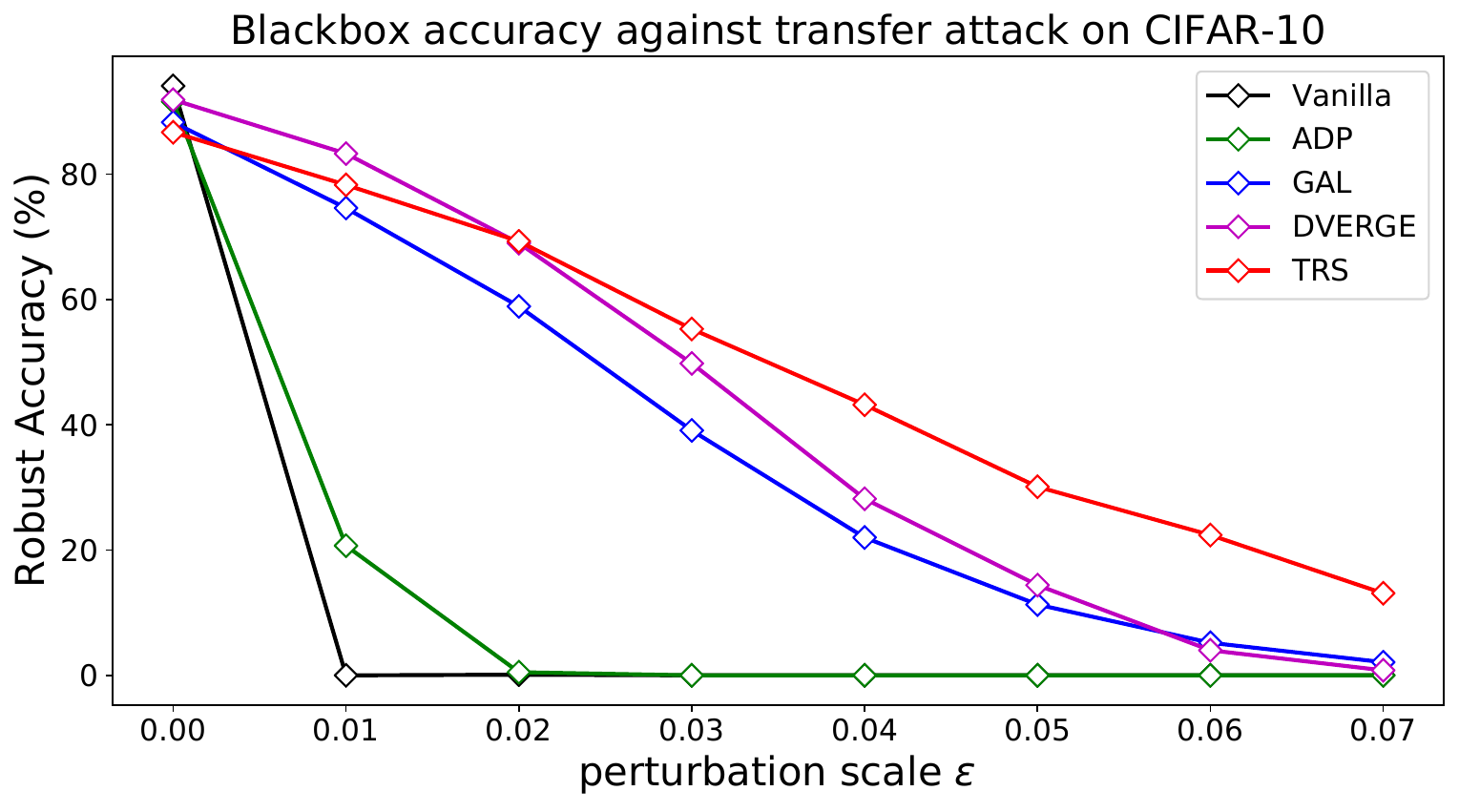}
                                                    \vspace{-1.0em}
            \caption{\small Robust accuracy under blackbox attacks with different $\ell_\infty$ perturbation budget $\epsilon$. (Left): MNIST; (Right): CIFAR-10.}
            \label{fig:blackbox-curve}
        \end{figure}
        
        \begin{figure*}[!t]
                      \vspace{-0.3em}
        \centering
            \includegraphics[width=0.155\textwidth]{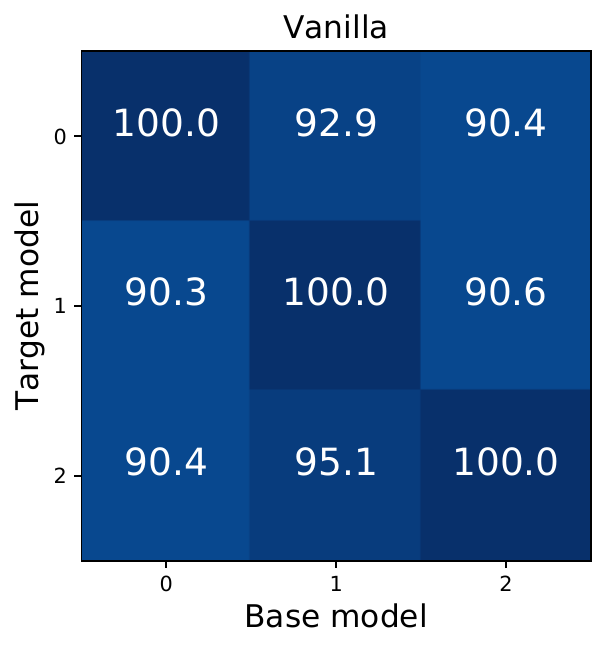}
            \includegraphics[width=0.155\textwidth]{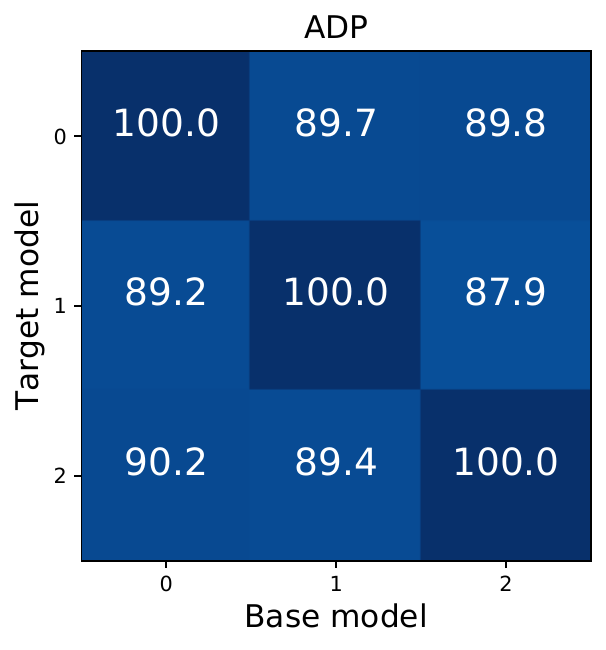}
            \includegraphics[width=0.155\textwidth]{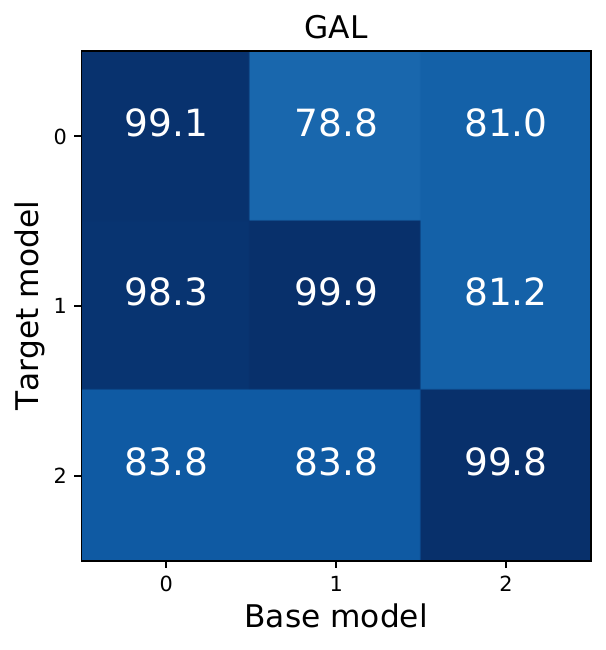}
            \includegraphics[width=0.155\textwidth]{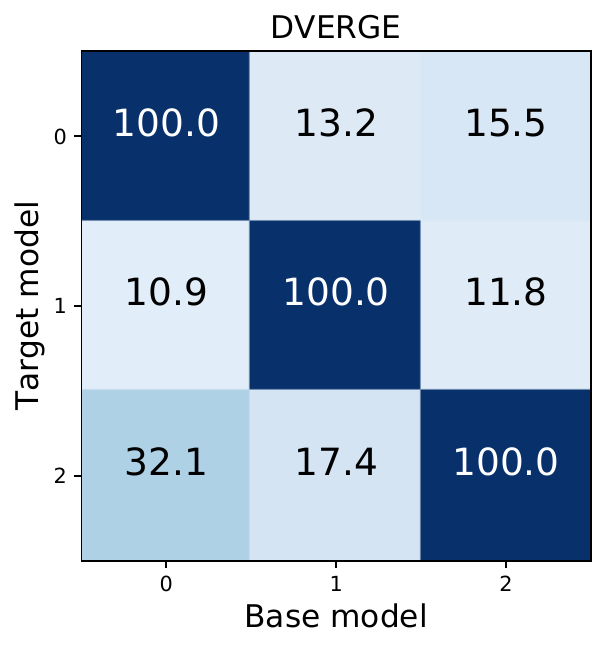}
            \includegraphics[width=0.155\textwidth]{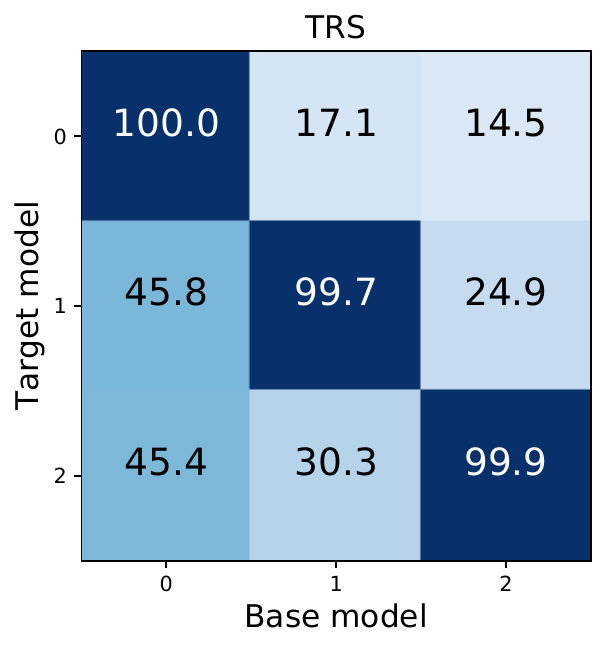}
            
            \includegraphics[width=0.155\textwidth]{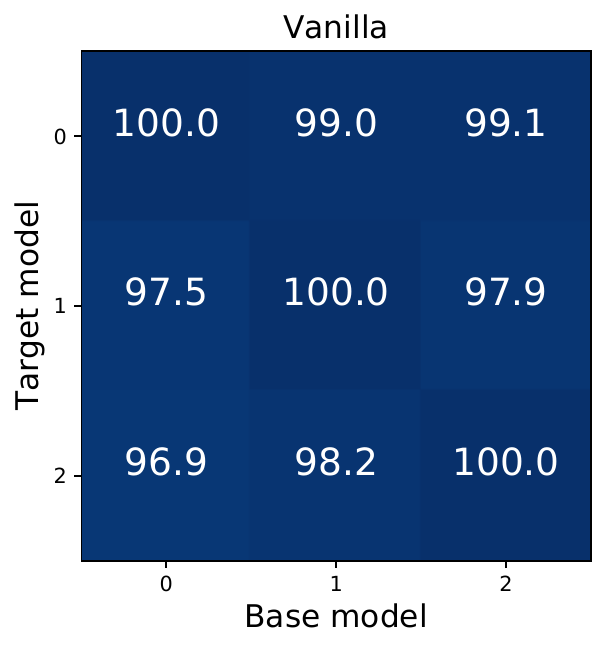}
            \includegraphics[width=0.155\textwidth]{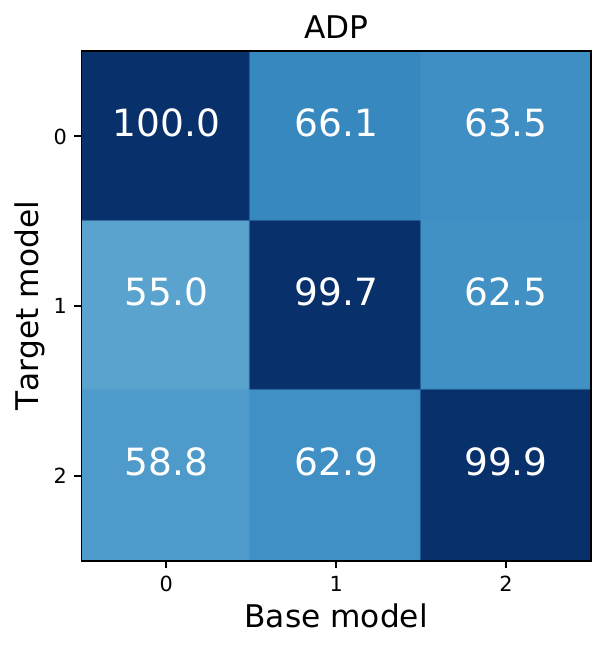}
            \includegraphics[width=0.155\textwidth]{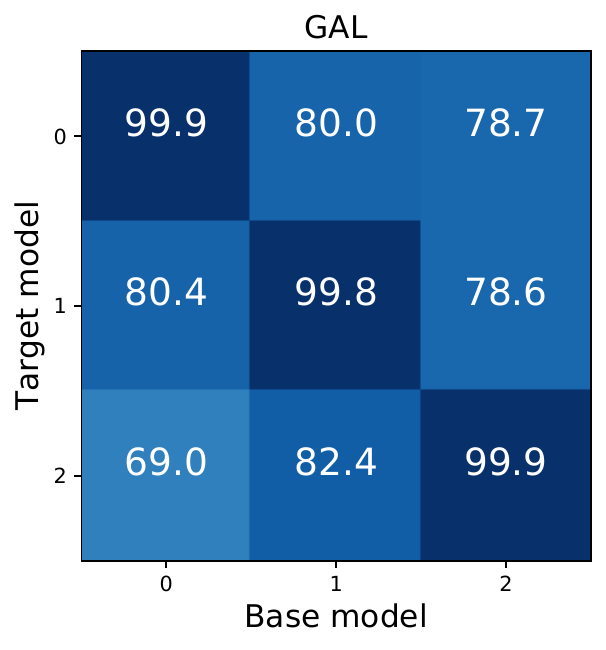}
            \includegraphics[width=0.155\textwidth]{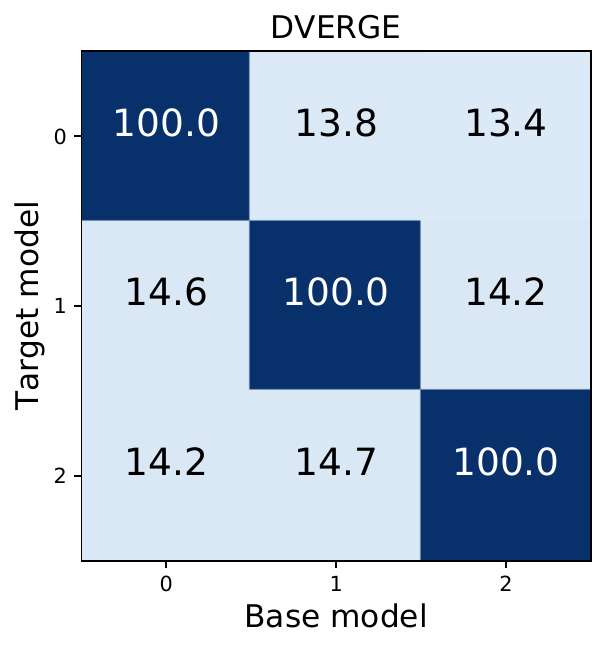}
            \includegraphics[width=0.155\textwidth]{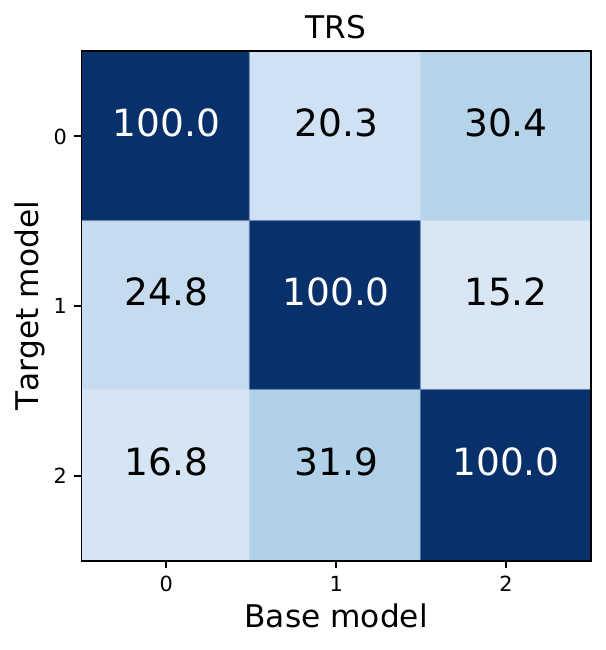}
                                        \vspace{-0.7em}
            \caption{\small Transferability analysis of  PGD attack on  MNIST (top) and CIFAR-10 (bottom). Each cell $(i,j)$ shows the \emph{attack success rate} of $i$-th model on the adversarial examples generated against the $j$-th model. We use $\epsilon=0.3$ for MNIST and $\epsilon=0.04$ for CIFAR-10. 
            } 
            \label{fig:transferability}
        \end{figure*}
        
        \noindent\textbf{Adversarial transferability.}   
        Figure~\ref{fig:transferability} shows the adversarial transferability matrix of different ensembles against $50$-steps PGD attack with $\epsilon=0.3$ for MNIST and $\epsilon=0.04$ for CIFAR-10. Cell $(i, j)$ where $i \neq j$ represents the transfer attack success rate evaluated on $j$-th base model by using the $i$-th base model as the surrogate model.
        Lower number in each cell indicates lower transferability and thus potentially higher ensemble robustness.
        The diagonal cell $(i, i)$ refers to $i$-th base model's attack success rate, which reflects the vulnerability of a single model.
        From these figures, we can see that while base models show their vulnerabilities against adversarial attack, only DVERGE and TRS ensemble could achieve low adversarial transferability among base models.
        We should also notice that though GAL applied a similar gradient cosine similarity loss as our loss term $\mathcal{L}_{\text{sim}}$, GAL still can not achieve low adversarial transferability due to the lack of model smoothness enforcement, which is one of our key contributions in this paper.

    \noindent\textbf{Gradient similarity only vs. TRS.}
        \label{adx:sec-similarity-only}
    To further verify our theoretical analysis on the sufficient condition of transferability as model smoothness, we consider 
    only applying
    \emph{similarity loss} $\mathcal{L}_{\text{sim}}$ without \emph{model smoothness loss} $\mathcal{L}_{\text{smooth}}$ in TRS (i.e. $\lambda_b=0$). The result is shown as ``Cos-only'' method of Table~\ref{tab:result1}. We observe that the resulting whitebox robustness is much worse than standard TRS. This matches our  theoretical analysis that \textit{only minimizing the gradient similarity cannot guarantee low adversarial transferability among base models} and thus lead to low ensemble robustness. In \Cref{adx:separate-effects-section}, we investigate the impacts of $\mathcal{L}_{\text{sim}}$ and $\mathcal{L}_{\text{smooth}}$ thoroughly, and we show that though $\mathcal{L}_{\text{smooth}}$  contribute slightly more, both terms are critical to the final ensemble robustness.

    \noindent\textbf{$\ell_2$ regularizer only vs. Min-max model smoothing.} 
    To emphasis the importance of our proposed  min-max training loss on promoting the margin-wise model smoothness, we train a variant of TRS ensemble \cosll, where we directly apply the $\ell_2$ regularization on $\|\nabla_x \ell_{\cF}\|_2$ and $\|\nabla_x \ell_{\cG}\|_2$. The results are shown as ``Cos-$\ell_2$'' in Table~\ref{tab:result1}. We observe that Cos-$\ell_2$ achieves lower robustness accuracy compared with TRS, which implies the necessity of regularizing the gradient magnitude on not only the local training points but also  their neighborhood regions to ensure overall model smoothness.
\section{Conclusion}
    In this paper, we deliver an in-depth understanding of adversarial transferability. 
    Theoretically, we provide both lower and upper bounds on transferability which shows that
    \textit{smooth} models together with \textit{low loss gradient similarity}  guarantee low transferability.
    Inspired by our analysis, we propose TRS ensemble training to empirically reduce transferability by reducing loss gradient similarity and promoting model smoothness, yielding a significant improvement on ensemble robustness.

\begin{ack}
This work is partially supported by the NSF grant No.1910100, NSF CNS 20-46726 CAR, the Amazon Research Award, and the joint CATCH MURI-AUSMURI.
\end{ack}

\bibliographystyle{plainnat}
\begin{NoHyper}
\bibliography{paper}
\end{NoHyper}

\section*{Checklist}


\begin{enumerate}

\item For all authors...
\begin{enumerate}
  \item Do the main claims made in the abstract and introduction accurately reflect the paper's contributions and scope?
    \answerYes{}
  \item Did you describe the limitations of your work?
    \answerYes{}
  \item Did you discuss any potential negative societal impacts of your work?
    \answerYes{}
  \item Have you read the ethics review guidelines and ensured that your paper conforms to them?
    \answerYes{}
\end{enumerate}

\item If you are including theoretical results...
\begin{enumerate}
  \item Did you state the full set of assumptions of all theoretical results?
    \answerYes{}
	\item Did you include complete proofs of all theoretical results?
    \answerYes{}
\end{enumerate}

\item If you ran experiments...
\begin{enumerate}
  \item Did you include the code, data, and instructions needed to reproduce the main experimental results (either in the supplemental material or as a URL)?
    \answerYes{}
  \item Did you specify all the training details (e.g., data splits, hyperparameters, how they were chosen)?
    \answerYes{}
	\item Did you report error bars (e.g., with respect to the random seed after running experiments multiple times)?
    \answerYes{}
	\item Did you include the total amount of compute and the type of resources used (e.g., type of GPUs, internal cluster, or cloud provider)?
    \answerYes{}
\end{enumerate}

\item If you are using existing assets (e.g., code, data, models) or curating/releasing new assets...
\begin{enumerate}
  \item If your work uses existing assets, did you cite the creators?
    \answerYes{}
  \item Did you mention the license of the assets?
    \answerYes{}
  \item Did you include any new assets either in the supplemental material or as a URL?
    \answerYes{}
  \item Did you discuss whether and how consent was obtained from people whose data you're using/curating?
    \answerYes{}
  \item Did you discuss whether the data you are using/curating contains personally identifiable information or offensive content?
    \answerYes{}
\end{enumerate}

\item If you used crowdsourcing or conducted research with human subjects...
\begin{enumerate}
  \item Did you include the full text of instructions given to participants and screenshots, if applicable?
    \answerNA{}
  \item Did you describe any potential participant risks, with links to Institutional Review Board (IRB) approvals, if applicable?
    \answerNA{}
  \item Did you include the estimated hourly wage paid to participants and the total amount spent on participant compensation?
    \answerNA{}
\end{enumerate}

\end{enumerate}


\appendix
\newpage
In \Cref{adx:sec-untargeted attack}, we provide the lower and upper bounds of attack transferability for untargeted attacks. In \Cref{adx:sec-tv-transferability-proofs}, we extend our theoretical analysis to adversarial attacks bounded by distributional distance. In \Cref{adx:sec-lower-bound-proofs,apdxB}, we give the detailed proof of \Cref{thm:target,thm:target-upper-bound,thm:untarget,thm:untarget-upper-bound} characterizing the lower and upper bounds for both targeted and untargeted attack transferability. In \Cref{adx:sec-omitted-introduction}, we give the detailed introduction of our baseline ensembles and evaluated whitebox attacks, including their exact configuration. In \Cref{adx:sec-experiment-details}, we present all the training details for TRS ensemble and other baselines. In \Cref{adx:sec-blackbox-results}, we give the numerical blackbox robustness evaluation results on MNIST and CIFAR-10, corresponding to \Cref{fig:blackbox-curve} in main paper. In \Cref{adx:sec-stability-results}, we analyze the statistical stability of reported robust accuracy for TRS ensemble against attacks with random start, and TRS ensemble claims its stability by showing small standard deviation. In \Cref{adx:sec-ablation-study}, we visualize the decision boundaries of different robust ensembles and investigate how adversarial training would further improve the robustness of TRS ensemble. We also show TRS ensemble remains robust under large attack iterations through convergence analysis. In \Cref{adx:other-strong-attacks}, we evaluate the robustness of TRS ensemble against other three strong blackbox attacks, and TRS ensemble still remains robust. In \Cref{adx:sec-cifar100-results}, we conduct whitebox robustness evaluation on CIFAR-100 dataset and compare other state-of-the-art robust ensembles with our proposed TRS ensemble.

\section{Additional Theoretical Results for Untargeted Attacks}
    \label{adx:sec-untargeted attack}
    
    In this appendix, we present transferability lower and upper bounds for untargeted attack.
    All these bounds have similar forms as their targeted attack counterparts in the main text.

    \subsection{Lower Bound}

        \begin{Thm}[Lower Bound on Untargeted Attack Transferability]
        	\label{thm:untarget}

        	Assume both models $\cF$ and $\cG$ are $\beta$-smooth. 
            Let $\Att_U$ be an ($\alpha, \model$)-effective untargeted attack with perturbation ball $\lVert \delta \rVert_2 \le \epsilon$.
            The transferabiity can be lower bounded by

        	$$
            	    \Pr{T_r(\model, \modelt, x) = 1} \ge (1-\alpha) - (\eta_\model + \eta_\modelt) 
            	    - 
            	    \dfrac{\epsilon(1+ \alpha)-c_\model(1-\alpha) }{\epsilon-c_\modelt} 
                    - \dfrac{\epsilon(1-\alpha) }{\epsilon-c_\modelt} \sqrt{2 - 2 \underline{\mathcal{S}}(\el, \lt)},
            $$
        	where
        	$$
        	   \resizebox{1.0\textwidth}{!}{$
        	        \displaystyle
        	        c_\model = \min_{(x,y)\in\supp(\cD)} \dfrac{ \underset{y'\in\cY: y'\neq y}{\min} \el(\Att_U(x), y') - \el(x, y) - \beta\epsilon^2 / 2 }{ \lVert \nabla_x \el(x, y) \rVert_2 }, 
        	        c_\modelt = \max_{(x,y)\in\supp(\cD)} \dfrac{ \underset{y'\in\cY: y'\neq y}{\min} \lt(\Att_U(x), y') - \lt(x, y) + \beta\epsilon^2 / 2 }{ \lVert \nabla_x \lt(x, y) \rVert_2 }.
        	   $}
        	$$
            Here $\eta_\cF$ and $\eta_\cG$ are the \emph{risks} of models $\cF$ and $\cG$ respectively. 
            The $\supp(\cD)$ is the support of benign data distribution, i.e., $x$ is the benign data and $y$ is its associated true label.
        \end{Thm}
        
        The full proof is available in \Cref{aplowerbound}.
        The discussion of the theorem is in \Cref{sec:transferability-analysis}.
    
    \subsection{Upper Bound}

        \begin{Thm}[Upper Bound on Untargeted Attack Transferability]
            \label{thm:untarget-upper-bound}
            Assume both models $\cF$ and $\cG$ are $\beta$-smooth with gradient magnitude bounded by $B$, i.e., $\|\nabla_x \ell_{\cF}(x,y)\| \le B$ and $\|\nabla_x \ell_{\cG}(x,y)\| \le B$ for any $x \in \cX$, $y \in \cY$.
            Let $\Att_U$ be an $(\alpha,\cF)$-effective untargeted attack with perturbation ball $\|\delta\|_2 \le \epsilon$.
            When the attack radius $\epsilon$ is small such that 
            $\ell_{\min} - \epsilon B \left(1 + \sqrt{\frac{1+\overline{\cS}(\el,\lt)}{2}}\right) - \beta\epsilon^2 > 0$,
            the transferability can be upper bounded by
            
            $$
                    \Pr{T_r(\cF, \cG, x) = 1} \le 
                    \dfrac{\xi_\model + \xi_\modelt}{\ell_{\min} - \epsilon B \left(1 + \sqrt{\frac{1+\overline{\cS}(\el,\lt)}{2}}\right) - \beta\epsilon^2},
            $$
            where 
            $
                \displaystyle
                \ell_{\min} = \min_{\substack{x\in\cX, y'\in\cY:\\ (x,y) \in \supp(\cD), y' \neq y}} \, (\ell_\cF(x, y'), \ell_\cG(x, y')).
            $
            Here $\xi_\cF$ and $\xi_\cG$
            are the \emph{empirical risks} of models $\cF$ and $\cG$ respectively, defined relative to a differentiable loss.
            The $\supp(\cD)$ is the support of benign data distribution, i.e., $x$ is the benign data and $y$ is its associated true label.
        \end{Thm}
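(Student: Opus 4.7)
The plan is to mirror the scaffolding used in the proof of Theorem~3, adapted to the untargeted setting where the attacker only needs to flip the prediction rather than drive it to a fixed target. The strategy has three phases: (i) use $\beta$-smoothness of each model, together with the universal lower bound $\ell_{\min}$, to sandwich the adversarial-input losses against the benign-input losses; (ii) use the upper cosine similarity $\overline{\cS}$ and the gradient-norm bound $B$ to control the joint gradient term that appears in the sandwich; and (iii) convert the resulting pointwise inequality into a probability bound via Markov's inequality applied to the non-negative benign losses.

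For the first two phases, note that $T_r(\cF,\cG,x)=1$ produces labels $y'_F, y'_G \ne y$ with $\cF(\Att_U(x)) = y'_F$ and $\cG(\Att_U(x)) = y'_G$. The argmin-prediction convention gives $\ell_\cF(\Att_U(x), y) \ge \ell_\cF(\Att_U(x), y'_F)$, and $\beta$-smoothness together with $\|\nabla_x \ell_\cF\|_2 \le B$ gives
\[
\ell_\cF(\Att_U(x), y'_F) \;\ge\; \ell_\cF(x, y'_F) - \epsilon B - \tfrac{\beta}{2}\epsilon^2 \;\ge\; \ell_{\min} - \epsilon B - \tfrac{\beta}{2}\epsilon^2,
\]
where the last step uses the definition of $\ell_{\min}$ on the benign pair $(x,y)\in\supp(\cD)$ with $y'_F \ne y$. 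The analogous bound for $\cG$, combined with $\beta$-smoothness in the opposite direction evaluated at the true label $y$, yields
\[
\ell_\cF(\Att_U(x),y) + \ell_\cG(\Att_U(x),y) \;\le\; \ell_\cF(x,y) + \ell_\cG(x,y) + \langle \nabla_x \ell_\cF(x,y) + \nabla_x \ell_\cG(x,y),\, \delta\rangle + \beta\epsilon^2.
\]
Writing $g_\cF := \nabla_x \ell_\cF(x,y)$ and $g_\cG := \nabla_x \ell_\cG(x,y)$, Cauchy--Schwarz together with the elementary identity $\|g_\cF + g_\cG\|_2^2 = \|g_\cF\|^2 + \|g_\cG\|^2 + 2\langle g_\cF, g_\cG\rangle \le 2B^2(1 + \overline{\cS})$ yields $\langle g_\cF + g_\cG,\, \delta\rangle \le 2\epsilon B\sqrt{(1+\overline{\cS})/2}$.

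Chaining the two sides shows that, whenever $T_r(\cF,\cG,x)=1$,
\[
\ell_\cF(x,y) + \ell_\cG(x,y) \;\ge\; 2\!\left[\,\ell_{\min} - \epsilon B\!\left(1 + \sqrt{(1+\overline{\cS})/2}\right) - \beta\epsilon^2\,\right] \;=\; 2C,
\]
where $C$ denotes the denominator in the claimed bound. Since $\ell_\cF, \ell_\cG \ge 0$ and their sum is at least $2C$, we must have $\max(\ell_\cF(x,y), \ell_\cG(x,y)) \ge C$, so $\{T_r=1\} \subseteq \{\ell_\cF(x,y) \ge C\} \cup \{\ell_\cG(x,y) \ge C\}$. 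A union bound and Markov's inequality applied to each non-negative loss then give
\[
\Pr{T_r(\cF,\cG,x)=1} \;\le\; \frac{\xi_\cF}{C} + \frac{\xi_\cG}{C} \;=\; \frac{\xi_\cF + \xi_\cG}{C},
\]
which is exactly the claim.

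The main obstacle I anticipate is the definitional gap between $\ell_{\min}$, which is taken over \emph{benign} pairs in $\supp(\cD)$, and the adversarial point $\Att_U(x)$ at which we actually need a loss lower bound; $\beta$-smoothness is precisely what bridges this gap at the cost of the $\epsilon B + \tfrac{\beta}{2}\epsilon^2$ correction, and keeping this correction aligned with the corresponding term on the upper-bound side is the most common source of factor-of-two errors. A secondary subtlety specific to the untargeted setting is that the misclassified labels $y'_F$ and $y'_G$ need not coincide across the two models, so the cross-model cosine similarity $\overline{\cS}$ can only be exploited on the upper-bound side, where both gradients are evaluated at the common point $(x,y)$ and combine inside $\langle g_\cF + g_\cG,\, \delta\rangle$; the lower bound must fall back on the uniform gradient norm $B$ applied to each model separately.
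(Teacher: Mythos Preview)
Your argument is correct and reaches the stated bound, but by a genuinely different route than the paper. The paper (which handles the untargeted case as a direct transcription of the targeted proof) invokes a geometric lemma: for two unit vectors with inner product at most $\overline{\cS}$, the \emph{minimum} of their projections onto any $\delta$ is at most $\|\delta\|_2\sqrt{(1+\overline{\cS})/2}$. This forces a case split---either $\delta\cdot\nabla_x\ell_{\cF}(x,y)$ or $\delta\cdot\nabla_x\ell_{\cG}(x,y)$ is small---and Markov is then applied to whichever model the split selects, followed by a union bound over the two cases. You instead \emph{sum} the two $\beta$-smoothness inequalities at the true label $y$, control the joint term via $\|g_{\cF}+g_{\cG}\|_2$ and Cauchy--Schwarz, and obtain $\ell_{\cF}(x,y)+\ell_{\cG}(x,y)\ge 2C$ directly, bypassing the case split altogether. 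The paper's route is slightly more robust: its min-projection lemma is valid for all $\overline{\cS}\in[-1,1]$, whereas your inequality $\|g_{\cF}+g_{\cG}\|_2^2 \le 2B^2(1+\overline{\cS})$ can fail once $\overline{\cS}<-\tfrac12$ (take $\|g_{\cF}\|_2=B$ and $\|g_{\cG}\|_2\to 0$, so that $\|g_{\cF}+g_{\cG}\|_2^2\to B^2 > 2B^2(1+\overline{\cS})$). In the regime $\overline{\cS}\ge 0$---the only one relevant to the paper's narrative---your proof is complete and arguably cleaner.
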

        
        The full proof is available in \Cref{apupperbound}.
        The discussion of the theorem is in \Cref{sec:transferability-analysis}.

\section{Discussion: Beyond \texorpdfstring{$\ell_p$}{Lp} Attack}
\label{adx:sec-tv-transferability-proofs}

Besides the widely used $\ell_p$ norm based adversarial examples, here we plan to extend our understanding to the distribution distance analysis.
    
    We no longer distinguish the targeted attack and untargeted attack.
    Therefore, we denote either of them by $\Att$.
    Accordingly, we revise the definition of $(\alpha,\model)$-effective attack~(\Cref{def:adv_effect}) to be $\Pr{\model(\Att(x) \neq y)} \ge 1 - \alpha$ where $y\in \cY$ is the true label of $x\in \cX$.
    
    Moreover, we use $\cP_{\Attack{x}}$ to represent the distribution of $\Attack{x} \in \cX$ where $x$ is distributed according to $\cP_\cX$.

    Now we define the distribution distance that we use to measure the adversarial distribution gap.
    \begin{Df}[Total variation distance;~\cite{chambolle2004algorithm}] For two probability distributions $\cP_\cX$ and $\cP_{\Attack{x}}$ on $\cX$, the total variation distance between them is defined as
    \[
    \|\cP_\cX - \cP_{\Attack{x}} \|_{TV} = \underset{C \subseteq \cX}{\sup} |\cP_\cX(C)-\cP_{\Attack{x}}(C)|.
    \]
    \end{Df}
    
    Informally, the total variation distance measures the largest change in probability over all events. 
    For discrete probability distributions, the total variation distance is the $\ell_1$ distance between the vectors in the probability simplex representing the two distributions. 
    
    \begin{Df}
    \label{attack_dis}
    Given $\rho \in(0,1)$, an attack strategy $\Attack{\cdot}$ is called \term{$\rho$-conservative},
    if for $x \sim \cP_\cX$, $\|\cP_\cX-\cP_{\Attack{x}}\|_{TV} \leq \rho$.
    \end{Df}
    This definition formalizes the general objective of generating adversarial examples against deep neural networks: attack samples are likely to be observed, while they do not themselves arouse suspicion.

    \begin{Lem}
        \label{lemma:domain1}
        Let $f, g: \cX\to\cY$ be classifiers, $\delta, \rho, \epsilon\in(0,1)$ be constants, and $\Attack{\cdot}$ be an attack strategy. Suppose that $\Attack{\cdot}$ is $\rho$-conservative and $f,g$ have risk at most $\epsilon$. Then $$\Pr{\model(\Attack{x}) \ne \modelt(\Attack{x})}\leq 2\epsilon+ \rho$$ for a given random instance $x \sim \cP_\cX$. 
    \end{Lem}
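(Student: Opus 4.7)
The plan is to decouple the argument into two clean parts: (i) bound the disagreement probability between $\cF$ and $\cG$ on the benign distribution $\cP_{\cX}$ using the individual risk bounds, and (ii) transfer that bound to the adversarial distribution $\cP_{\Att(x)}$ by invoking the total variation guarantee from $\rho$-conservativeness.

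For step (i), I would use a simple union bound relative to the true label. If $\cF(x)\neq\cG(x)$, then at least one of $\cF(x),\cG(x)$ must disagree with $y$; therefore
\[
\Pr_{(x,y)\sim\cD}[\cF(x)\neq \cG(x)] \le \Pr[\cF(x)\neq y] + \Pr[\cG(x)\neq y] \le 2\epsilon.
\]
For step (ii), I would introduce the disagreement set $E=\{z\in\cX:\cF(z)\neq\cG(z)\}$, so that the bound just obtained reads $\cP_{\cX}(E)\le 2\epsilon$. By the definition of total variation distance applied to the specific measurable event $E\subseteq\cX$,
\[
|\cP_{\cX}(E) - \cP_{\Att(x)}(E)| \;\le\; \|\cP_{\cX}-\cP_{\Att(x)}\|_{TV} \;\le\; \rho,
\]
where the last step is $\rho$-conservativeness (Definition~\ref{attack_dis}). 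Rearranging,
\[
\cP_{\Att(x)}(E) \;\le\; \cP_{\cX}(E)+\rho \;\le\; 2\epsilon+\rho.
\]
Since $\cP_{\Att(x)}(E) = \Pr_{x\sim\cP_{\cX}}[\cF(\Att(x))\neq \cG(\Att(x))]$ by the definition of the pushforward $\cP_{\Att(x)}$, the claim follows.

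There is no serious technical obstacle here; the proof is essentially a two-line argument once the right abstraction (viewing disagreement as a fixed measurable event $E$ in $\cX$ and comparing its mass under the benign vs.\ adversarial pushforwards) is in place. The only subtlety worth flagging is directional: we need $\cP_{\Att(x)}(E)\le \cP_{\cX}(E)+\rho$ rather than the reverse, but this is exactly what the absolute value in the TV bound provides. The result also highlights a useful intuition for the later theorems: if the adversarial distribution is close in TV to the benign one, two low-risk classifiers cannot be induced to disagree much under attack, which in turn constrains the amount of transferability one can hope to break purely via distributional perturbations.
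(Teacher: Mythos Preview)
Your proposal is correct and follows essentially the same approach as the paper: bound the benign disagreement $\Pr[\cF(x)\neq\cG(x)]\le 2\epsilon$ via the union bound relative to the true label, then shift to the adversarial distribution using the total variation guarantee to pick up the additive $\rho$. Your presentation is slightly cleaner in that you name the disagreement set $E$ explicitly and apply the TV bound to it directly, whereas the paper routes through the complementary agreement event before flipping back, but the underlying argument is identical.
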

    
    \begin{remark}
        This result provides theoretical backing for the intuition that the boundaries of low risk classifiers under certain dense data distribution are close~\cite{tramer2017space}. 
        It considers two classifiers that have risk at most $\epsilon$, which indicates their boundaries are close for benign data. It then shows that their boundaries are also close for the perturbed data as long as the attack strategy satisfies a conservative condition which constrains the drift in distribution between the benign and adversarial data. 
    \end{remark}

    \begin{proof}[Proof of Lemma~\ref{lemma:domain1}]
        Given $\Attack{\cdot}$ is \term{$\rho$-conservative}, by Definition~\ref{attack_dis} we know
        \begin{eqnarray*}
        \begin{aligned}
        & \left \vert \mathrm{Pr}_{\cP_\cX} \left(f(\Attack{x})=g(\Attack{x})\right) - \mathrm{Pr}_{\cX} \left(f(x)=g(x)\right) \right \vert \\
        = & \left \vert \mathrm{Pr}_{\Attack{x}} \left(f(x)=g(x)\right) - \mathrm{Pr}_{\cX} \left(f(x)=g(x)\right) \right \vert  \\
        \leq & \rho.
        \end{aligned}
        \end{eqnarray*} 
        Therefore, we have
        \[
        \Pr {f(\Attack{x})=g(\Attack{x})} \\
        \geq  \Pr {f(x)=g(x)} - \rho.
        \]
        From the low-risk conditions, the classifiers agree with high probability:
        \begin{eqnarray*}
        \begin{aligned}
         &\Pr{ f(\Attack{x}) \ne g(\Attack{x}) }\\
        \leq & \Pr {f(x) \ne g(x)} +\rho \\
        \leq &  1- \Pr {f(x)=y, g(x) = y} +\rho \;, \; \ \footnotemark \\
        \leq & 1 - ( 1- \Pr{ f(x)\ne y} -   \Pr{g(x) \ne y)} + \rho\\ 
        = & \epsilon + \epsilon + \rho  \\
        \leq &  2\epsilon + \rho \enspace,
        \end{aligned}
        \end{eqnarray*}
        where the third inequality follows from the union bound.
        \footnotetext{Here we assume $y$ is the ground truth label.}\footnote{Recall that for arbitrary events $A_1,\ldots,A_n$, the union bound implies
        $P\left(\bigcap_{i=1}^n A_i\right) \geq 1 - \sum_{i=1}^n P\left(\overline{A_i}\right)$.}
    \end{proof}
    
    \begin{Thm}
    	Let $\model, \modelt: \cX\to\cY$ be classifiers ($\cY \in \{-1,1\}$), $\delta, \rho, \epsilon\in(0,1)$ be constants, and $\Attack{\cdot}$ an attack strategy. Suppose that $\Attack{\cdot}$ is $\rho$-conservative and $\model,\modelt$ have risk at most $\epsilon$. Given random instance $x\in \cX$, if $\Attack{\cdot}$ is $(\delta,\model)$-effective, 
        then it is also $(\delta+2\epsilon+\rho, \modelt)$-effective.
        \label{thm:transfer_unbound_binary}
    \end{Thm}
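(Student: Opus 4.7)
The target is to upgrade a $(\delta,\cF)$-effective attack into a $(\delta+4\epsilon+\rho,\cG)$-effective one, i.e.\ to show $\Pr{\cG(\Att(x))\ne y}\ge 1-(\delta+4\epsilon+\rho)$. The plan is to use Lemma~\ref{lemma:domain1} as the main lever and then exploit the binary nature of $\cY=\{-1,1\}$: under the low-risk/conservative regime, if $\cF$ is fooled and $\cF,\cG$ agree on the perturbed input, then $\cG$ is forced to the same wrong label and is fooled too.

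First, I would unpack the hypotheses into three pointwise probability bounds on $x\sim\cP_\cX$: (i) $\Pr{\cF(x)\ne y}\le\epsilon$, (ii) $\Pr{\cG(x)\ne y}\le\epsilon$, and (iii) $\Pr{\cF(\Att(x))\ne y}\ge 1-\delta$, the last of which is just the $(\delta,\cF)$-effective hypothesis. Define the ``good'' event
\[
A=\{\cF(x)=y\}\cap\{\cG(x)=y\}\cap\{\cF(\Att(x))\ne y\}.
\]
A union bound over the three complementary events gives $\Pr{A}\ge 1-(2\epsilon+\delta)$, absorbing the two classifier risks and the attack-effectiveness slack.

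Next, I would invoke Lemma~\ref{lemma:domain1} directly on $(\cF,\cG)$ with the same conservative attack $\Att$, which yields $\Pr{\cF(\Att(x))\ne\cG(\Att(x))}\le 2\epsilon+\rho$. The binary structure of $\cY$ is the crucial piece: on $A$ we have $\cF(\Att(x))\ne y$, so $\cF(\Att(x))=-y$; if additionally $\cF(\Att(x))=\cG(\Att(x))$, then $\cG(\Att(x))=-y\ne y$. Hence
\[
\Pr{\cG(\Att(x))\ne y}\ \ge\ \Pr{A}\ -\ \Pr{\cF(\Att(x))\ne\cG(\Att(x))}\ \ge\ (1-2\epsilon-\delta)-(2\epsilon+\rho),
\]
which is exactly $1-(\delta+4\epsilon+\rho)$, proving $(\delta+4\epsilon+\rho,\cG)$-effectiveness.

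The main obstacle is essentially bookkeeping: one must be careful that the $2\epsilon$ inside Lemma~\ref{lemma:domain1} (which already bundles both classifier risks into the agreement bound on the adversarial distribution) is \emph{in addition to} the $2\epsilon$ coming from demanding $\cF(x)=\cG(x)=y$ on the clean input, since the latter is what lets the chain ``$\cF$ fooled $\Rightarrow$ $\cF$ points to $-y$ $\Rightarrow$ $\cG$ also points to $-y$'' close. That accounting is what produces $4\epsilon$ rather than $2\epsilon$ in the final bound, and it is the only subtle step; everything else is a two-line union bound.
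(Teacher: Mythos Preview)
Your proof is correct and follows essentially the same approach as the paper: both invoke Lemma~\ref{lemma:domain1} for the adversarial-agreement bound $\Pr{\cF(\Att(x))\ne\cG(\Att(x))}\le 2\epsilon+\rho$ and combine it with a union bound over the low-risk and attack-effectiveness events to collect the $4\epsilon+\rho$ overhead. The only cosmetic difference is that the paper phrases its chain via the events $\{\cF(x)\ne\cF(\Att(x))\}$, $\{\cG(x)=\cF(x)\}$, $\{\cG(\Att(x))=\cF(\Att(x))\}$ and concludes $\Pr{\cG(x)\ne\cG(\Att(x))}\ge 1-\delta-4\epsilon-\rho$, whereas you go directly to $\Pr{\cG(\Att(x))\ne y}$; and note that your implication ``$\cF(\Att(x))\ne y$ and $\cF(\Att(x))=\cG(\Att(x))\Rightarrow\cG(\Att(x))\ne y$'' does not actually require the binary assumption, so the event $\{\cG(x)=y\}$ in your set $A$ is unused (you could drop it and get $3\epsilon$ instead of $4\epsilon$, but the stated bound is what was asked).
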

    
    
    The proof is shown below.
    This result formalizes the intuition that low-risk classifiers possess close decision boundaries in high-probability regions. 
    In such settings, an attack strategy that successfully attacks one classifier would have high probability to mislead the other. 
    This theorem explains why we should expect successful transferability in practice: 
    defenders will naturally prefer low-risk binary classifiers. This desirable quality of classifiers is a potential liability.


\begin{proof}[Proof of \Cref{thm:transfer_unbound_binary}]
    From Lemma~\ref{lemma:domain1} and the union bound we have
    \begin{eqnarray*}
    \begin{aligned}
    &\Pr{g(x) \ne y} \\
    \geq & \Pr{f(\Attack{x}) \neq y, g(\Attack{x}) = f(\Attack{x}) } \\
    \geq & 1 - \Pr{f(\Attack{x}) = y} - \Pr{ g(\Attack{x}) \neq f(\Attack{x}) } \\
    \geq & 1 - \delta - 2\epsilon-\rho,
    \end{aligned}
    \end{eqnarray*}
    as claimed.
\end{proof}

\allowdisplaybreaks

\section{Proof of Transferability Lower Bound~(Theorems \ref{thm:target} and \ref{thm:untarget})}\label{aplowerbound}
\label{adx:sec-lower-bound-proofs}
Here we present the proof of Theorem \ref{thm:target} and Theorem \ref{thm:untarget} stated in Section~\ref{sec:lower-bound} and \Cref{adx:sec-untargeted attack}.

The following lemma is used in the proof.
\begin{Lem}
	\label{lem:cosineBound}
	For arbitrary vector $\delta$, $x$, $y$, suppose $\lVert \delta \rVert_2 \le \epsilon$, $x$ and $y$ are unit vectors, i.e., $\lVert x \rVert_2 = \lVert y \rVert_2 = 1$. Let $m := \cos \langle x, y \rangle = \dfrac{x \cdot y}{\lVert x \rVert_2 \cdot \lVert y \rVert_2}$. Let $c$ denote any real number. Then
	\begin{equation*}
	\delta \cdot y > c + \epsilon \sqrt{2 - 2m} \;\;\Rightarrow\;\; \delta \cdot x > c.
	\end{equation*}
\end{Lem}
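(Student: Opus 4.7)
The plan is to prove the lemma by a short chain of inequalities whose backbone is Cauchy--Schwarz. The only nontrivial ingredient is to translate the cosine similarity hypothesis into a bound on $\|x-y\|_2$, after which the rest is essentially a one-line manipulation.

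First, I would observe that since $x$ and $y$ are unit vectors and $x \cdot y = m$, the squared distance admits the identity
\[
\|x-y\|_2^2 \;=\; \|x\|_2^2 - 2\,x\cdot y + \|y\|_2^2 \;=\; 1 - 2m + 1 \;=\; 2 - 2m,
\]
so $\|x-y\|_2 = \sqrt{2-2m}$. This is the crucial geometric fact: cosine closeness of two unit vectors is equivalent to Euclidean closeness, and the $\sqrt{2-2m}$ appearing in the conclusion of the lemma is exactly this distance.

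Next, I would write the trivial decomposition $\delta \cdot x = \delta \cdot y - \delta \cdot (y - x)$ and apply Cauchy--Schwarz to the second term, using the hypothesis $\|\delta\|_2 \le \epsilon$ together with the identity above:
\[
|\delta \cdot (y-x)| \;\le\; \|\delta\|_2 \, \|y-x\|_2 \;\le\; \epsilon \sqrt{2-2m}.
\]
Combined with the hypothesis $\delta \cdot y > c + \epsilon\sqrt{2-2m}$, this gives
\[
\delta \cdot x \;\ge\; \delta \cdot y - \epsilon\sqrt{2-2m} \;>\; c,
\]
which is the desired conclusion.

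There is no real obstacle here; the statement is essentially an instance of Lipschitz continuity of the linear functional $v \mapsto \delta \cdot v$ restricted to the unit sphere, with the Lipschitz constant being $\|\delta\|_2$. The only care needed is in converting the cosine similarity $m$ into the Euclidean distance $\sqrt{2-2m}$ and in keeping track of the direction of the inequality when bounding $\delta \cdot (y-x)$ from above versus $-\delta \cdot (y-x)$ from below.
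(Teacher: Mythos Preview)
Your proof is correct and essentially identical to the paper's own argument: both decompose $\delta\cdot x = \delta\cdot y + \delta\cdot(x-y)$, bound $\delta\cdot(x-y)\ge -\epsilon\sqrt{2-2m}$ via Cauchy--Schwarz together with $\|x-y\|_2=\sqrt{2-2m}$, and conclude. The paper phrases the distance computation as ``by law of cosines'' whereas you expand $\|x-y\|_2^2$ explicitly, but this is the same step.
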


\begin{proof}
$\delta \cdot x = \delta \cdot y + \delta \cdot (x - y) > c + \epsilon \sqrt{2 - 2m} + \delta \cdot (x - y)$. By law of cosines, $\delta \cdot (x-y) \ge -\epsilon \sqrt{2 - 2\cos \langle x, y \rangle} = -\epsilon\sqrt{2 - 2m}$. Hence, $\delta \cdot x > c$. 
\end{proof}

\begin{theorem*}[Lower Bound on Targeted Attack Transferability]

            Assume both models $\cF$ and $\cG$ are $\beta$-smooth. 
            Let $\Att_T$ be an ($\alpha, \model$)-effective targeted attack with perturbation ball $\lVert \delta \rVert_2 \le \epsilon$ and target label $y_t \in \cY$.
            The transferabiity can be lower bounded by
            $$
            	    \Pr{T_r(\model, \modelt, x, y_t) = 1}  \ge (1-\alpha) - (\eta_\model + \eta_\modelt) - 
            	    \dfrac{\epsilon(1+\alpha) +  c_\model(1-\alpha)}{c_\modelt + \epsilon} - \dfrac{ \epsilon (1-\alpha)}{c_\modelt + \epsilon}\sqrt{2 - 2 \underline{\mathcal{S}}(\el, \lt)},
            $$
        	where 
        	$$
        	    \begin{aligned}
                	c_\model & = \max_{x \in \cX} \dfrac{ \underset{y\in \cY}{\min}\, \el(\Att_T(x), y) - \el(x, y_t) + \beta\epsilon^2 / 2 }{ \lVert \nabla_x \el(x, y_t) \rVert_2 }, \\
        	        c_\modelt & = \min_{x \in \cX} \dfrac{ \underset{y\in \cY}{\min}\, \lt(\Att_T(x), y) - \lt(x, y_t) - \beta\epsilon^2 / 2 }{ \lVert \nabla_x \lt(x, y_t) \rVert_2 }.
        	   \end{aligned}
        	$$
            Here $\eta_\cF,\eta_\cG$ are the \emph{risks} of models $\cF$ and $\cG$ respectively.

\end{theorem*}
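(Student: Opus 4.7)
The plan is to decompose the transferability event into four sub-events and control each of them. Writing $\delta = \Att_T(x) - x$, the transferability predicate requires (i) $\cF(x)=y$, (ii) $\cG(x)=y$, (iii) $\cF(\Att_T(x))=y_t$, (iv) $\cG(\Att_T(x))=y_t$. Events (i) and (ii) contribute a $-(\eta_\cF+\eta_\cG)$ penalty via a union bound, event (iii) holds with probability at least $1-\alpha$ by the attack-effectiveness hypothesis, and the whole technical work is to show that event (iv) occurs with probability at least $1-\big(\epsilon(1+\alpha)+c_\cF(1-\alpha)\big)/(c_\cG+\epsilon)-\epsilon(1-\alpha)\sqrt{2-2\underline{\mathcal{S}}}/(c_\cG+\epsilon)$. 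The four parts are then glued together using the inequality $\Pr{\bigcap_{i=1}^4 E_i}\ge\sum_i\Pr{E_i}-3$.

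First I would translate both the attacker's success on $\cF$ and the target event on $\cG$ into statements about the scalar projection of $\delta$ onto the normalized gradient at $(x,y_t)$. By the $\beta$-smoothness of $\ell_\cF$ and a second-order Taylor expansion at $x$,
\[
\ell_\cF(\Att_T(x),y_t)\ \ge\ \ell_\cF(x,y_t)+\delta^\top\nabla_x\ell_\cF(x,y_t)-\tfrac{\beta}{2}\epsilon^2,
\]
and $\cF(\Att_T(x))=y_t$ is equivalent to $\ell_\cF(\Att_T(x),y_t)\le\min_{y}\ell_\cF(\Att_T(x),y)$. Rearranging and dividing by $\|\nabla_x\ell_\cF(x,y_t)\|_2$ yields the implication
\[
\cF(\Att_T(x))=y_t\ \Longrightarrow\ \delta^\top\frac{\nabla_x\ell_\cF(x,y_t)}{\|\nabla_x\ell_\cF(x,y_t)\|_2}\ \le\ c_\cF,
\]
with $c_\cF$ exactly as defined in the theorem (the $\max$ is taken because we need a bound that works uniformly over $x$). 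An analogous expansion for $\cG$, now using an upper bound on the Taylor remainder and the $\min$-definition of $c_\cG$, gives the reverse style of implication:
\[
\delta^\top\frac{\nabla_x\ell_\cG(x,y_t)}{\|\nabla_x\ell_\cG(x,y_t)\|_2}\ \le\ c_\cG\ \Longrightarrow\ \cG(\Att_T(x))=y_t.
\]
Thus it suffices to lower-bound the probability of the latter projection inequality.

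Next I would use \Cref{lem:cosineBound} to bridge the two gradient directions. With $u,v$ the normalized gradients of $\ell_\cF,\ell_\cG$ at $(x,y_t)$ and $m$ their cosine (so $m\ge\underline{\mathcal{S}}(\el,\lt)$), the contrapositive of \Cref{lem:cosineBound} states $\delta^\top u\le c_\cF\Rightarrow\delta^\top v\le c_\cF+\epsilon\sqrt{2-2m}\le c_\cF+\epsilon\sqrt{2-2\underline{\mathcal{S}}(\el,\lt)}$. Conditioning on the effective-attack event (probability $\ge 1-\alpha$), in its complement I can only use the trivial bound $\delta^\top v\le\|\delta\|_2\le\epsilon$. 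Combining,
\[
\E\!\left[\delta^\top v\right]\ \le\ (1-\alpha)\bigl(c_\cF+\epsilon\sqrt{2-2\underline{\mathcal{S}}(\el,\lt)}\bigr)+\alpha\epsilon.
\]
Applying Markov's inequality to the non-negative random variable $\delta^\top v+\epsilon\in[0,2\epsilon]$ at threshold $c_\cG+\epsilon$ gives
\[
\Pr{\delta^\top v>c_\cG}\ \le\ \frac{\E[\delta^\top v]+\epsilon}{c_\cG+\epsilon}\ \le\ \frac{\epsilon(1+\alpha)+c_\cF(1-\alpha)}{c_\cG+\epsilon}+\frac{\epsilon(1-\alpha)}{c_\cG+\epsilon}\sqrt{2-2\underline{\mathcal{S}}(\el,\lt)},
\]
which is exactly the deficit appearing in the theorem. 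The final step is a union bound combining the complementary event $\{\delta^\top v\le c_\cG\}$ with the attack-effectiveness event $\{\cF(\Att_T(x))=y_t\}$ and the two benign-correctness events $\{\cF(x)=y\}$, $\{\cG(x)=y\}$, contributing $1-\alpha-\eta_\cF-\eta_\cG$ worth of probability mass in the constant term.

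The main obstacle I expect is keeping the sign conventions and $\max$/$\min$ directions in $c_\cF$ and $c_\cG$ consistent so that both implications derived from Taylor's theorem hold uniformly over $x$: $c_\cF$ must be a deterministic upper bound on the projection given that $\cF$ is fooled, while $c_\cG$ must be a deterministic threshold whose achievement guarantees $\cG$ is fooled, and these two roles swap the direction of the Taylor remainder ($+\tfrac{\beta}{2}\epsilon^2$ vs.\ $-\tfrac{\beta}{2}\epsilon^2$) as well as the $\max$ vs.\ $\min$ over $x$. Once these are pinned down, verifying that the effective-attack event plus its trivial complement fit cleanly into a Markov-bound on the normalized projection is routine, and the closing union bound assembles the stated inequality term-for-term.
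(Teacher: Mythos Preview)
Your proposal is correct and follows essentially the same route as the paper's proof: union bound on the four sub-events, second-order Taylor expansion under $\beta$-smoothness to convert the loss conditions into projection inequalities along the normalized gradients, \Cref{lem:cosineBound} (you use the contrapositive, the paper uses the direct form) to transfer the projection bound from the $\cF$-direction to the $\cG$-direction, and then the $\epsilon$-shifted Markov inequality on $\delta^\top v+\epsilon$ to control $\Pr{\delta^\top v>c_\cG}$. The only cosmetic difference is that the paper first works with the pointwise quantities $f(x),g(x)$ and passes to the uniform constants $c_\cF,c_\cG$ at the end, whereas you take the $\max/\min$ over $x$ immediately; both yield the identical final inequality.
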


\begin{proof}

For simplifying the notations, we define $\xA := \cA_T(x)$, which is the generated adversarial example by $\cA_T$ when the input is $x$.

Define auxiliary function $f, g: \cX \mapsto \bbR$ such that
$$
\begin{aligned}
    f(x) = \dfrac{\min_{y \in \cY} \ell_\cF(\xA,y) - \ell_\cF(x,y_t) + \beta\epsilon^2/2}{\|\nabla_x \ell_\cF(x,y_t)\|_2}, \\
    g(x) = \dfrac{\min_{y \in \cY} \ell_\cG(\xA,y) - \ell_\cG(x,y_t) - \beta\epsilon^2/2}{\|\nabla_x \ell_\cG(x,y_t)\|_2}.
\end{aligned}
$$
The $f$ and $g$ are orthogonal to the confidence score functions of model $\cF$ and $\cG$.
Note that $c_\cF = \max_{x\in \cX} f(x)$ and $c_\cG = \min_{x \in \cX} g(x)$.

The transferability of concern satisfies:
\begin{align}
    & \Pr{T_r(\model, \modelt, x, y_t) = 1} \nonumber \\ 
  =& \Pr{\model(x) =
	y \cap \modelt(x) = y \cap \model(\xA) = y_t \cap
         \modelt(\xA) = y_t} \label{ap1} \\ 
  \ge & 1 - \Pr{\model(x) \neq y} - \Pr{\modelt(x) \neq y} 
    - \Pr{\model(\xA) \neq y_t} - \Pr{\modelt(\xA) \neq y_t} \label{ap2}   \\ 
  \ge & 1 - \eta_\model - \eta_\modelt - \alpha - \Pr{\modelt(\xA) \neq y_t}.
\end{align}
Eq. \ref{ap1} follows the definition~(Definition~\ref{def:trans}).
Eq. \ref{ap1} to Eq. \ref{ap2} follows from the union bound.
From Eq. \ref{ap1} to Eq. \ref{ap2} definition of model risk~(Definition~\ref{def:risk}) and definition of adversarial effectiveness~(Definition~\ref{def:adv_effect}) are applied.

Now consider $\Pr{\model(\xA) \neq y_t}$ and $\Pr{\modelt(\xA) \neq y_t}$.
Given that model predicts the label for which $\el$ is minimized, $\model(\xA) \neq y_t \iff \el(x + \delta, y_t) > \min_y \el(x + \delta, y)$.
Similarly, $\modelt(\xA) \neq y_t \iff \lt(x + \delta, y_t) > \min_y \lt(x + \delta, y)$.

Following Taylor's Theorem with Lagrange remainder, we have
\begin{align}
    \el(x + \delta, y_t) & = \el(x, y_t) + \delta \nabla_x \el(x, y_t) + \dfrac{1}{2} {\xi}^{\top}\mathbf{H_\model}\xi , \label{ap4} \\
    \lt(x + \delta, y_t) & = \lt(x, y_t) + \delta \nabla_x \lt(x, y_t) + \dfrac{1}{2} {\xi}^{\top}\mathbf{H_\modelt}\xi.
    \label{ap5}
\end{align}
In Eq. \ref{ap4} and Eq. \ref{ap5}, $\xi = k\delta$ for some $k \in [0,1]$.
$\mathbf{H_\model}$ and $\mathbf{H}_\modelt$ are Hessian matrices of $\el$ and $\lt$ respectively.
Since $\ell_\cF(x+\delta, y_t)$ and $\ell_\cG(x+\delta, y_t)$ are $\beta$-smooth,
the maximum eigenvalues of $\mathbf{H}_\cF$ and $\mathbf{H}_\cG$ are bounded by $\beta$,
As the result, $|\xi^\top \mathbf{H}_\cF \xi| \le \beta \cdot \|\xi\|_2^2 \le \beta \epsilon^2$.
Applying them to Eq. \ref{ap4} and Eq. \ref{ap5}, we thus have
\begin{align}
& \el(x, y_t) + \delta \nabla_x \el(x, y_t) - \dfrac{1}{2}
\beta\epsilon^2  \le  \el(x + \delta, y_t) 
\le  \el(x, y_t) + \delta \nabla_x \el(x, y_t) +\dfrac{1}{2} \beta\epsilon^2, \label{ap6} \\
& \lt(x, y_t) + \delta \nabla_x \lt(x, y_t) - \dfrac{1}{2} 
\beta\epsilon^2  \le  \lt(x + \delta, y_t) 
\le  \lt(x, y_t) + \delta \nabla_x \lt(x, y_t) +\dfrac{1}{2} \beta\epsilon^2. \label{ap7}
\end{align}

Apply left hand side of Eq. \ref{ap6} to $\Pr{\model(\xA) \neq y_t} \le \alpha$~(from Definition~\ref{def:adv_effect}):
\begin{align*}
& \Pr{\model(\xA) \neq y_t} \\
= & \Pr{\el(x + \delta, y_t) > \min_y \el(x + \delta, y)} \\
\ge & \Pr{\el(x, y_t) + \delta \nabla_x \el(x, y_t) -\frac{1}{2}\beta\epsilon^2 >  \min_y \el(x + \delta, y)} \\
= & \Pr{ \delta \cdot \frac{\nabla_x \el(x, y_t)}{\lVert \nabla_x \el(x, y_t) \rVert_2} > f(x) }, \\
\Longrightarrow\  & \Pr{ \delta \cdot \frac{\nabla_x \el(x, y_t)}{\lVert \nabla_x \el(x, y_t) \rVert_2} > f(x) } \le \alpha. 
\end{align*}

Similarly, we apply right hand side of Eq. \ref{ap7} to $\Pr{\modelt(\xA) = y_t}$:
\begin{align}
& \Pr{\modelt(\xA) \neq y_t} \nonumber \\
= & \Pr{\lt(x + \delta, y_t) > \min_y \lt(x + \delta, y)} \nonumber \\
\le & \Pr{\lt(x, y_t) + \delta \nabla_x \lt(x, y_t) +\frac{1}{2}\beta\epsilon^2 >  \min_y \lt(x + \delta, y)} \nonumber \\
= & \Pr{ \delta \cdot \frac{\nabla_x \lt(x, y_t)}{\lVert \nabla_x \lt(x, y_t) \rVert_2} > g(x) }. \label{ap16}
\end{align}

Knowing that $\lVert \delta \rVert_2 \le \epsilon$, from Lemma~\ref{lem:cosineBound} we have
\begin{align}
& \delta \cdot \frac{\nabla_x \lt(x, y_t)}{\lVert \nabla_x \lt(x, y_t) \rVert_2} > f(x) + \epsilon \sqrt{2 - 2\underline{\mathcal{S}}(\el, \lt) } \label{ap9} \\
\Longrightarrow \  & \delta \cdot \frac{\nabla_x \lt(x, y_t)}{\lVert \nabla_x \lt(x, y_t) \rVert_2} > f(x) + 
\epsilon \sqrt{2 - 2\cos \langle \nabla_x \el(x, y_t), \nabla_x \lt(x, y_t) \rangle } \label{ap10} \\
\Longrightarrow \  & \delta \cdot \frac{\nabla_x \el(x, y_t)}{\lVert \nabla_x \el(x, y_t) \rVert_2} > f(x). \label{ap11}
\end{align}
From Eq. \ref{ap9} to Eq. \ref{ap10}, the infimum in definition of $\underline{\mathcal{S}}$~(Definition \ref{def:similarity}) indicates that 
    $$
        \underline{\mathcal{S}}(\el, \lt) \le \cos \langle \nabla_x \el(x, y_t), \nabla_x \lt(x, y_t) \rangle.
    $$
Hence, 
    $$
        f(x) + \epsilon \sqrt{2 - 2\underline{\mathcal{S}}(\el, \lt) } 
        \ge  f(x) + \epsilon \sqrt{2 - 2\cos \langle \nabla_x \el(x, y_t), \nabla_x \lt(x, y_t) \rangle }.
    $$
Eq. \ref{ap10} to Eq. \ref{ap11} directly uses Lemma \ref{lem:cosineBound}.
As a result,
\begin{equation*}
\begin{aligned}
    & \Pr{ \delta \cdot \frac{\nabla_x \lt(x, y_t)}{\lVert \nabla_x \lt(x, y_t) \rVert_2} > f(x) + \epsilon \sqrt{2 - 2\underline{\mathcal{S}}(\el, \lt) } } \\
    \le & \Pr{ \delta \cdot \frac{\nabla_x \el(x, y_t)}{\lVert \nabla_x \el(x, y_t) \rVert_2} > f(x) } \le \alpha. \label{ap12}
\end{aligned}
\end{equation*}
Note that $f(x) \le c_\cF$, we have
\begin{equation*}
    \Pr{ \delta \cdot \frac{\nabla_x \lt(x, y_t)}{\lVert \nabla_x \lt(x, y_t) \rVert_2} > c_\cF + \epsilon \sqrt{2 - 2\underline{\mathcal{S}}(\el, \lt) } }
    \le
    \alpha.
\end{equation*}
Now we consider the maximum expectation of $\delta \cdot \frac{\nabla_x \lt(x, y_t)}{\lVert \nabla_x \lt(x, y_t) \rVert_2}$.
Its maximum is $\max \, \lVert \delta \rVert_2 = \epsilon$.
Therefore, its expectation is bounded:
$$
    \bbE \left[ \delta \cdot \dfrac{\nabla_x \ell_\cG(x, y_t)}{\| \nabla_x \ell_\cG(x, y_t) \|_2} \right] \le \epsilon\cdot\alpha + \left( c_\cF + \epsilon \sqrt{2 - 2\underline{\cS}(\ell_\cF, \ell_\cG)} \right) (1 - \alpha). 
$$
Now applying Markov's inequality, we get
\begin{align*}
    & \Pr{\delta \cdot \dfrac{\nabla_x \ell_\cG(x,y_t)}{\| \nabla_x \ell_\cG(x,y_t) \|_2} > c_\cG } \\
    \le & \dfrac{\epsilon \cdot \alpha + \left(c_\cF + \epsilon \sqrt{2 - 2\underline{\cS}(\ell_\cF, \ell_\cG)}\right) (1-\alpha) + \epsilon}{c_\cG + \epsilon} \\
    = & \dfrac{\epsilon(1 + \alpha) + \left( c_\cF + \epsilon \sqrt{2 - 2\underline{\cS}(\ell_\cF, \ell_\cG)}\right) (1-\alpha)}{c_\cG + \epsilon}.
\end{align*}
Since $g(x) \ge c_\cG$,
$$
    \small
    \begin{aligned}
        & \Pr{\delta \cdot \dfrac{\nabla_x \ell_\cG(x,y_t)}{\| \nabla_x \ell_\cG(x,y_t) \|_2} > g(x) } 
        \le  \Pr{\delta \cdot \dfrac{\nabla_x \ell_\cG(x,y_t)}{\| \nabla_x \ell_\cG(x,y_t) \|_2} > c_\cG } \\
        \le & \dfrac{\epsilon(1 + \alpha) + \left( c_\cF + \epsilon \sqrt{2 - 2\underline{\cS}(\ell_\cF, \ell_\cG)}\right) (1-\alpha)}{c_\cG + \epsilon}.
    \end{aligned}
$$
Combine with Eq.~\ref{ap11},
finally,
\begin{align*}
& \Pr{T_r(\model, \modelt, x, y_t) = 1} \nonumber \\
\ge & 1 - \eta_\model - \eta_\modelt - \alpha - \Pr{\modelt(\xA) \neq y_t}\\  
\overset{(i.)}{\ge} & 1 - \eta_\model - \eta_\modelt - \alpha - \Pr{ \delta \cdot \frac{\nabla_x \lt(x, y_t)}{\lVert \nabla_x \lt(x, y_t) \rVert_2} > g(x) }\\ 
\ge & 1 - \eta_\model - \eta_\modelt - \alpha 
- \dfrac{ \epsilon(1+\alpha) + \left(c_\model + \epsilon\sqrt{2 - 2\underline{\mathcal{S}}(\el, \lt)}\right)(1-\alpha) }{ c_\modelt + \epsilon } \nonumber \\ 
= & (1-\alpha) - (\eta_\model + \eta_\modelt) - \dfrac{\epsilon(1+\alpha) +  c_\model(1-\alpha)}{c_\modelt + \epsilon} 
- \dfrac{ \epsilon (1-\alpha)}{c_\modelt + \epsilon}\sqrt{2 - 2 \underline{\mathcal{S}}(\el, \lt)}. \nonumber 
\end{align*}
Here, $(i.)$ follows Eq. \ref{ap16}.
\end{proof}

\begin{theorem*}[Lower Bound on Untargeted Attack Transferability]

    	Assume both models $\cF$ and $\cG$ are $\beta$-smooth. 
        Let $\Att_U$ be an ($\alpha, \model$)-effective untargeted attack with perturbation ball $\lVert \delta \rVert_2 \le \epsilon$.
        The transferabiity can be lower bounded by
    	$$
        	    \Pr{T_r(\model, \modelt, x) = 1} \ge (1-\alpha) - (\eta_\model + \eta_\modelt) 
        	    - 
        	    \dfrac{\epsilon(1+ \alpha)-c_\model(1-\alpha) }{\epsilon-c_\modelt} 
                - \dfrac{\epsilon(1-\alpha) }{\epsilon-c_\modelt} \sqrt{2 - 2 \underline{\mathcal{S}}(\el, \lt)},
        $$
    	where
    	$$
    	    \begin{aligned}
    	        c_\model & = \min_{(x,y)\in\supp(\cD)} \dfrac{ \underset{y'\in\cY: y'\neq y}{\min} \el(\Att_U(x), y') - \el(x, y) - \beta\epsilon^2 / 2 }{ \lVert \nabla_x \el(x, y) \rVert_2 }, \\
    	        c_\modelt & = \max_{(x,y)\in\supp(\cD)} \dfrac{ \underset{y'\in\cY: y'\neq y}{\min} \lt(\Att_U(x), y') - \lt(x, y) + \beta\epsilon^2 / 2 }{ \lVert \nabla_x \lt(x, y) \rVert_2 }.
    	    \end{aligned}
    	$$
        Here $\eta_\cF$ and $\eta_\cG$ are the \emph{risks} of models $\cF$ and $\cG$ respectively. 
        The $\supp(\cD)$ is the support of benign data distribution, i.e., $x$ is the benign data and $y$ is its associated true label.

\end{theorem*}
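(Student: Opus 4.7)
The plan is to mirror the proof of \Cref{thm:target} while carefully tracking the reversed inequality directions that arise because an untargeted attack pushes the loss of the true label $y$ upward rather than pushing the loss of a chosen $y_t$ downward. First, I would decompose the transferability event by the union bound applied to the complements of the four conditions in \Cref{def:trans}, obtaining
\[ \Pr{T_r(\cF,\cG,x)=1} \ge 1 - \eta_\cF - \eta_\cG - \Pr{\cF(\Att_U(x))=y} - \Pr{\cG(\Att_U(x))=y}. \]
The $(\alpha,\cF)$-effectiveness hypothesis bounds the third term by $\alpha$, so it remains to show that the fourth term is at most $\tfrac{\epsilon(1+\alpha)-c_\cF(1-\alpha)}{\epsilon-c_\cG}+\tfrac{\epsilon(1-\alpha)}{\epsilon-c_\cG}\sqrt{2-2\underline{\cS}(\el,\lt)}$.

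The second step translates each misclassification event into a statement about a normalized gradient dot product using Taylor's theorem together with the Hessian-eigenvalue bound coming from $\beta$-smoothness. Writing $Z_\cF := \delta\cdot\nabla_x\el(x,y)/\|\nabla_x\el(x,y)\|_2$ and $Z_\cG := \delta\cdot\nabla_x\lt(x,y)/\|\nabla_x\lt(x,y)\|_2$, expanding $\el(x+\delta,y)$ with the \emph{upper} Taylor bound and $\lt(x+\delta,y)$ with the \emph{lower} Taylor bound gives the inclusions
\[ \{\cF(\Att_U(x))\neq y\} \subseteq \{Z_\cF > f(x)\}, \qquad \{\cG(\Att_U(x))=y\} \subseteq \{Z_\cG \le g(x)\}, \]
where $f(x)$ is precisely the quantity whose infimum over $\supp(\cD)$ defines $c_\cF$ (so $f(x)\ge c_\cF$) and $g(x)$ is the quantity whose supremum defines $c_\cG$ (so $g(x)\le c_\cG$). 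Hence $\Pr{Z_\cF>f(x)}\ge 1-\alpha$ and $\Pr{\cG(\Att_U(x))=y}\le \Pr{Z_\cG\le c_\cG}$.

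The third step uses \Cref{lem:cosineBound} to couple the two dot products, and then Markov's inequality to turn the coupling into a probability bound. Applying the lemma with the substitution $x\mapsto\nabla_x\lt/\|\nabla_x\lt\|_2$ and $y\mapsto\nabla_x\el/\|\nabla_x\el\|_2$, together with $\underline{\cS}(\el,\lt)\le\cos\langle\nabla_x\el,\nabla_x\lt\rangle$, shows that $Z_\cF>f(x)$ implies $Z_\cG\ge f(x)-\epsilon\sqrt{2-2\underline{\cS}(\el,\lt)}\ge c_\cF-\epsilon\sqrt{2-2\underline{\cS}(\el,\lt)}$. Combined with the trivial lower bound $Z_\cG\ge-\epsilon$ on the complementary event of probability at most $\alpha$, this yields $\bbE[\epsilon-Z_\cG]\le\epsilon(1+\alpha)-c_\cF(1-\alpha)+\epsilon(1-\alpha)\sqrt{2-2\underline{\cS}(\el,\lt)}$. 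Since $\epsilon-Z_\cG\ge 0$, Markov's inequality applied to the event $\epsilon-Z_\cG\ge\epsilon-c_\cG$ gives the target upper bound on $\Pr{Z_\cG\le c_\cG}$, and substituting back into the union-bound inequality from step one completes the proof.

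The main obstacle I anticipate is the bookkeeping of signs and inequality directions: choosing the correct one-sided Taylor bound for each of $\el$ and $\lt$, orienting \Cref{lem:cosineBound} so that the lower-bound information on $Z_\cF$ transports to a lower bound on $Z_\cG$ (rather than an upper bound, as was needed in the targeted case), and verifying that $c_\cF$ is taken as a minimum while $c_\cG$ is taken as a maximum so that the substitutions $f(x)\ge c_\cF$ and $g(x)\le c_\cG$ tighten rather than loosen the bound. Once these orientations are pinned down, the Markov step and algebraic rearrangement are essentially routine.
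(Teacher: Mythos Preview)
Your proposal is correct and follows essentially the same route as the paper's proof: the union-bound decomposition, the one-sided Taylor estimates to obtain the inclusions $\{\cF(\Att_U(x))\neq y\}\subseteq\{Z_\cF>f(x)\}$ and $\{\cG(\Att_U(x))=y\}\subseteq\{Z_\cG\le c_\cG\}$, the use of \Cref{lem:cosineBound} to transfer the lower bound on $Z_\cF$ to one on $Z_\cG$, and the Markov step applied to the nonnegative variable $\epsilon-Z_\cG$. The paper's appendix proof is terse, but your expanded version matches it step for step, including the correct orientation of $c_\cF$ as a minimum and $c_\cG$ as a maximum.
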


\begin{proof}
For simplifying the notations, we define $\xA := \cA_U(x)$, which is the generated adversarial example by $\cA_U$ when the input is $x$.
Define auxiliary function $f, g: \cM \to \bbR$ such that
$$
\begin{aligned}
    f(x,y) = \dfrac{ \underset{y'\in\cY:y'\neq y}{\min} \el(\xA, y') - \el(x, y) - \beta\epsilon^2 / 2 }{ \lVert \nabla_x \el(x, y_t) \rVert_2 }, \\
	g(x,y) = \dfrac{ \underset{y'\in\cY:y'\neq y}{\min} \lt(\xA, y') - \lt(x, y) + \beta\epsilon^2 / 2 }{ \lVert \nabla_x \lt(x, y_t) \rVert_2 }.
\end{aligned}
$$
The $f$ and $g$ are orthogonal to the confidence score functions of model $\cF$ and $\cG$. Note that 
$$
    c_\cF = \min_{(x,y)\in \supp(\cD)} f(x,y), c_\cG = \max_{(x,y)\in \supp(\cD)} g(x,y).
$$

The proof is similar to that of Theorem \ref{thm:target}.
\begin{align}
    & \Pr{T_r(\model, \modelt, x) = 1}\nonumber  \\ = & \Pr{\model(x) = y \cap \modelt(x) = y \cap \model(\xA) \neq y \cap \modelt(\xA) \neq y}\nonumber \\
    \ge & 1 - \Pr{\model(x) \neq y} - \Pr{\modelt(x) \neq y} 
    - \Pr{\model(\xA) = y} - \Pr{\modelt(\xA) = y}\nonumber \\
    = & 1 - \eta_\model - \eta_\modelt - \alpha - \Pr{\modelt(\xA) = y}. \label{ap20}
\end{align}
From Taylor's Theorem and \Cref{lem:cosineBound}, we observe that
\begin{gather}
	\Pr{\modelt(\xA) = y} \le \Pr{ \delta \cdot \dfrac{\nabla_x \lt(x, y)}{\lVert \nabla_x \lt(x, y)  \rVert_2} < c_\modelt }, \label{ap19} \\
	\Pr{ \delta \cdot \dfrac{\nabla_x \lt(x, y)}{\lVert \nabla_x \lt(x, y)  \rVert_2} < c_\model - \epsilon \sqrt{2 - 2 \underline{\mathcal{S}}(\el, \lt)} } \le 
	\Pr{\model(\xA) = y} = \alpha.
	\label{ap18}
\end{gather}
According to Markov's inequality, Eq. \ref{ap18} implies that
\begin{align}
	& \Pr{ \delta \cdot \dfrac{\nabla_x \lt(x, y)}{\lVert \nabla_x \lt(x, y)  \rVert_2} < c_\modelt } \le 
	\dfrac{ \epsilon(1 + \alpha) - \left(c_\model - \epsilon\sqrt{2 - 2\underline{\mathcal{S}}(\el, \lt)}\right)(1-\alpha) }{ \epsilon - c_\modelt }.
	\label{ap21}
\end{align}
We conclude the proof by combining Eq. \ref{ap19} with Eq. \ref{ap21} and plugging it into Eq. \ref{ap20}.
\end{proof}


\section{Proof of Transferability Upper Bound~(Theorems \ref{thm:target-upper-bound} and \ref{thm:untarget-upper-bound})} \label{apdxB}
\label{apupperbound}


Here we present the proof of Theorem~\ref{thm:target-upper-bound} and Theorem~\ref{thm:untarget-upper-bound} as stated in Section~\ref{sec:model-upperbound} and \Cref{adx:sec-untargeted attack}.

The following lemma is used in the proof.
\begin{Lem}\label{lem:cosdissim}
    Suppose two unit vectors $x, y$ satisfy $x\cdot y \le S$, then for any $\delta$, we have $\min(\delta\cdot x, \delta\cdot y) \le \|\delta\|_2\sqrt{(1+S)/2}$. 
\end{Lem}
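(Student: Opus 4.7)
\textbf{Proof plan for Lemma \ref{lem:cosdissim}.} The statement is a simple averaging argument combined with Cauchy--Schwarz, so the plan is short. The key identity is that the sum $\delta\cdot x + \delta\cdot y = \delta\cdot(x+y)$ can be controlled purely through $\|x+y\|_2$, which in turn is controlled by the dissimilarity bound $x\cdot y < S$.

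First I would compute $\|x+y\|_2^2 = \|x\|_2^2 + 2\,x\cdot y + \|y\|_2^2 = 2 + 2\,x\cdot y < 2(1+S)$, using the assumption that $x$ and $y$ are unit vectors together with $x\cdot y < S$. Next I would apply the Cauchy--Schwarz inequality:
\[
\delta\cdot x + \delta\cdot y \;=\; \delta\cdot(x+y) \;\le\; \|\delta\|_2\,\|x+y\|_2 \;<\; \|\delta\|_2\sqrt{2(1+S)}.
\]
Finally, since the minimum of two real numbers is at most their mean, I would conclude
\[
\min(\delta\cdot x,\;\delta\cdot y) \;\le\; \tfrac{1}{2}\bigl(\delta\cdot x + \delta\cdot y\bigr) \;<\; \tfrac{1}{2}\|\delta\|_2\sqrt{2(1+S)} \;=\; \|\delta\|_2\sqrt{(1+S)/2}.
\]

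There is no real obstacle here: the bound is tight precisely when $\delta$ is parallel to $x+y$ and $\delta\cdot x = \delta\cdot y$, which matches the intuition that the adversary's best ``compromise'' direction is along the bisector of $x$ and $y$. The only subtlety worth flagging is preserving strictness of the inequality, which follows directly from the strict inequality in the hypothesis $x\cdot y < S$; if the statement were later used with a non-strict hypothesis, the conclusion would become non-strict as well, but this does not affect its use in the upper bound proofs.
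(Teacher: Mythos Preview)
Your argument is correct and is in fact cleaner than the paper's. The paper argues by contradiction in angular terms: writing $\alpha$ for the angle between $x$ and $y$ and $\alpha_x,\alpha_y$ for the angles from $\delta$ to $x,y$, it invokes the spherical triangle inequality $\alpha \le \alpha_x + \alpha_y$ to get $\max(\alpha_x,\alpha_y)\ge \alpha/2$, then applies the half-angle formula $\cos(\alpha/2)=\sqrt{(1+\cos\alpha)/2}$. Your route bypasses all of this: Cauchy--Schwarz on $\delta\cdot(x+y)$ together with $\|x+y\|_2^2 = 2+2x\cdot y$ encodes exactly the same half-angle identity, and the ``$\min\le\text{mean}$'' step replaces the angular triangle inequality, which the paper uses without comment but is actually a nontrivial fact in dimensions above two. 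What the paper's approach buys is perhaps a clearer geometric picture (the bisector direction is the worst case), but your version is shorter, avoids the implicit appeal to the angular metric, and tracks strictness more transparently. One tiny caveat worth noting in either proof: the strict conclusion fails for $\delta=0$, so the statement as written tacitly assumes $\delta\neq 0$; this is harmless for the downstream application in the upper-bound theorems.
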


\begin{proof}
Denote $\alpha$ to be the angle between $x$ and $y$, then $\cos \alpha \le S$, or $\alpha \ge \arccos{S}$. If $\alpha_x, \alpha_y$ are the angles between $\delta$ and $x$ and between $\delta$ and $y$ respectively, then we have $\max(\alpha_x, \alpha_y) \ge \alpha / 2 \ge \arccos{S}/2 $.
By the half-angle formula, $\cos(\alpha / 2) \le \cos\left(\frac{\arccos{S}}{2}\right) = \sqrt{\frac{1+S}{2}}$.
Thus, $\min(\delta\cdot x, \delta \cdot y) \le \|\delta\|_2\cos(\alpha/2) \le \|\delta\|_2\sqrt{(1+S)/2}$. 
\end{proof}

\begin{theorem*}[Upper Bound on Targeted Attack Transferability]

    Assume both model $\cF$ and $\cG$ are $\beta$-smooth with gradient magnitude bounded by $B$, i.e., $\|\nabla_x \ell_{\cF}(x,y)\| \le B$ and $\|\nabla_x \ell_{\cG}(x,y)\| \le B$ for any $x \in \cX, y \in \cY$.
    Let $\Att_T$ be an $(\alpha,\cF)$-effective targeted attack with perturbation ball $\|\delta\|_2 \le \epsilon$ and target label $y_t\in \cY$.
    When the attack radius $\epsilon$ is small such that 
            $\ell_{\min} - \epsilon B \left(1 + \sqrt{\frac{1+\overline{\cS}(\el,\lt)}{2}}\right) - \beta\epsilon^2 > 0$,
            the transferability can be upper bounded by
    $$
           \Pr{T_r(\cF, \cG, x, y_t) = 1} \le 
            \dfrac{\xi_\model + \xi_\modelt}{\ell_{\min} - \epsilon B \left(1 + \sqrt{\frac{1+\overline{\cS}(\el,\lt)}{2}}\right) - \beta\epsilon^2},
    $$
    where 
    $
        \ell_{\min} = \min_{x \in \cX} \, (\ell_\model(x, y_t), \ell_\modelt(x, y_t)).
    $
    Here $\xi_\cF$ and $\xi_\cG$
    are the \emph{empirical risks} of models $\cF$ and $\cG$ respectively, defined relative to a differentiable loss.

\end{theorem*}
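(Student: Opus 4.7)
The plan is to apply Markov's inequality to the non-negative random variable $\ell_\cF(x, y) + \ell_\cG(x, y)$, whose expectation equals $\xi_\cF + \xi_\cG$. To make this work, I will show that the transferability event $\{T_r(\cF, \cG, x, y_t) = 1\}$ deterministically implies a lower bound on $\ell_\cF(x, y) + \ell_\cG(x, y)$ matching the denominator of the claimed bound. The implication will be built in three stages: (i) reduce the transferability event to a lower bound on the perturbed loss $\ell_\cF(\xA, y)$; (ii) transport this bound back to the unperturbed loss via Taylor expansion and $\beta$-smoothness; and (iii) use the cosine-dissimilarity lemma to control the resulting directional-derivative terms.

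For stage (i), observe that $T_r = 1$ forces $\cF(\xA) = \cG(\xA) = y_t$, i.e.\ $y_t$ is the argmin of each model's loss at $\xA$, so $\ell_\cF(\xA, y) \ge \ell_\cF(\xA, y_t) \ge \ell_{\min}$ and analogously for $\cG$. For stage (ii), Taylor's theorem with Lagrange remainder at $(x, y)$, combined with $\beta$-smoothness to bound the Hessian term by $\frac{1}{2}\beta\epsilon^2$, gives $\ell_\cF(\xA, y) \le \ell_\cF(x, y) + \delta \cdot \nabla_x \ell_\cF(x, y) + \frac{1}{2}\beta\epsilon^2$ and an analogous inequality for $\cG$. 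Rearranging each and chaining with the bound from stage (i) yields $\ell_\cF(x, y) \ge \ell_{\min} - \delta \cdot \nabla_x \ell_\cF(x, y) - \frac{1}{2}\beta\epsilon^2$ together with the corresponding statement for $\cG$.

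The core of the argument is stage (iii), which invokes Lemma~\ref{lem:cosdissim}. Because the unit gradients $\hat\nabla_\cF = \nabla_x \ell_\cF / \|\nabla_x \ell_\cF\|_2$ and $\hat\nabla_\cG$ satisfy $\hat\nabla_\cF \cdot \hat\nabla_\cG \le \overline{\cS}(\el, \lt)$, the lemma guarantees that at least one of $\delta \cdot \hat\nabla_\cF$ or $\delta \cdot \hat\nabla_\cG$ is at most $\epsilon\sqrt{(1 + \overline{\cS})/2}$. Scaling by $\|\nabla\|_2 \le B$ and using the trivial Cauchy--Schwarz bound $|\delta \cdot \nabla| \le \epsilon B$ for the remaining model, I obtain $\delta \cdot (\nabla_x \ell_\cF + \nabla_x \ell_\cG) \le \epsilon B (1 + \sqrt{(1 + \overline{\cS})/2})$ regardless of which gradient is the ``well-aligned'' one. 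Summing the two rearranged Taylor inequalities then produces the deterministic implication $\{T_r = 1\} \Rightarrow \{\ell_\cF(x, y) + \ell_\cG(x, y) \ge \ell_{\min} - \epsilon B (1 + \sqrt{(1 + \overline{\cS})/2}) - \beta\epsilon^2\}$, after which Markov's inequality delivers the claimed upper bound with numerator $\xi_\cF + \xi_\cG$.

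The main obstacle is the sign bookkeeping in stage (iii): the attacker aims to drive each $\delta \cdot \nabla_x \ell$ as negative as possible, so to lower-bound the loss one needs an upper bound on this inner product, and Lemma~\ref{lem:cosdissim} supplies such an upper bound for only one of the two models. The factor $1 + \sqrt{(1 + \overline{\cS})/2}$ is precisely the price of handling the remaining model with the trivial magnitude bound $\epsilon B$; the cosine-similarity bound cannot be applied to both directions simultaneously without further geometric constraints on $\delta$. The untargeted companion Theorem~\ref{thm:untarget-upper-bound} will follow the identical template, with $y_t$ replaced throughout by the argmin over labels $y' \neq y$ that minimizes the perturbed loss.
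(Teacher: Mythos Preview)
Your approach is correct and actually slightly cleaner than the paper's. Both arguments rest on the same three ingredients (the cosine-dissimilarity Lemma~\ref{lem:cosdissim}, Taylor expansion with the $\beta$-smoothness Hessian bound, and Markov's inequality), but they combine them differently. The paper uses Lemma~\ref{lem:cosdissim} to derive a \emph{disjunction} --- on the event $T_r=1$, either $c_\cF'$ or $c_\cG'$ falls below $\epsilon B\sqrt{(1+\overline{\cS})/2}$ --- then applies a union bound and runs Markov's inequality separately on $\ell_\cF(x,y)$ and on $\ell_\cG(x,y)$, finally summing the two fractions. You instead sum the two rearranged Taylor inequalities first, use the lemma together with the trivial Cauchy--Schwarz bound to control $\delta\cdot(\nabla_x\ell_\cF+\nabla_x\ell_\cG)$ in one shot, and apply a single Markov to $\ell_\cF(x,y)+\ell_\cG(x,y)$. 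This avoids the union bound altogether. You also bypass the paper's second Taylor expansion (from $\ell_\cF(\xA,y_t)$ back to $\ell_\cF(x,y_t)$) by invoking $\ell_\cF(\xA,y_t)\ge\ell_{\min}$ directly, which is legitimate since $\xA\in\cX$.

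One arithmetic point: when you sum the two inequalities $\ell_\cF(x,y)\ge\ell_{\min}-\delta\cdot\nabla_x\ell_\cF-\tfrac12\beta\epsilon^2$ and its $\cG$-analogue, the right-hand side is $2\ell_{\min}-\epsilon B(1+\sqrt{(1+\overline{\cS})/2})-\beta\epsilon^2$, not $\ell_{\min}-\cdots$ as you wrote. Since $\ell_{\min}\ge 0$ this is a harmless weakening and still yields the theorem's stated bound; in fact your route delivers the stronger denominator $2\ell_{\min}-\cdots$. Also, your remark that ``the attacker aims to drive each $\delta\cdot\nabla_x\ell$ as negative as possible'' has the sign backwards for the true-label gradient $\nabla_x\ell(\cdot,y)$ (successful transfer requires $\ell(\xA,y)$ to be large, so $\delta\cdot\nabla_x\ell(x,y)$ is pushed positive); this does not affect the mathematics, only the prose.
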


\begin{proof}
We let $x^\cA := \cA_T(x)$ be the generated adversarial example when the input is $x$.
Since $\model(x)$ outputs label for which $\ell_\model$ is minimized, we have
\begin{align}
    \model(x) = y & \Longrightarrow \ell_\model(x, y_t) > \ell_\model(x, y) \label{Beq01}
\end{align}
and similarly
\begin{align}
    \model(\xA) = y_t &\Longrightarrow \ell_\model(\xA, y) > \ell_\model(\xA, y_t), \\
    \modelt(x) = y &\Longrightarrow \ell_\modelt(x, y_t) > \ell_\modelt(x, y),\\
    \modelt(\xA) = y_t &\Longrightarrow \ell_\modelt(\xA, y) > \ell_\modelt(\xA, y_t). \label{Beq04}
\end{align}

Since $\ell_\cF(x,y)$ and $\ell_\cG(x,y)$ are $\beta$-smooth,
\begin{align*}
    \ell_\model(x, y) + \delta\cdot\nabla_x \ell_\model(x, y) + \frac{\beta}{2}\|\delta\|^2 \ge \ell_\model(\xA, y),
\end{align*}
which implies
\begin{equation}
    \begin{aligned}
        \delta\cdot\nabla_x \ell_\model(x, y) &\ge \ell_\model(\xA, y) - \ell_\model(x, y) - \frac{\beta}{2}\|\delta\|^2 \\
        &\ge \ell_\model(\xA, y_t) - \ell_\model(x, y) - \frac{\beta}{2}\|\delta\|^2 =: c_\model'.
    \end{aligned}
     \label{Beq05}
\end{equation}
Similarly for $\modelt$,
\begin{equation}
    \delta\cdot\nabla_x \ell_\modelt(x, y) \ge \ell_\modelt(\xA, y_t) - \ell_\modelt(x, y) - \frac{\beta}{2}\|\delta\|^2 =: c_\modelt'. \label{Beq06}
\end{equation}

Thus, 
\begin{align}
    &\Pr{\model(x) = y , \modelt(x) = y, \model(\xA) = y_t, \modelt(\xA) = y_t} \nonumber\\
    \le & \mathrm{Pr} \left(\ell_\model(x, y_t) > \ell_\model(x, y), \ell_\model(\xA, y) > \ell_\model(\xA, y_t), 
    \ell_\modelt(x, y_t) > \ell_\modelt(x, y), \ell_\modelt(\xA, y) > \ell_\modelt(\xA, y_t) \right) \label{Beq1}\\ 
    \le& \Pr{\delta\cdot{\nabla_x \ell_\model(x,y)}{} \ge c_\model' , \, \delta\cdot{\nabla_x \ell_\modelt(x,y)}{} \ge c_\modelt' } \label{Beq2}\\
    \le& \mathrm{Pr}\left( \left(c_\model' \le {\epsilon}{\sqrt{(1+\overline{\cS}(\el,\lt)) / 2}}\|\nabla_x \ell_\model(x, y)\|_2\right) \, \bigcup 
    \left(c_\modelt' \le {\epsilon}{\sqrt{(1+\overline{\cS}(\el,\lt))/2}}\|\nabla_x \ell_\modelt(x,y)\|_2\right) \right) \label{Beq3}\\ 
    \le& \Pr{c_\model' \le {\epsilon}\sqrt{(1+\overline{\cS}(\el,\lt)) / 2}\|\nabla_x \ell_\model(x, y)\|_2} 
    + \Pr{c_\modelt' \le {\epsilon}{\sqrt{(1+\overline{\cS}(\el,\lt))/2}}\|\nabla_x \ell_\modelt(x,y)\|_2}, \label{Beq4}
\end{align}
where Eq.~\ref{Beq1} comes from Eqs.~\ref{Beq01} to \ref{Beq04},  Eq.~\ref{Beq2} comes from Eq.~\ref{Beq05} and Eq.~\ref{Beq06}.  
The Eq.~\ref{Beq3} is a result of \Cref{lem:cosdissim}: either 
$$\delta\cdot \frac{\nabla_x \ell_\model(x,y)}{\|\nabla_x \ell_\model(x,y)\|_2} \le \|\delta\|_2\sqrt{(1+\overline{\cS}(\el,\lt))/2}$$ 
or 
$$\delta\cdot \frac{\nabla_x \ell_\modelt(x,y)}{||\nabla_x \ell_\modelt(x,y)||} \le \|\delta\|_2\sqrt{(1+\overline{\cS}(\el,\lt))/2}.$$

We observe that by $\beta$-smoothness condition of the loss function,
$$
    \begin{aligned}
        c_\model' &= \ell_\model(\xA, y_t) - \ell_\model(x, y) - \frac{\beta}{2}\|\delta\|_2^2\\
        &\ge \ell_\model(x, y_t) + \delta\cdot \nabla_x \ell_\model(x, y_t) - \frac{\beta}{2}\|\delta\|_2^2  - \ell_\model(x,y) - \frac{\beta}{2}\|\delta\|_2^2.
    \end{aligned}
$$
Thus,
\begin{equation}
    \begin{aligned}
        &\Pr{c_\model' \le {\epsilon}\sqrt{(1+\overline{\cS}(\el,\lt)) / 2}\|\nabla_x \ell_\model(x, y)\|_2} \\
        \le & \Pr{\ell_\model(x, y_t) - \ell_\model(x, y) \le {\epsilon}B (1+\sqrt{(1+\overline{\cS}(\el,\lt)) / 2}) + \beta\epsilon^2} \\
        \le & \Pr{\ell_\model(x, y) \ge \ell_\model(x, y_t) - {\epsilon}B(1+\sqrt{(1+\overline{\cS}(\el,\lt)) / 2} - \beta\epsilon^2}\\
        \le & \frac{\xi_\model}{\underset{x\in\cX}{\min}\,\ell_\model(x, y_t) - {\epsilon}B \left(1+\sqrt{(1+\overline{\cS}(\el,\lt)) / 2}\right) - \beta\epsilon^2}. 
    \end{aligned}
    \label{eq:AppUpper05}
\end{equation}
%
%
%
Similarly for $\modelt$,
\begin{equation}
    \small
    \begin{aligned}
        &\Pr{c_\modelt' \le {\epsilon}\sqrt{(1+\overline{\cS}(\el,\lt)) / 2}\|\nabla_x \ell_\modelt(x, y)\|_2} \\
        \le & \frac{\xi_\modelt}{\underset{x\in\cX}{\min}\,\ell_\modelt(x, y_t) - {\epsilon}B\left(1+\sqrt{(1+\overline{\cS}(\el,\lt)) / 2}\right) - \beta\epsilon^2}.
    \end{aligned}
    \label{eq:AppUpper06}
\end{equation}
We conclude the proof by combining the above two equations into Eq.~\ref{Beq4}.
%
\end{proof}

\begin{theorem*}[Upper Bound on Untargeted Attack Transferability]

    Assume both model $\cF$ and $\cG$ are $\beta$-smooth with gradient magnitude bounded by $B$, i.e., $\|\nabla_x \ell_{\cF}(x,y)\| \le B$ and $\|\nabla_x \ell_{\cG}(x,y)\| \le B$ for any $x \in \cX, y \in \cY$.
    Let $\Att_U$ be an $(\alpha,\cF)$-effective untargeted attack with perturbation ball $\|\delta\|_2 \le \epsilon$.
    When the attack radius $\epsilon$ is small such that 
            $\ell_{\min} - \epsilon B \left(1 + \sqrt{\frac{1+\overline{\cS}(\el,\lt)}{2}}\right) - \beta\epsilon^2 > 0$,
            the transferability can be upper bounded by
    $$
            \Pr{T_r(\cF, \cG, x) = 1} \le 
            \dfrac{\xi_\model + \xi_\modelt}{\ell_{\min} - \epsilon B \left(1 + \sqrt{\frac{1+\overline{\cS}(\el,\lt)}{2}}\right) - \beta\epsilon^2},
    $$
    where 
    $
        \displaystyle
        \ell_{\min} = \min_{\substack{x\in\cX, y'\in\cY:\\ (x,y) \in \supp(\cD), y' \neq y}} \, (\ell_\cF(x, y'), \ell_\cG(x, y')).
    $
    Here $\xi_\cF$ and $\xi_\cG$
    are the \emph{empirical risks} of models $\cF$ and $\cG$ respectively, defined relative to a differentiable loss.
    The $\supp(\cD)$ is the support of benign data distribution, i.e., $x$ is the benign data and $y$ is its associated true label.

\end{theorem*}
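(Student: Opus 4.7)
The plan is to mirror the proof of \Cref{thm:target-upper-bound} almost step for step, with the single conceptual change being the replacement of the fixed target label $y_t$ by a data-dependent ``best wrong label'' $\min_{y'\neq y}$. Concretely, let $\xA := \cA_U(x)$. The successful transferability event $T_r(\cF,\cG,x)=1$ requires $\cF(x)=\cG(x)=y$ and $\cF(\xA)\neq y$, $\cG(\xA)\neq y$. Since a model predicts the minimizing label of its loss, $\cF(\xA)\neq y$ implies there exists some $y^*\neq y$ (namely $y^* = \cF(\xA)$) with $\ell_\cF(\xA, y^*) \le \ell_\cF(\xA, y)$, and hence
$$
\ell_\cF(\xA, y) \ge \min_{y'\neq y} \ell_\cF(\xA, y'),
$$
and analogously for $\cG$. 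This is the untargeted analog of the chain $\ell_\cF(\xA,y) > \ell_\cF(\xA,y_t)$ used in the targeted proof.

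Next I would use $\beta$-smoothness exactly as in equations (\ref{Beq05})--(\ref{Beq06}) to obtain
$$
\delta\cdot\nabla_x \ell_\cF(x,y) \ge \min_{y'\neq y}\ell_\cF(\xA,y') - \ell_\cF(x,y) - \tfrac{\beta}{2}\|\delta\|_2^2 =: c_\cF',
$$
and similarly $\delta\cdot\nabla_x\ell_\cG(x,y) \ge c_\cG'$. Then applying \Cref{lem:cosdissim} to the normalized gradients (whose cosine is bounded by $\overline{\cS}(\el,\lt)$) together with the gradient magnitude bound $\|\nabla_x\ell_\cF\|,\|\nabla_x\ell_\cG\|\le B$ and $\|\delta\|_2\le\epsilon$, the event $T_r=1$ forces at least one of
$$
c_\cF' \le \epsilon B\sqrt{(1+\overline{\cS}(\el,\lt))/2},\qquad c_\cG' \le \epsilon B\sqrt{(1+\overline{\cS}(\el,\lt))/2}
$$
to hold, so a union bound controls $\Pr{T_r=1}$ by the sum of these two probabilities, exactly as in Eqs.~(\ref{Beq1})--(\ref{Beq4}).

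It remains to turn each tail bound into a Markov bound involving $\xi_\cF$ and $\xi_\cG$. For every fixed $y'\neq y$, $\beta$-smoothness at $(x,y')$ gives $\ell_\cF(\xA,y') \ge \ell_\cF(x,y') - \epsilon B - \tfrac{\beta}{2}\epsilon^2$, and taking the min over $y'\neq y$ preserves this inequality. Substituting into the bound $c_\cF' \le \epsilon B\sqrt{(1+\overline{\cS}(\el,\lt))/2}$ and rearranging yields
$$
\ell_\cF(x,y) \ge \min_{y'\neq y}\ell_\cF(x,y') - \epsilon B\bigl(1+\sqrt{(1+\overline{\cS}(\el,\lt))/2}\bigr) - \beta\epsilon^2,
$$
whose right-hand side is in turn lower bounded by $\ell_{\min} - \epsilon B(1+\sqrt{(1+\overline{\cS}(\el,\lt))/2}) - \beta\epsilon^2$ using the definition of $\ell_{\min}$ in the statement. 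Markov's inequality applied to the nonnegative random variable $\ell_\cF(x,y)$ (and $\ell_\cG(x,y)$) then gives the two tail bounds in exact analogy with Eqs.~(\ref{eq:AppUpper05})--(\ref{eq:AppUpper06}), and summing them produces the claimed bound.

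The only real obstacle is bookkeeping: one must be careful that the ``min over $y'\neq y$'' in the definition of $c_\cF'$ is compatible with the deterministic quantity $\ell_{\min}$ appearing in the final bound. The resolution is that the smoothness-based reduction decouples the dependence on $\xA$ from the dependence on the label index, so $\min_{y'\neq y}\ell_\cF(\xA,y')$ can be safely lower-bounded by $\min_{(x,y)\in\supp(\cD),\,y'\neq y}\ell_\cF(x,y')$ before invoking Markov. Once this substitution is justified, the rest of the argument is a verbatim translation of the targeted-attack proof.
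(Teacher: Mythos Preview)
Your proposal is correct and follows exactly the approach the paper takes: the paper's own proof for this theorem is a one-sentence remark that it ``follows the proof for the targeted attack case'' with $\min_{x\in\cX}\ell_{\cF/\cG}(x,y_t)$ replaced by $\min_{x\in\cX,\,y'\in\cY:\,(x,y)\in\supp(\cD),\,y'\neq y}\ell_{\cF/\cG}(x,y')$, and you have faithfully (and more carefully) unrolled precisely that substitution.
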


\begin{proof}
        The proof follows the proof for the targeted attack case.
        Accordingly, Eq.~\ref{Beq05} and Eq.~\ref{Beq06} are modified to
        \begin{equation}
            \begin{aligned}
                \delta\cdot\nabla_x \ell_\model(x, y) &\ge \ell_\model(\xA, y) - \ell_\model(x, y) - \frac{\beta}{2}\|\delta\|^2 \\
                &\ge \ell_\model(\xA, y_a) - \ell_\model(x, y) - \frac{\beta}{2}\|\delta\|^2 =: c_\model'.
            \end{aligned}
             \label{Beq05-2}
        \end{equation}
        \begin{equation}
            \delta\cdot\nabla_x \ell_\modelt(x, y) \ge \ell_\modelt(\xA, y_b) - \ell_\modelt(x, y) - \frac{\beta}{2}\|\delta\|^2 =: c_\modelt' \label{Beq06-2}
        \end{equation}
        where $y_a$ and $y_b$ are the predicted labels of model $\model$ and $\modelt$ for $\xA$ under a transferable untargeted attack respectively.
        Both $y_a$ and $y_b$ are not equal to $y$.
        Then, instead of $\min_{x\in\cX}\ell_{\model / \modelt}(x, y_t)$ we use 
        $$
            \min_{x\in\cX, y'\in\cY: (x,y) \in \supp(\cD), y' \neq y} \ell_{\model / \modelt} (x, y')
        $$ 
        in Eq.~\ref{eq:AppUpper05} and Eq.~\ref{eq:AppUpper06} and henceforth. 
\end{proof}

\section{Additional Details for Baseline Ensembles and Whitebox Attacks}
\label{adx:sec-omitted-introduction}
In this section, we present a detailed introduction for our baseline ensembles and evaluated whitebox attacks. We introduce the baseline ensemble methods as follows:

        \begin{itemize}[leftmargin=*]
            \item \textbf{Boosting}~\cite{mason1999boosting,schapire1990strength} is a natural way of model ensemble training, which builds different weak learners in a sequential manner improving diversity in handling different task partitions. Here we consider two variants of boosting algorithms: 1) \textbf{AdaBoost}~\cite{hastie2009multi}, where the final prediction will be the weighted average of all the weak learners: weight $\alpha_i$ for $i$-th base model is decided by the accumulated error $e_i$ as $\alpha_i = \log \frac{1-e_i}{e_i} + \log(K-1)$. Here $K$ refers to the number of categories in a classification task. As we can see, higher weight will be placed on stronger learners. 
            2) \textbf{GradientBoost}~\cite{friedman2001greedy}, which is a general ensemble training method by identifying weaker learners based on gradient information and generating the ensemble by training base models step by step with diverse learning orientations within pseudo-residuals $r = -\frac{\partial \ell(f(x), y)}{\partial f(x)}$ computed from the current ensemble model $f$ on input $x$ with ground truth label $y$. 
            
            \item \textbf{CKAE}~\cite{kornblith2019similarity} develops diverse ensembles based on CKA measurement, which is recently shown to be effective to measure the orthogonality between representations. For two representations $K$ and $L$, $\text{CKA}(K, L) = \frac{\text{HSIC}(K, L)}{\sqrt{\text{HSIC}(K, K)\text{HSIC}(L, L)}}$, $\text{HSIC}(K, L) = \frac{1}{(n-1)^2}\text{tr}(KHLH)$, where $n$ is the number of samples and $H$ the centering matrix. For an ensemble consisting of base models $\{\cF_i\}$, we regard the representation of $\cF_i$ as its loss gradient vectors on batch samples and then minimize pair-wise CKA between base models $\cF_i, \cF_j$'s representations to encourage ensemble diversity.
            \item \textbf{ADP}~\cite{pang2019improving} is proposed recently as an effective regularization-based training method to reduce adversarial transferability among base models within an ensemble by maximizing the volume spanned by base models' non-maximal output vectors. Specifically, for a ensemble consisting of base models $\{\cF_i\}_{i=1}^N$ and input $x$ with ground truth label $y$, the ADP regularizer is defined as $\mathcal{L}_{\text{ADP}}(x,y)=\alpha \cdot H(\text{mean}(\{\cF_i(x)\}_{i=1}^N)) + \beta \cdot \log(\mathbb{ED})$, where $H(\cdot)$ is the Shannon Entropy Loss and $\mathbb{ED}$  the square of the spanned volume. Nevertheless, the ADP ensemble has been shown to be vulnerable against attacks that run for enough iterations until converged~\cite{tramer2020adaptive}. We will also discuss this similar observation in our empirical robustness evaluation. 
            \item \textbf{GAL}~\cite{kariyappa2019improving} promotes the diverse properties of the ensemble model by only minimizing the actual cosine similarities between pair-wise base models' loss gradient vectors. For $N$ base models $\{\cF_i\}_{i=1}^N$ within an ensemble and input $x$ with ground truth label $y$, the GAL regularizer is defined as: $\mathcal{L}_{\text{GAL}} = \log(\sum_{1 \leq i < j \leq N} \exp(CS(\nabla_x \ell_{\cF_i}, \nabla_x \ell_{\cF_j}))$ where $CS(\cdot, \cdot)$ refers to the actual cosine similarity measurement and $\nabla_x \ell_{\cF_i}$ the loss gradient of base model $\cF_i$ on $x$. It could serve as a baseline to empirically verify our theoretical analysis: when the loss gradients of base models are similar, the smoother the base models are, the less transferable they are.
            \item \textbf{DVERGE}~\cite{yang2020dverge} reduces the transferability among base models by utilizing Cross-Adversarial-Training: For a ensemble consisting of base models $\{\cF_i\}$ and input $x$ with ground truth label $y$, each base model $\cF_i$ is trained with the non-robust feature instances~\cite{ilyas2019adversarial} generated against another base model. Specifically, DVERGE minimizes $\sum_{j\neq i}\ell(\cF_i(x'_{\cF_j}(x_s, x)), y_s)$ for every $\cF_i$ iteratively, where $x'_{\cF_j}(x_s, x)$ represents the non-robust features against $\cF_j$ based on the randomly chosen input ($x_s$, $y_s$). $\ell(\cdot, \cdot)$ is the cross-entropy loss function.
            \end{itemize}
            
We consider the following attacks for whitebox robustness evaluation. Here we define $(x, y)$ to be the input $x$ with label $y$ and $\xA$ to be the notion of adversarial example generated from $x$. $\ell(\cF(x), y)$ refers to the loss between model output $\cF(x)$ and label $y$, and $\epsilon$ is the $\ell_\infty$ perturbation magnitude bound for different attacks. 
        
        \begin{itemize}[leftmargin=*]
            \item \emph{Fast Gradient Sign Method} (FGSM)~\cite{goodfellow2014explaining} is a simple yet effective attack strategy which generates adversarial example $\xA = x + \nu$ by assigning $\nu = \epsilon \cdot \mathrm{sgn}(\nabla_x \ell(\cF(x), y))$. 
            
            \item \emph{Basic Iterative Method} (BIM)~\cite{madry2017towards} is an iterative attack method which adds adversarial perturbations step by step: $x_{i+1} = \mathrm{clip}(x_i + \alpha \cdot \nabla_{x_i} \ell(\cF(x_i), y))$, with initial starting point $x_0=x$. Function $\mathrm{clip}(\cdot)$ projects the perturbed instance back to the $\ell_\infty$ ball within the perturbation range $\epsilon$, and $\alpha$ refers to the step size. 
            
            \item \emph{Momentum Iterative Method} (MIM)~\cite{dong2018boosting} can be regarded as the variant of BIM by utilizing the gradient momentum during the iterative attack procedure. Within iteration $i+1$, we update new gradient as $g_{i+1} = \mu g_i + \frac{\ell(\cF(x_i), y)}{\|\nabla_x \ell(\cF(x_i), y)\|_1}$ and set $x_{i+1} = \mathrm{clip}(x_i + \alpha \cdot g_{i+1})$ while $\mu$ refers to the momentum coefficient and $\alpha$ the step size.
            
            \item \emph{Projected Gradient Descent} (PGD)~\cite{madry2017towards} can be regarded as the variant of BIM by sampling $x_0$ randomly within the $\ell_p$ ball around $x$ within radius $\epsilon$. After initialization, it follows the standard BIM procedure by setting $x_{i+1} = \mathrm{clip}(x_i + \alpha \cdot \nabla_{x_i} \ell(\cF(x_i), y))$ on $i$-th attack iteration.
            
            
            \item \emph{Auto-PGD} (APGD)~\cite{croce2020reliable} is a step-size free variant of PGD by configuring the step-size according to the overall iteration budgets and the progress of the current attack. Here we consider APGD-CE and APGD-DLR attack which use CrossEntropy (CE) and Difference of Logits Ratio (DLR)~\cite{croce2020reliable} loss as their loss function correspondingly.
            
            \item \emph{Carlini \& Wanger Attack} (CW)~\cite{carlini2017towards} accomplishes the attack by solving the optimization problem: $\xA := \min_{x'} \|x'-x\|_{2}^{2} + c\cdot f(x', y)$, where $c$ is a constant to balance the perturbation scale and attack success rate, and $f$ is the adversarial attack loss designed to satisfy the sufficient and necessary condition of different attacks. For instance, the untargeted attack loss is represented as $f(x', y) = \max(\cF(x)_y - \cF(x)_{i \neq y} , -\kappa)$ while $\kappa$ is a confidence variable with value $0.1$ as default.
            
            \item \emph{Elastic-net Attack} (EAD)~\cite{chen2018ead} follows the similar optimization of CW Attack while considering both $\ell_2$ and $\ell_1$ distortion: $\xA := \min_{x'} \|x'-x\|_{2}^{2} + \beta\|x'-x\|_{1} + c\cdot f(x', y)$. Here $\beta, c$ refer to the balancing parameters and $f(x', y) = \max(\cF(x)_y - \cF(x)_{i \neq y} , -\kappa)$ under untargeted attack setting. We set $\beta=0.01, \kappa=0.1$ as default.
            
        \end{itemize}
        
            In our experiments, we set $50$ attack iterations with step size $\alpha=\nicefrac{\epsilon}{5}$ for BIM, MIM attack and PGD attack with $5$ random starts. For CW and EAD attacks, we set the number of attack iterations as $1000$ and evaluate them with different constant $c$ for different datasets. 
\section{Training Details}
\label{adx:sec-experiment-details}
We adapt ResNet-20~\cite{he2016deep} as the base model architecture and Adam optimizer~\cite{kingma2014adam} in all of our experiments.

\noindent\textbf{TRS training algorithm.} We show the one-epoch TRS training algorithm pseudo code in \Cref{algo:trs-training}. We apply the mini-batch training strategy and train the TRS ensemble for $M$ epochs ($M=120$ for MNIST and $M=200$ for CIFAR-10) in our experiments. To decide the $\delta$ within the local min-max procedure, we use the \textbf{Warm-up} strategy by linearly increasing the local $\ell_\infty$ ball's radius $\delta$ from small initial $\delta_0$ to the final $\delta_M$ along with the increasing of training epochs.

    \begin{algorithm}
    \begin{algorithmic}[1]
    \STATE $\delta_m \leftarrow \delta_0 + (\delta_M - \delta_0) \cdot m / M$
    \FOR{$b=1,\cdots,B$}
    \STATE $(x, y) \leftarrow$ training instances from $b$-th mini-batch
    \STATE $\mathcal{L}_{\text{Reg}} \leftarrow 0$
    \STATE $\mathcal{L}_{\text{ECE}} \leftarrow 0$
    \FOR{$i=1,\cdots,N$}
    \FOR{$j=i+1,\cdots,N$}
    \STATE $\mathcal{L}_{\text{Reg}} \leftarrow \mathcal{L}_{\text{Reg}} + \mathcal{L}_{\text{TRS}}(\mathcal{F}_i, \mathcal{F}_j, x, \delta_m)$ 
    \ENDFOR
    \ENDFOR
    \FOR{$i=1,\cdots,N$}
    \STATE $\mathcal{L}_{\text{ECE}} \leftarrow \mathcal{L}_{\text{ECE}} + \mathcal{L}_{\text{CE}}(\mathcal{F}_i(x),y)$
    \ENDFOR
    \STATE $\mathcal{L}_{\text{Reg}} \leftarrow \mathcal{L}_{\text{Reg}} / \binom{N}{2}$
    \STATE $\mathcal{L}_{\text{ECE}} \leftarrow \mathcal{L}_{\text{ECE}} / N$
    \FOR{$i=1,\cdots,N$}
    \STATE $\nabla_{\mathcal{F}_i} \leftarrow \nabla_{\mathcal{F}_i}[\mathcal{L}_{\text{ECE}}+ \mathcal{L}_{\text{Reg}}]$
    \STATE $\mathcal{F}_i \leftarrow \mathcal{F}_i - lr\cdot \nabla_{\mathcal{F}_i}$
    \ENDFOR
    \ENDFOR
    \caption{TRS training framework in epoch $m$ for an ensemble with $N$ base models $\{\mathcal{F}_i\}$, 
    with the total number of training epochs $M$.} 
    \label{algo:trs-training}
    \end{algorithmic}

    \end{algorithm}

\noindent\textbf{Baseline training details.} For ADP and GAL, we follow the exact training configuration mentioned in their paper in both MNIST and CIFAR-10 experiments. For DVERGE, we set the same feature distillation $\epsilon=0.07$ with step size as $0.007$ for CIFAR-10 as they mentioned in their paper but set $\epsilon=0.5$ with step size as $0.05$ for MNIST since they did not conduct any MNIST experiments in their paper. We set training epochs as 120 for MNIST and 200 for CIFAR-10 and CIFAR-100 in baseline training.

\noindent\textbf{TRS training details.} For MNIST, we set the initial learning rate $\alpha=0.001$ and train our TRS ensemble for 120 epochs by decaying the learning rate by $0.1$ at $40$-th and $80$-th epochs. For CIFAR-10 and CIFAR-100 we set the initial learning rate $\alpha=0.001$ and train our TRS ensemble for 200 epochs by decaying the learning rate by $0.1$ at $100$-th and $150$-th epochs. For PGD Optimization within $\mathcal{L}_{\text{smooth}}$ approximation, we set step size $\tilde{\alpha} = \delta/3$ and the total number of steps $T$ as $6$ for both MNIST and CIFAR-10 experiments. We also leverage the ablation study about the convergence of PGD optimization w.r.t the robustness of TRS ensemble by varying $\tilde{\alpha}$ and $T$ in \ref{adx:sec-pgd-inner-convergence}.

By configuring the default TRS ensemble training setting, we evaluated the average epoch training time for TRS and compared it to other baselines (ADP, GAL, DVERGE) on RTX 2080 single GPU device. Results are shown in \Cref{tab:average-time}.

\begin{table}[!htbp]
\centering
\caption{Comparison on average epoch training time (s) between TRS training and other baseline training methods, evaluated on RTX 2080 single GPU device.}

\begin{tabular}{c|c|c|c|c}
\toprule
Avg epoch training time (s) & ADP   & GAL    & DVERGE & TRS     \\ \hline
MNIST                       & 29.22 & 106.81 & 184.42 & 302.24  \\
CIFAR-10                    & 33.22 & 139.10 & 349.61 & 1291.55 \\ \bottomrule
\end{tabular}
\label{tab:average-time}
\end{table}

Our results show that though ADP, GAL require less training time, they can not achieve even comparable robustness with TRS as shown in our paper. Compared with DVERGE, TRS requires longer training time but maintains higher robustness under almost all attack scenarios.
\section{Numerical Results of Blackbox Robustness Evaluation}
\label{adx:sec-blackbox-results}
Table~\ref{tab:adx-blackbox-mnist} and~\ref{tab:adx-blackbox-cifar10} show the detailed robust accuracy number of different ensembles against blackbox transfer attack with different perturbation scale $\epsilon$, which corresponds to the \Cref{fig:blackbox-curve}. As we can see, TRS ensemble shows its competitive robustness to DVERGE on small $\epsilon$ setting but much better stability of robustness on large $\epsilon$ setting although it slightly sacrifices benign accuracy on clean data.
    \begin{table}[htbp]
        \centering
            \caption{Robust accuracy (\%) of different approaches against \textbf{blackbox transfer attack} with different perturbation scales $\epsilon$ on MNIST dataset.}
            \scalebox{0.92}{
        \begin{tabular}{c|c|c|c|c|c|c|c|c}
        \toprule
        $\epsilon$ & clean & 0.10 & 0.15 & 0.20 & 0.25 & 0.30 & 0.35 & 0.40 \\ \hline
        Vanilla   & 99.5  & 1.8  & 0.1  & 0.0  & 0.0  & 0.0  & 0.0  & 0.0  \\
        ADP        & 99.4  & 25.5 & 13.8 & 7.0  & 2.1  & 0.3  & 0.1  & 0.0  \\
        GAL        & 98.7  & 96.8 & 77.0 & 29.1 & 12.8 & 4.6  & 1.9  & 0.6  \\
        DVERGE     & 98.7  & \textbf{97.6} & \textbf{97.4} & \textbf{96.9} & 96.2 & 94.2 & 78.3 & 20.2 \\
        \rowcolor{tabgray} TRS        & 98.6  & 97.2 & 96.7 & 96.5 & \textbf{96.3} & \textbf{95.5} & \textbf{93.1} & \textbf{86.4} \\  \bottomrule
        \end{tabular}}
        \label{tab:adx-blackbox-mnist}
        \vspace{1em}

        \centering
        \caption{Robust accuracy (\%) of different approaches against \textbf{blackbox transfer attack} with different perturbation scales $\epsilon$ on CIFAR-10 dataset.}
        \scalebox{0.92}{
        \begin{tabular}{c|c|c|c|c|c|c|c|c}
        \toprule
        $\epsilon$ & clean & 0.01 & 0.02 & 0.03 & 0.04 & 0.05 & 0.06 & 0.07 \\ \hline
        Vanilla & 94.1 & 10.0 & 0.1 & 0.0 & 0.0 & 0.0 & 0.0 & 0.0 \\
        ADP        & 91.6  & 20.7 & 0.5 & 0.0 & 0.0 & 0.0 & 0.0 & 0.0 \\
        GAL        & 88.3 & 74.6 & 58.9 & 39.1 & 22.0 & 11.3 & 5.2 & 2.1 \\
        DVERGE     & 91.9  & \bf 83.3 & 69.0 & 49.8 & 28.2 & 14.4 & 4.0 & 0.8 \\

        \rowcolor{tabgray} TRS & 86.7 & 82.3 & \bf 76.1 & \bf 65.8 & \bf 55.0 & \bf 45.5 & \bf 35.8 & \bf 26.7 \\
         \bottomrule
        \end{tabular}}
        \label{tab:adx-blackbox-cifar10}
        \end{table}
        
\begin{table}[!t]
\centering
\caption{Robust accuracy (\%) of TRS ensemble trained with different hyper-parameter settings against various whitebox attacks on MNIST dataset.}
\scalebox{0.82}{
\begin{tabular}{c|c|c|c|c|c|c|c|c|c}
\toprule
\multicolumn{2}{c|}{$\lambda_a$}            & \multicolumn{4}{c|}{100}                           & \multicolumn{4}{c}{500}                           \\ \hline
\multicolumn{2}{c|}{$\lambda_b$}            & \multicolumn{2}{c|}{2.5} & \multicolumn{2}{c|}{10} & \multicolumn{2}{c|}{2.5} & \multicolumn{2}{c}{10} \\ \hline
\multicolumn{2}{c|}{$\delta_M$}             & 0.3             & 0.4    & 0.3    & 0.4            & 0.3             & 0.4    & 0.3        & 0.4        \\ \hline
\rowcolor{tabgray}     & $\epsilon=0.1$  & \textbf{95.6}   & 94.6   & 90.6   & 94.8          & \textbf{95.6}   & 93.0   & 95.2       & 93.6       \\
                           \rowcolor{tabgray}\multirow{-2}{*}{FGSM}& $\epsilon=0.2$  & 91.7            & 83.4   & 89.7   & 87.3           & \textbf{92.0}   & 84.0   & 88.0       & 85.0       \\ \hline
\multirow{2}{*}{BIM (50)} & $\epsilon=0.1$  & \textbf{93.3}   & 82.9   & 75.7   & 92.5           & 88.2            & 83.9   & 92.6       & 90.9       \\
                          & $\epsilon=0.15$ & \textbf{85.7}   & 69.7   & 61.3   & 84.1           & 73.1            & 61.3   & 82.2       & 83.3       \\ \hline
\rowcolor{tabgray}& $\epsilon=0.1$  & \textbf{93.0}   & 79.1   & 74.3   & 92.2           & 86.3            & 83.3   & 91.7       & 90.6       \\
                          \rowcolor{tabgray}\multirow{-2}{*}{PGD (50)} & $\epsilon=0.15$ & \textbf{85.1}   & 62.6   & 57.4   & 82.6           & 69.9            & 58.2   & 80.0       & 82.9       \\ \hline
\multirow{2}{*}{MIM (50)} & $\epsilon=0.1$  & \textbf{92.9}   & 81.6   & 75.1   & 92.0           & 87.7            & 83.5   & 91.7       & 91.2       \\
                           & $\epsilon=0.15$ & \textbf{85.1}   & 68.2   & 60.2   & 83.7           & 74.0            & 62.4   & 82.4       & 83.4       \\ \hline
\rowcolor{tabgray}    & $c=0.1$         & 98.1            & 96.6   & 96.4   & 97.5           & \textbf{98.4}   & 97.2   & 98.1       & 97.8       \\
                          \rowcolor{tabgray}\multirow{-2}{*}{CW}   & $c=1.0$         & 92.6            & 92.6   & 89.1   & \textbf{95.9}  & 86.1            & 77.4   & 88.2       & 95.1       \\ \hline
\multirow{2}{*}{EAD}& $c=1.0$         & 23.3            & 14.3   & 9.2    & \textbf{24.1}  & 22.5            & 2.6    & 3.4        & 23.9       \\
                          & $c=5.0$         & 1.4             & 0.9    & 0.1    & \textbf{2.3}   & 0.0             & 0.0    & 0.2        & 1.7        \\ \hline
\rowcolor{tabgray} & $\epsilon=0.1$  & \textbf{92.1}   & 78.5   & 72.8   & 91.5           & 85.9            & 82.8   & 91.1       & 90.2       \\
                          \rowcolor{tabgray}\multirow{-2}{*}{APGD-DLR}& $\epsilon=0.15$ & \textbf{83.4}   & 62.1   & 57.0   & 82.3           & 69.6            & 57.9   & 79.8       & 82.4       \\ \hline
\multirow{2}{*}{APGD-CE}& $\epsilon=0.1$  & \textbf{91.7}   & 78.1   & 72.1   & 91.2           & 85.2            & 82.5   & 90.8       & 89.7       \\
                            & $\epsilon=0.15$ & \textbf{82.8}   & 61.3   & 56.5   & 81.9           & 69.3            & 57.6   & 79.4       & 81.7       \\ \bottomrule
\end{tabular}}
\label{tab:big-table-mnist}
\end{table}


\begin{table}[!htbp]
\centering
\caption{\{Min, Max, Mean, Std\} of Robust accuracy (\%) of TRS ensemble against $10$ times whitebox attacks simulation with different random seeds on MNIST and CIFAR-10 datasets.}
\begin{tabular}{c|c|c|c|c|c|c}
\toprule
\multicolumn{2}{c|}{Robust Accuracy}                  & param.          & Min & Max & Mean & Std \\ \hline
\multirow{6}{*}{MNIST}    & \multirow{2}{*}{PGD}      & $\epsilon=0.1$  &  92.8   & 93.2    &  93.1    &  0.143    \\
                          &                           & $\epsilon=0.15$ &  84.9   & 85.1 &  85.1    &  0.067   \\ \cline{2-7} 
                          & \multirow{2}{*}{APGD-DLR} & $\epsilon=0.1$  &  92.1   &  92.3   &  92.2    &  0.083   \\
                          &                           & $\epsilon=0.15$ &  83.2   & 83.5    &  83.4    &   0.114  \\ \cline{2-7} 
                          & \multirow{2}{*}{APGD-CE}  & $\epsilon=0.1$  &  91.7   &   92.0  &  91.9    &  0.102   \\
                          &                           & $\epsilon=0.15$ &  82.5   & 82.9    &  82.7    & 0.120 \\ \hline\hline
\multirow{6}{*}{CIFAR-10} & \multirow{2}{*}{PGD}      & $\epsilon=0.01$ & 50.4 & 50.5 & 50.4 & 0.049 \\
                          &                           & $\epsilon=0.02$ & 14.8 & 15.8 & 15.2 & 0.293 \\ \cline{2-7} 
                          & \multirow{2}{*}{APGD-DLR} & $\epsilon=0.01$ & 50.0 & 50.5 & 50.2 & 0.151    \\
                          &                           & $\epsilon=0.02$ & 15.2 & 16.0 & 15.6 & 0.234 \\ \cline{2-7} 
                          & \multirow{2}{*}{APGD-CE}  & $\epsilon=0.01$ & 48.6 & 48.9 & 48.8 & 0.090 \\ 
                          &                           & $\epsilon=0.02$ & 15.3 & 16.0 & 15.6 & 0.199 \\ \bottomrule
\end{tabular}
\label{tab:stabiltiy-robustness}
\end{table}

\section{Statistical Stability Analysis on Robust Accuracy}
\label{adx:sec-stability-results}

For attacks with random-start (PGD, APGD-DLR, APGD-CE) mentioned in \Cref{tab:result1}, we run each of them $10$ times with different random seeds and evaluate them on TRS ensemble to present the statistical indicators (Min, Max, Mean, Std) of robust accuracy in \Cref{tab:stabiltiy-robustness}. We can conclude that our reported robust accuracy shows statistical stability given the standard deviation is smaller than 0.3 under all the scenarios.

\section{Ablation Studies}
        \label{adx:sec-ablation-study}
    
    \subsection{Decision Boundary Analysis}
    \label{adx:sec-decision-boundary}
      We visualize the decision boundary of the GAL, DVERGE and TRS ensembles for MNIST and CIFAR-10  in Figure~\ref{fig:boundary}. The dashed line is the negative gradient direction and the horizontal direction is randomly chosen which is orthogonal to the gradient direction. From the decision boundary of GAL ensemble, we can see that controlling only the gradient similarity will lead to a very non-smooth model decision boundary and thus harm the model robustness.
      From the comparison of DVERGE and TRS ensemble, we find that DVERGE ensemble tends to be more robust along the gradient direction especially on CIFAR-10, (i.e. the distance to the boundary is larger and sometimes even larger than along the other random direction). 
      This may be due to the reason that DVERGE is essentially performing adversarial training for different base models and therefore it protects the adversarial (gradient) direction.
      Thus, DVERGE performs better against weak attacks which only consider the gradient direction (e.g. FGSM on CIFAR-10).
      On the other hand, we find that TRS training yields a smoother model along different directions
      than DVERGE, which leads to more consistent predictions within a large neighborhood of an input.
     Thus, the TRS ensemble has higher robustness in different directions against strong attacks such as PGD attack.
      
            \begin{figure}[!htbp]
            \centering
            \includegraphics[width=0.22\textwidth]{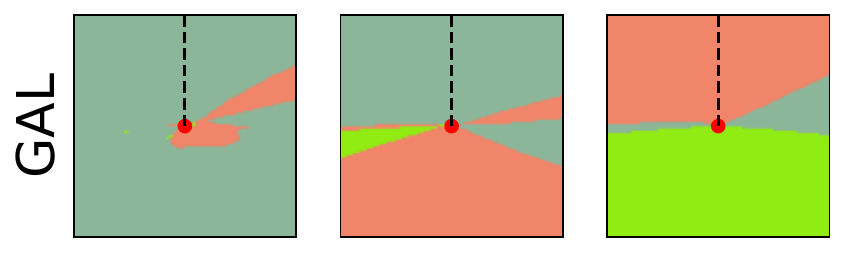}
            \includegraphics[width=0.22\textwidth]{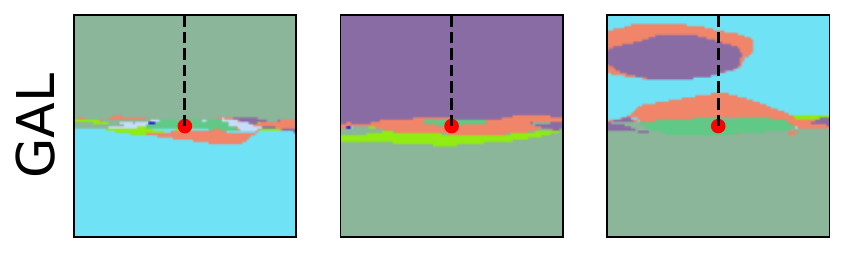}
            
            \includegraphics[width=0.22\textwidth]{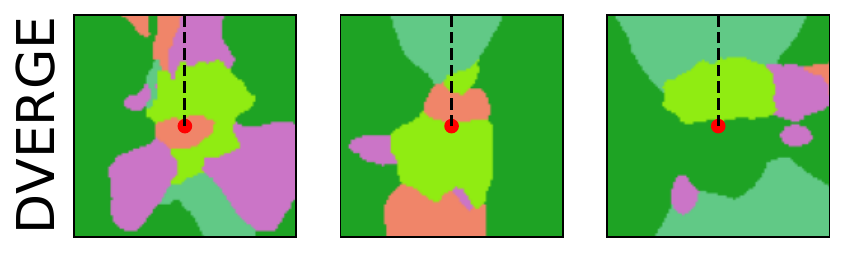}
            \includegraphics[width=0.22\textwidth]{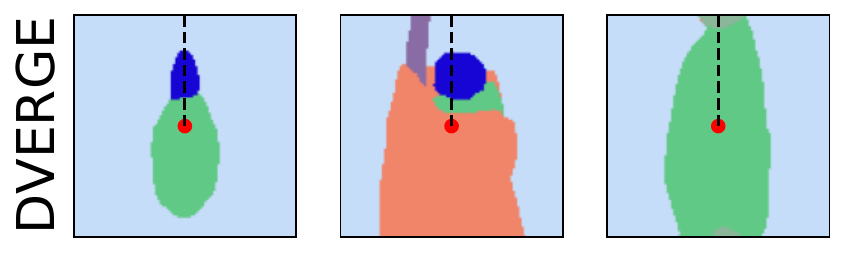}
            
            \includegraphics[width=0.22\textwidth]{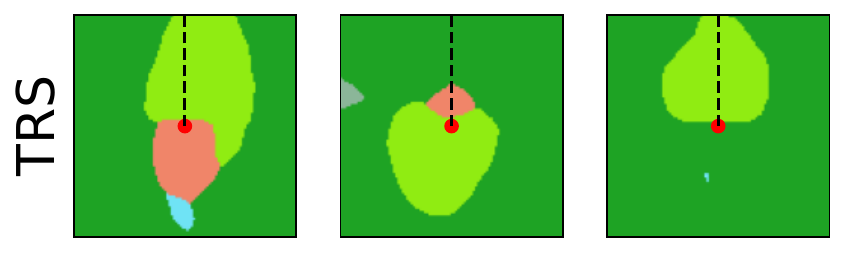}
            \includegraphics[width=0.22\textwidth]{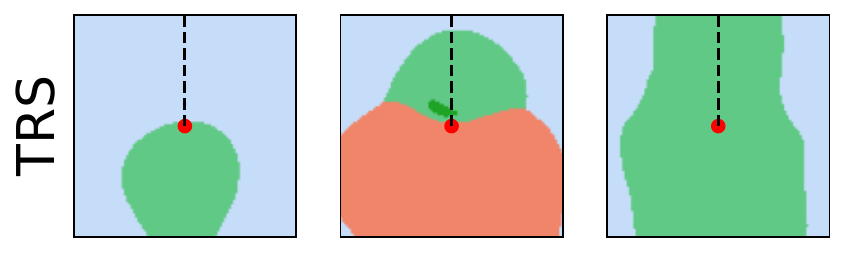}
            
            \caption{\small The decision boundary of different models around testing images on (left) MNIST and (right) CIFAR-10 dataset. Same color indicates the same model prediction. The dash lines shows the negative gradient direction, which is used in the gradient-based attacks.}
            \label{fig:boundary}
        \end{figure}

    \subsection{TRS with Adversarial Training}
        \label{adx:sec-adv-training}
           While the TRS regularizer could reduce adversarial transferability among base models by enforcing low similarity on loss gradients and promoting model smoothness, we explore whether Adversarial Training~\cite{madry2017towards}, which aims to reduce base models' vulnerability, is able to further improve the robustness of TRS or not. 
           We first apply adversarial training to train an ensemble model (AdvT) containing 3 base models as in TRS ensemble. During training, we use
           $\ell_\infty$ adversarial perturbation $\delta_{\text{adv}}$ ($\|\delta_{\text{adv}}\|_\infty\leq 0.2$ for MNIST and $\|\delta_{\text{adv}}\|_\infty\leq 0.03$ for CIFAR-10).
           To combine TRS with adversarial training (TRS+AdvT), we combine TRS regularizer Loss $\mathcal{L}_{\text{TRS}}$ with Adversarial Training Loss $\mathcal{L}_{\text{AdvT}} = \max_{\|x-x'\|_\infty\leq \delta_{\text{adv}}} \ell_\cF(x', y)$ on input $x$ with label $y$ with the same weight, and train the ensemble $\mathcal{F}$ jointly. 
    We evaluate both \underline{whitebox} and \underline{blackbox} robustness of TRS+AdvT and AdvT ensembles. During the evaluation, we consider the \emph{Conditional Robust Accuracy} evaluated on adversarial examples generated based on {correctly classified clean samples} to eliminate the influence of model benign accuracy. Other settings are the same as we have introduced in Section 4.1. 
    
    Table~\ref{tab:whitebox-advt} shows the robustness of both AdvT and TRS+AdvT  under whitebox and blackbox attacks on different datasets. As we can see, TRS+AdvT ensemble outperforms the traditional adversarial training based ensemble consistently especially when $\epsilon$ is large.
    
    \newcolumntype{g}{>{\columncolor{tabgray}}c}
    \begin{table}[tbp]
    \centering
    \vspace{-1em}
    \caption{ Conditional Robust Accuracy (\%) of Adversarial Training based ensemble (AdvT) and TRS+AdvT ensemble against (Top) \textbf{whitebox attacks} and (Down) \textbf{blackbox attack} with different perturbation scales $\epsilon$.}
    \scalebox{0.78}{
    \begin{tabular}{c|g|g|g|g|g|g|g|g|g}
    \toprule
    \rowcolor{white} \multicolumn{2}{c|}{Attacks}          & \multicolumn{2}{c|}{FGSM} & \multicolumn{2}{c|}{BIM (50)} & \multicolumn{2}{c|}{PGD (50)} & \multicolumn{2}{c}{MIM (50)} \\ \hline
    \rowcolor{white}    & $\epsilon$ & 0.10        & 0.20        & 0.10          & 0.15          & 0.10          & 0.15          & 0.10          & 0.15          \\ \cline{2-10} 
    \rowcolor{white}                           & AdvT       & 98.4        & 97.3        & 98.2          & 97.5          & 98.2          & 97.2          & 98.2          & 97.6          \\
                         \multirow{-3}{*}{MNIST}     &  TRS+AdvT   & \textbf{99.1}        & \textbf{98.0}        & \textbf{99.0}          & \textbf{98.2}          & \textbf{98.9}          & \textbf{98.0}          & \textbf{99.0}          & \textbf{98.1}          \\ \hline \hline
    \rowcolor{white}  & $\epsilon$ & 0.02        & 0.04        & 0.01          & 0.02          & 0.01          & 0.02          & 0.01          & 0.02          \\ \cline{2-10} 
    \rowcolor{white}                           & AdvT       & \textbf{79.3}        & \textbf{60.0}        & 88.5          & 76.1          & 88.4          & 76.1          & 88.5          & 76.3          \\
                     \multirow{-3}{*}{CIFAR-10}         & TRS+AdvT   & 79.2        & 58.0        & \textbf{90.7}          & \textbf{76.7}          & \textbf{90.7}          & \textbf{76.6}          & \textbf{90.9}          & \textbf{76.9}          \\ \bottomrule
    \end{tabular}
    }
        \vspace{1.2em}
    \label{tab:whitebox-advt}

    \centering
    \scalebox{0.85}{
    \begin{tabular}{c|g|g|g|g|g|g|g|g}
    \toprule
    \rowcolor{white}  \multirow{3}{*}{MNIST}    & $\epsilon$ & 0.10                      & 0.15                      & 0.20                      & 0.25                      & 0.30                      & 0.35                      & 0.40                     \\ \cline{2-9} 
    \rowcolor{white}                          & AdvT       & 98.9                      & 98.7                      & 98.6                      & 98.4                      & 98.4                      & 91.6                      & 8.1                      \\
    \multirow{-3}{*}{MNIST}                          &  TRS+AdvT   & \textbf{99.4}                      & \textbf{99.3}                      & \textbf{99.1}                      & \textbf{99.1}                      & \textbf{98.9}                      & \textbf{98.7}                      & \textbf{98.5}                     \\ \hline  \hline
    \rowcolor{white}  & $\epsilon$ & 0.01 & 0.02 & 0.03 & 0.04 & 0.05 & 0.06 & 0.07 \\ \cline{2-9} 
    \rowcolor{white}                          & AdvT       & 98.4 & 96.2 & 93.9 & 91.5 & 89.0 & 84.9 & 81.1 \\
     \multirow{-3}{*}{CIFAR-10}                         & TRS+AdvT   & \textbf{98.8} & \textbf{97.7} & \textbf{94.9} & \textbf{92.6} & \textbf{89.9} & \textbf{86.3} & \textbf{81.6} \\ \bottomrule
    \end{tabular}}
    \label{tab:blackbox-advt}
    \end{table}
    
    \subsection{Impacts of \texorpdfstring{$\mathcal{L}_{\text{sim}}$ and $\mathcal{L}_{\text{smooth}}$}{Lsim and Lsmooth}}
    \label{adx:separate-effects-section}
    To better understand the exact effects of regularizing $\mathcal{L}_{\text{sim}}$ and $\mathcal{L}_{\text{smooth}}$, we conduct ablation studies by regularizing $\mathcal{L}_{\text{sim}}$ or $\mathcal{L}_{\text{smooth}}$ only on both MNIST and CIFAR-10 datasets. Results are shown in \Cref{tab:separate-effects}.
    
    We can see that, though training with $\mathcal{L}_{\text{smooth}}$ only could lead to high robustness, TRS ensemble could achieve even higher robustness against strong multi-step attacks by concerning similarity loss $\mathcal{L}_{\text{sim}}$ at the same time. This indicates that both model smoothness and model diversity are important, though $\mathcal{L}_{\text{smooth}}$ would take the majority. 
    
\begin{table}[!htbp]
\centering
\caption{Robust accuracy (\%) of TRS ensemble trained by regularizing $\mathcal{L}_{\text{sim}}$ or $\mathcal{L}_{\text{smooth}}$ only, or together, against various white-box attacks on MNIST and CIFAR-10 datasets.}
\scalebox{0.86}{
\begin{tabular}{c|c|c|c|c|c|c|c}
\toprule
\multicolumn{2}{c|}{Robust Accuracy}                                                       & FGSM            & BIM             & PGD             & MIM             & CW            & EAD          \\ \hline
\multirow{4}{*}{MNIST}    & param.                                                         & $\epsilon=0.2$  & $\epsilon=0.15$ & $\epsilon=0.15$ & $\epsilon=0.15$ & $c=1.0$       & $c=10.0$     \\ \cline{2-8} 
                          & $\mathcal{L}_{\text{sim}}$ only                                & 30.7            & 0.0             & 0.0             & 0.0             & 58.6          & 0.5          \\
                          & $\mathcal{L}_{\text{smooth}}$ only                             & \textbf{93.1}   & 82.5            & 80.7            & 82.6            & 86.2          & 1.2          \\
                          & $\mathcal{L}_{\text{sim}} + \mathcal{L}_{\text{smooth}}$ (TRS) & 91.7            & \textbf{85.7}   & \textbf{85.1}   & \textbf{85.1}   & \textbf{92.6} & \textbf{1.4} \\ \hline\hline
\multirow{4}{*}{CIFAR-10} & param.                                                         & $\epsilon=0.04$ & $\epsilon=0.02$ & $\epsilon=0.02$ & $\epsilon=0.02$ & $c=1.0$       & $c=5.0$      \\ \cline{2-8} 
                          & $\mathcal{L}_{\text{sim}}$ only                                & \textbf{35.0}   & 0.0             & 0.0             & 0.0             & 17.6          & 0.0          \\
                          & $\mathcal{L}_{\text{smooth}}$ only                             & 9.3             & 13.9            & 13.8            & 15.0            & 43.0          & 0.0          \\
                          & $\mathcal{L}_{\text{sim}} + \mathcal{L}_{\text{smooth}}$ (TRS) & 24.9            & \textbf{15.8}   & \textbf{15.1}   & \textbf{17.2}   & \textbf{58.1} & \textbf{0.1} \\ \bottomrule
\end{tabular}}
\label{tab:separate-effects}
\end{table}
    
    \subsection{Robust Accuracy Convergence Analysis}
    \label{adx:sec-convergence}
    
            We observe that when the number of attack iterations is large, both ADP and GAL regularizer trained ensembles achieve much lower robust accuracy against iterative attacks (BIM, PGD, MIM) than the reported robustness in the original papers which is estimated under a small number of attack iterations. This case implies the non-convergence of iterative attack evaluation mentioned in their papers, which is also confirmed by~\cite{tramer2020adaptive}. In contrast, both DVERGE and TRS still remain highly robust against iterative attacks with large iterations. To show the stability of our model's robust accuracy, we evaluate it against PGD attack with $500$ and $1000$ attack iterations. Results are shown in Table~\ref{tab:convergence} where TRS ensemble's robust accuracy only slightly drops after increasing the attack iterations, and outperforms DVERGE by a large margin.
            
          \begin{table}[!t]
        \centering
                                              
        \caption{ Convergence of PGD attack on different ensembles.}
        \scalebox{0.93}{
        \begin{tabular}{c|c|c|c|c|c|c}
        \toprule
        \multicolumn{2}{c|}{Settings}                                & iters & ADP & GAL & DVERGE & TRS  \\ \hline
        \multirow{6}{*}{MNIST}    & \multirow{3}{*}{$\epsilon=0.10$} & 50    & 4.5 & 4.1 & 69.2   & \bf 93.0 \\
                                  &                                  & 500   & 1.6 & 1.1 & 66.5   & \bf 92.8 \\
                                  &                                  & 1000  & 1.6 & 1.0 & 66.3   & \bf 92.6 \\ \cline{2-7} 
                                  & \multirow{3}{*}{$\epsilon=0.15$} & 50    & 1.0 & 0.6 & 28.8   & \bf 85.1 \\
                                  &                                  & 500   & 0.5 & 0.1 & 25.0   & \bf 83.6 \\
                                  &                                  & 1000  & 0.4 & 0.1 & 24.8   & \bf 83.5 \\ \hline \hline
        \multirow{6}{*}{CIFAR-10} & \multirow{3}{*}{$\epsilon=0.01$} & 50    & 9.0 & 8.3 & 37.1   & \bf 50.5 \\
                                  &                                  & 500   & 3.5 & 7.8 & 35.8   & \bf 50.3 \\
                                  &                                  & 1000  & 2.9 & 7.8 & 35.7   & \bf 50.2 \\ \cline{2-7} 
                                  & \multirow{3}{*}{$\epsilon=0.02$} & 50    & 0.1 & 0.6 & 10.5   & \bf 15.1 \\
                                  &                                  & 500   & 0.0 & 0.3 & 9.0    & \bf 14.5 \\
                                  &                                  & 1000  & 0.0 & 0.3 & 8.8    & \bf 14.5 \\ \bottomrule
        \end{tabular}}
        \label{tab:convergence}
                                           
        \end{table}
        \subsection{Convergence of PGD Optimization within \texorpdfstring{$\mathcal{L}_{\text{smooth}}$}{Lsmooth}
        Approximation}
            \label{adx:sec-pgd-inner-convergence}
             \begin{table}[!htbp]
                \centering
        \caption{ Robustness of TRS ensemble against various white-box attacks by varying PGD step size $\tilde{\alpha}$ and total number of steps $T$ for solving the inner-maximization within $\mathcal{L}_{\text{smooth}}$ on MNIST dataset.}
        \scalebox{0.9}{
            \begin{tabular}{c|c|c|c|c|c|c}
            \toprule
            Robust acc on MNIST                   & FGSM           & BIM             & PGD             & MIM             & CW            & EAD          \\ \hline
            param.                                & $\epsilon=0.2$ & $\epsilon=0.15$ & $\epsilon=0.15$ & $\epsilon=0.15$ & $c=1.0$       & $c=10.0$     \\ \hline
            DVERGE                                & 91.6           & 47.7            & 28.8            & 44.6            & 79.2          & 0.0          \\ \hline
            TRS ($\tilde{\alpha}=\delta,T=1$)     & 90.5           & 76.0            & 70.1            & 73.3            & 89.6          & 0.1          \\
            TRS ($\tilde{\alpha}=\delta/3,T=6$)   & 91.7           & 85.7            & 85.1            & 85.1            & 92.6          & \textbf{1.4} \\
            TRS ($\tilde{\alpha}=\delta/10,T=20$) & \textbf{92.1}  & \textbf{87.2}   & \textbf{85.5}   & \textbf{86.1}   & \textbf{92.8} & \textbf{1.4} \\ \bottomrule
            \end{tabular}}
            \label{tab:pgd-convergence}
        \end{table}

        Since the computation cost of training a TRS ensemble partially relies on the complexity of PGD procedure on solving the inner-maximization task of $\mathcal{L}_{\text{smooth}}$, we conduct ablation study on analyzing the trade-off between the computation cost (by varying PGD steps $T$) and the resulting robustness of TRS ensemble on MNIST dataset. Specifically, we consider the following settings of PGD step size $\tilde{\alpha}$ and the number of steps $T$:
        \begin{itemize}
            \item [(1)] $\tilde{\alpha}=\delta,T=1$
            \item [(2)] $\tilde{\alpha}=\delta/3,T=6$
            \item [(3)] $\tilde{\alpha}=\delta/10,T=20$
        \end{itemize}

        Results are shown in \Cref{tab:pgd-convergence}. As we can see, the robustness of TRS ensemble consistently improves with the increasing of $T$ and converges. We should also notice that, even for $T=1$, TRS ensemble is more robust than the strongest baseline DVERGE against various strong attacks. Due to the positive correlation between $T$ and the training cost, we should choose suitable $T$ balancing the training cost and model robustness. For MNIST, the default setting ($T=6$) could be a good choice.

\section{Robustness of TRS Ensemble against Other Strong Blackbox Attacks}
\label{adx:other-strong-attacks}
\begin{table}[!htbp]
\centering
\caption{Robust accuracy (\%) of different approaches against strong blackbox transfer attack on MNIST and CIFAR-10 datasets.}
\begin{tabular}{c|c|c|c|c}
\toprule
\multicolumn{2}{c|}{Robust Accuracy}                 & ILA            & DI2-FGSM      & IRA           \\ \hline
\multirow{4}{*}{MNIST ($\epsilon=0.3$)}     & ADP    & 5.4           & 9.9           & 4.8           \\
                                            & GAL    & 3.0           & 8.6           & 7.1           \\
                                            & DVERGE & 89.5          & 91.6          & 82.0          \\
                                            & TRS    & \textbf{91.2} & \textbf{93.7} & \textbf{84.4} \\ \hline\hline
\multirow{4}{*}{CIFAR-10 ($\epsilon=0.05$)} & ADP    & 1.2           & 1.6           & 1.4           \\
                                            & GAL    & 32.2          & 36.2          & 29.2          \\
                                            & DVERGE & 35.9          & 38.3          & 32.4          \\
                                            & TRS    & \textbf{46.2} & \textbf{50.0} & \textbf{45.1} \\ \bottomrule
\end{tabular}
\label{tab:other-blackbox-attacks}
\end{table}

We also conduct additional blackbox robustness evaluation against the following three strong blackbox attacks which focus on attack transferability between surrogate model and target model:
\begin{itemize}[leftmargin=*]

\item \emph{Intermediate Level Attack} (ILA)~\cite{huang2019enhancing} enhances the blackbox attack transferability by taking the perturbation on an intermediate layer of surrogate model into account.
\item \emph{DI2-FGSM}~\cite{xie2019improving} can be viewed as a variant of BIM by applying input transformation randomly at each attack iteration to promote diverse input patterns.
\item \emph{Interaction Reduced Attack} (IRA)~\cite{wang2020unified} integrates an additional interaction loss term after analyzing the negative correlation between attack transferability and interaction between adversarial units. 

\end{itemize}
We use the open-source code mentioned in their original papers and generate blackbox adversarial examples from a surrogate ensemble model consisting of three ResNet20 submodels for both MNIST and CIFAR-10 datasets. We compare the robustness of TRS ensemble with other baseline ensemble. Results are shown in \Cref{tab:other-blackbox-attacks}.

We can find that, TRS ensemble consistently demonstrates the highest robustness compared to other baseline ensembles, which indicates solid blackbox robustness of TRS ensemble against various types of blackbox attacks.

\section{Robustness of TRS Ensemble on CIFAR-100 Dataset}
\label{adx:sec-cifar100-results}
Besides MNIST and CIFAR-10 datasets, we also evaluate our proposed TRS ensemble on the CIFAR-100 dataset. The base model structure and training parameter configuration remain the same as in CIFAR-10 experiments. The whitebox robustness evaluation results are shown in Table~\ref{tab:result-cifar100}. From the results, we can see that the robustness of TRS model is better than other methods against all attacks except FGSM, which is similar with our observations in CIFAR-10. This shows that our TRS algorithm still achieves a good performance on classification tasks with large number of classes.

\begin{table*}[!htbp]
    \centering
    \caption{\small Robust accuracy$(\%)$ of different ensembles against whitebox attacks on CIFAR-100. ``para.''  refers to the attack parameter ($\epsilon$ is the $\ell_\infty$ perturbation budget for the attack and $c$ the constant to balance the attack stealthiness and effectiveness). }

    \begin{tabular}{c|c|c|c|c|c}
    \toprule
    \bf CIFAR-100 & para. & ADP  & GAL  & DVERGE & \bf TRS  \\ \hline
     \rowcolor{tabgray}  &  $\epsilon=0.02$ & 11.5 & 28.7 & \bf 29.7 & 19.3 \\
                              \rowcolor{tabgray}\multirow{-2}{*}{FGSM}   &  $\epsilon=0.04$ & 6.4 & 2.7 & \bf 25.4 & 9.5 \\ \hline
    \multirow{2}{*}{BIM (50)} & $\epsilon=0.01$ & 0.5 & 7.6 & 12.1 & \bf 22.9 \\
                              & $\epsilon=0.02$ & 0.0 & 1.5 & 2.9 & \bf 5.4 \\ \hline
    \rowcolor{tabgray}& $\epsilon=0.01$ & 0.4 & 5.4 & 11.3 & \bf 23.0 \\
                              \rowcolor{tabgray}\multirow{-2}{*}{PGD (50)} & $\epsilon=0.02$ & 0.0 & 1.1 & 2.0 & \bf 5.3 \\ \hline
    \multirow{2}{*}{MIM (50)} & $\epsilon=0.01$ & 0.5 & 5.7 & 13.1 & \bf 23.4 \\
                              & $\epsilon=0.02$ & 0.0 & 0.5 & 2.6 & \bf 6.2 \\ \hline
    \rowcolor{tabgray}   & $c=0.01$& 11.3 & 32.0 & 44.8 & \bf 45.7 \\
                              \rowcolor{tabgray}\multirow{-2}{*}{CW}    & $c=0.1$ & 0.5 & 10.7 & 20.3 & \bf 26.9 \\ \hline
    \multirow{2}{*}{EAD}      & $c=1.0$ & 0.0 & 0.0 & 1.0 & \bf 5.7 \\
                              & $c=5.0$ & 0.0 & 0.0 & 0.0 & \bf 0.3 \\ \hline
    \rowcolor{tabgray} & $\epsilon=0.01$ & 0.2 & 4.3 & 11.8 & \bf 22.2 \\
                              \rowcolor{tabgray}\multirow{-2}{*}{APGD-LR} & $\epsilon=0.02$ & 0.0 & 0.6 & 2.1 & \bf 5.3 \\ \hline
    \multirow{2}{*}{APGD-CE}  & $\epsilon=0.01$ & 0.2 & 4.2 & 11.3 & \bf 20.7 \\
                              & $\epsilon=0.02$ & 0.0 & 0.4 & 1.7 & \bf 4.8 \\ \bottomrule
    \end{tabular}
        \label{tab:result-cifar100}
\end{table*}


\end{document}